\documentclass{article}

\usepackage{arxiv}
\usepackage{natbib}
\usepackage{enumitem}

\usepackage[utf8]{inputenc} 
\usepackage[T1]{fontenc}    
\usepackage{hyperref}       
\usepackage{url}            
\usepackage{booktabs}       
\usepackage{amsfonts}       
\usepackage{nicefrac}       
\usepackage{microtype}      
\usepackage{xcolor}         

\usepackage{microtype}
\usepackage{graphicx}
\usepackage{subfig}

\usepackage{amsmath}
\usepackage{amssymb}
\usepackage{mathtools}
\usepackage{amsthm}

\usepackage[capitalize,noabbrev]{cleveref}
\usepackage{algpseudocode}
\usepackage{algorithm}
\usepackage{wrapfig}
\usepackage{cleveref}
\usepackage{minitoc}
\usepackage{subcaption}
\usepackage{longtable}

\theoremstyle{plain}
\newtheorem{theorem}{Theorem}[section]
\newtheorem{proposition}[theorem]{Proposition}
\newtheorem{lemma}[theorem]{Lemma}
\newtheorem{corollary}[theorem]{Corollary}
\theoremstyle{definition}

\theoremstyle{remark}

\allowdisplaybreaks
\newcommand{\R}{\mathbb{R}}
\newcommand{\E}{\mathbb{E}}
\newcommand{\Ptwo}{\mathcal{P}_2}
\newcommand{\N}{\mathcal{N}}

\newcommand{\Cov}{\operatorname{Cov}}
\newcommand{\Tr}{\operatorname{Tr}}

\newcommand{\diag}{\operatorname{diag}}
\newcommand{\argmin}{\operatorname*{arg\,min}}

\newcommand{\dd}{\,\mathrm{d}}
\newcommand{\Id}{I_d}

\newcommand{\G}{\mathcal{G}}

\newcommand{\SBB}{\mathsf{SBB}}
\newcommand{\sbbcost}{\mathsf{SBB}_{\beta,\varepsilon,T}}
\newcommand{\eps}{\varepsilon}

\newcommand{\Bdec}{\mathcal{B}_{\mathrm{dec}}}
\newcommand{\Benc}{\mathcal{B}_{\mathrm{enc}}}
\newcommand{\Bsym}{\mathcal{B}_{\mathrm{sym}}}
\newcommand{\rhodec}{\rho_{\mathrm{dec}}}
\newcommand{\rhoenc}{\rho_{\mathrm{enc}}}
\newcommand{\rhosym}{\rho_{\mathrm{sym}}}

\newcommand{\norm}[1]{\lVert #1\rVert}

\newcommand{\qedbox}{\begin{center}$\square$\end{center}}

\hypersetup{pdftitle={Schrodinger--Bass Barycenters},pdfauthor={Anonymous Authors}}
\makeatletter
\providecommand{\theHALG@line}{}
\renewcommand{\theHALG@line}{\thealgorithm.\arabic{ALG@line}}
\newenvironment{breakablealgorithm}
{\begin{center}\refstepcounter{algorithm}\hrule height 0.8pt depth 0pt\kern 2pt
\renewcommand{\caption}[2][\relax]{\raggedright\textbf{\ALG@name~\thealgorithm.} ##2\par\kern 2pt\hrule\kern 2pt}}
{\kern 2pt\hrule\end{center}}
\makeatother

\title{Schr\"odinger-Bass Bridge Barycenter Models for Measure-Valued Data}

\author{
 Ngoc-Hai Nguyen \\
  Department of Electrical and Computer Engineering\\
  Tufts University\\
  \texttt{Ngoc-Hai.Nguyen@tufts.edu} \\
   \And
 Thuan Nguyen \\
    Department of Engineering, Engineering Technology \\
    East Tennessee State University \\
    \texttt{nguyent11@etsu.edu} \\
  \And
 Prakash Ishwar \\
    Department of Electrical and Computer Engineering \\
    Boston University\\
    \texttt{pi@bu.edu}
\And
 Shuchin Aeron \\
    Department of Electrical and Computer Engineering \\
    Tufts University\\
    NSF AI Institute for Artificial Intelligence and Fundamental Interactions\\
    \texttt{shuchin@ece.tufts.edu}
 \\
}

\begin{document}
\maketitle
\begin{abstract}
    Barycenter models, based on stochastic and deterministic optimal transport bridges between probability measures, have proven to be useful for supervised and unsupervised learning for (probability) measure-valued data. In this paper, we utilize the directional asymmetry and dual drift-diffusion control arising in the recently proposed Schr\"odinger-Bass Bridge (SBB) between two measures, and propose \emph{SBB-encoder}, \emph{SBB-decoder}, \emph{SBB-symmetric}, and \emph{SBB-Markovian} barycenter models. For each of these novel models, our first theoretical result establishes the existence of the SBB-barycenter under mild assumptions. Our second theoretical result characterizes the decoder SBB-barycenter of Gaussians where we show that for Gaussian input measures, the decoder formulation admits a Gaussian minimizer, reducing barycenter computation to a finite-dimensional optimization and yielding a compact algorithm. For general input measures, we derive a hybrid Wasserstein gradient particle descent scheme for all four formulations using the SBB dual representation. Finally, we utilize these algorithms for solving distributional-inverse/interpolation problems arising in the context of spatial images and single-cell RNA sequencing data, where we show that the SBB-decoder and Markovian barycenter allow us to interpolate between successive time points. Results show that our method outperforms existing barycenter models for measure interpolation and inference. 
\end{abstract}

\section{INTRODUCTION}
\label{sec:introduction}

Barycenter models, that use metrics and divergences between probability measures, provide a means for averaging and analyzing (probability) measure-valued data, where each observation may itself be a distribution rather than a single vector, in both supervised and unsupervised settings. Among the most popular models, are the Wasserstein barycenter model \cite{xu2018distilled, schmitz2018wasserstein, werenski2022measure, cheng2021dynamical} which defines a \emph{geometric} average through optimal displacement, and  Schr\"odinger-bridge based formulations which incorporate entropic regularization into the transport process \cite{mallery2025synthesis}. Other examples include \cite{werenski2025linearized, JMLR:v25:23-0708, bai2023linear}. Such models have proved useful whenever one seeks a representative distribution that summarizes several heterogeneous populations. However, in many applications, the observed distributions are not simply different realizations of the same population. Rather, they may arise from a common latent source through a combination of systematic transformation and stochastic dispersion. For instance, multiple cell populations can correspond to different stages of a developmental process, while several image distributions may share the same latent shape but differ through translation, rescaling, and spatial diffusion. In such settings, directly averaging the observed endpoint distributions may fail to recover the underlying latent structure, because the variability introduced by the observation process is itself part of the geometry that should be modeled. This motivates a barycenter construction that can distinguish deterministic transport from stochastic spreading. The recently introduced Schr\"odinger--Bass Bridge (SBB) provides exactly such a mechanism by jointly controlling the drift and the volatility of a stochastic process \citep{henry2026bridging,hasenbichler2026weak}. Unlike the classical Schr\"odinger Bridge, whose volatility is fixed, SBB allows the diffusion component to adapt; unlike Bass martingale transport, it does not impose a zero-drift or convex-order constraint on the endpoints. 
Building on this flexibility, we introduce SBB barycenter models, in which a shared probability measure is connected to the input distributions through SBB bridges. Because the SBB cost is intrinsically asymmetric, the position of the barycenter relative to the input measures carries modeling meaning, leading naturally to encoder, decoder, symmetric, and Markovian formulations.

\paragraph{Main Contributions.} 
\begin{itemize}
\setlength \itemsep{-3pt}
    \item We introduce the encoder, decoder, symmetric, and Markovian SBB barycenter formulations and establish their existence on $\Ptwo(\R^d)$.
    \item We derive the marginal first variations of the SBB objective. Using these we derive a hybrid Wasserstein gradient particle descent scheme that is orientation-specific, to solve these barycenter computation problems efficiently.
    \item For the decoder SBB problem with Gaussian distributions, we present a static Gaussian projection result that reduces the decoder search to Gaussian laws, together with a covariance-action description for Gaussian pairs. We then specialize the particle update to means and covariances.
    \item  Finally, we provide two applications that showcase the utility of the SBB barycenter models: (1) Recovery of 2-D measures from geometric deformation and spatial blur; (2) Ancestor reconstruction and held-out interpolation reconstruction in single-cell RNA sequencing data. We show that SBB models and proposed methods significantly outperform existing barycenter models/methods and baselines. 
\end{itemize}

\section{RELATED WORK}
\label{sec:related}

\paragraph{Wasserstein barycenter and entropic regularization.}

Wasserstein barycenters average probability measures through optimal displacement \citep{agueh2011barycenters}, whose dynamic formulation minimizes kinetic transport energy \citep{benamou2000computational}. Entropic regularization enabled scalable barycenter algorithms, notably the methods of \citet{cuturi2014fast} and the iterative Bregman projections of \citet{benamou2015iterative}. At the same time, entropic regularization modifies the underlying averaging geometry: entropy terms that are constant when both marginals are fixed become nonconstant once one marginal is an optimized variable, and can therefore change the resulting barycenter. On studying EOT barycenter, debiased Sinkhorn barycenters \citep{janati2020debiased} address the resulting entropic bias, while \cite{chizat2023doubly} distinguishes regularization of the transport plans from an additional entropy penalty imposed directly on the barycenter. \cite{mallery2025synthesis} studies the recovery of marginal measure weights through quadratic optimization. 

In the Gaussian setting, Wasserstein barycenters inherit Gaussian structure with Bures--Wasserstein covariance geometry \citep{agueh2011barycenters,nguyenbarycenter}, and related Gaussian closure and closed-form results are known for entropic transport \citep{del2020statistical,mallasto2022entropy,janati2020gaussian,bunne2023gaussian}. Computationally, prior barycenter methods include fixed-point schemes \citep{alvarez2016fixedpoint}, first-order methods with convergence guarantees \cite{altschuler2021averaging, nguyenbarycenter}, and particle descent for regularized barycenters \citep{chizat2023doubly}. 

\paragraph{Schr\"odinger bridges and multi-marginal barycenters.}

The Schr\"odinger problem originates with \citet{schrodinger1931umkehrung} and connects entropy minimization to stochastic control \citep{daipra1991stochastic,leonard2014survey}. For a Brownian reference, the diffusion coefficient is prescribed and the drift is controlled. Multi-marginal extensions already provide stochastic formulations of distributional averaging: \citet{haasler2021multimarginal} study tree-structured transport costs and Schr\"odinger problems; \citet{noble2023tree} develop TreeDSB and apply it to barycenters through a star-shaped construction; and \citet{howard2025tree} extend bridge matching to tree-structured problems and entropic barycentres. Thus, coupling multiple distributions through a shared stochastic representation has attracted substantial attention in prior work.

\paragraph{Bass martingales and weak optimal transport.}
Bass's stochastic-integral construction \citep{bass1983skorokhod} underlies the martingale Benamou--Brenier problem studied by \citet{backhoff2020martingale}. Here the drift is fixed to zero, volatility is optimized, and endpoint feasibility requires convex order. The static formulation belongs to weak optimal transport, where the cost depends on conditional distributions rather than only on pairs of points. General foundations were developed by \citet{gozlan2017kantorovich} and \citet{backhoff2019existence}. This distinction is essential for SBB: its finite-parameter static cost is also a weak transport cost, but allowing nonzero drift removes the martingale convex-order restriction on its endpoints \citep{hasenbichler2026weak}.

\paragraph{Schr\"odinger--Bass Bridge.}
Joint drift and diffusion control belongs to the broader theory of semimartingale optimal transport \citep{tan2013optimal}. Within this framework, \citet{alouadi2026pde} give a one-dimensional PDE derivation of the Schr\"odinger--Bass bridge, while \citet{henry2026bridging} develop its semimartingale formulation, duality, and coupled bridge characterization. \citet{hasenbichler2026weak} establish an equivalent static weak-transport formulation and a dual-ascent computational scheme. On the computational side, \citet{alouadi2026lightsbb} introduce LightSBB-M for sample-based pairwise bridge estimation. These works provide the pairwise theoretical foundations on which our work builds.

\section{SCHR\"ODINGER-BASS BRIDGE (SBB)}
Let $T>0$ be lenghth of time and let $\Omega=C\left([0, T], \mathbb{R}^d\right)$ be the canonical space with a canonical process $X$, i.e. $X_t(\omega)=$ $\omega(t)$ for all $\omega \in \Omega, t \in[0, T]$. We denote by $\mathcal{P} =: \mathcal{P}(\Omega)$ the set of probability measures $\mathbb{P}$ on $\Omega$ under which $X$ has the controlled dynamic
\[
 \dd X_t=\alpha_t\dd t+\sigma_t\dd W_t,
\]
with drift $\alpha_t$ and volatility $\sigma_t$. Its action according to a specific law $\mathbb{P}$ is defined by
\begin{equation}\label{eq:action-main}
 J(\mathbb{P})=\E_{\mathbb{P}}\int_0^T\!\left[\tfrac12\norm{\alpha_t}^2+
 \tfrac\beta2\norm{\sigma_t-\sqrt\varepsilon I_d}_F^2\right]\dd t.
\end{equation}
Let $\Ptwo(\R^d)$ denote the probability measures on $\mathbb{R}^d$ with a finite second moment. For two distributions $\mu, \nu \in \Ptwo(\R^d)$, the Schr\"odinger--Bass cost \citep{henry2026bridging,hasenbichler2026weak} is
\begin{equation}\label{eq:cost-main}
 \sbbcost(\mu,\nu)=\inf_{\mathbb{P} \in \mathcal{P}(\Omega):\,X_0\sim\mu,\,X_T\sim\nu}J(\mathbb{P}).
\end{equation}
See Appendix \ref{app:background} for the comparison table of formulation between Schr\"odinger Bridge, Bass Martingale and SBB. The SBB unifies these two viewpoints by allowing both the drift $(\alpha_t)$ and volatility $(\sigma_t)$ to be controlled, thereby interpolating between entropic diffusion transport and martingale transport. Importantly, $\SBB$ is asymmetric. Indeed, let $H_T\mu=\mu*\N(0,\tau I_d)$ with $\tau=\varepsilon T$ and $*$ is convolution operator. Consider $\alpha_t=0, \sigma_t=\sqrt{\varepsilon}I_d$, leading to control dynamic $dX_t = \sqrt{\varepsilon}\,dW_t, X_0\sim\mu$; which gives $X_T\sim H_T\mu$ and hence, $\SBB(\mu,H_T\mu)=0$. Now reverse the endpoints. If a zero-cost process existed from $H_T\mu$ to $\mu$, then necessarily
$\alpha_t=0, \sigma_t=\sqrt{\varepsilon}I_d$
almost everywhere, since both nonnegative terms in the SBB action would have to vanish. Starting instead from $X_0\sim H_T\mu$, this dynamic gives $X_T \sim H_T(H_T\mu) = H_{2T}\mu \neq \mu$ for every $\mu\in\mathcal P_2(\mathbb R^d)$ whenever $\tau >0$. Therefore the reverse transport $\SBB(H_T\mu, \mu) \neq 0$. Thus SBB is asymmetric, hence not a metric. More properties of $\SBB$ appear in Appendix \ref{app:background}.

\section{PROPOSED SBB BARYCENTERS MODELS}
\label{sec:formulation}
Because the SBB cost is directed, the position of the barycenter variable in \Cref{eq:cost-main} matters. 

Given input measures $\mu_i\in\Ptwo(\R^d)$ and weights $\lambda_i>0$ with $\sum_i\lambda_i=1$, we define different oriented-based Barycenter models
\begin{itemize}
\setlength \itemsep{-3pt}
    \item Encoder Barycenter 
        \begin{equation} 
\rhoenc \in \argmin_{\rho \in \Ptwo(\R^d)}[\Benc(\rho) :=\sum_{i=1}^N\lambda_i\SBB(\mu_i,\rho)],\label{eq:benc-main}
        \end{equation}
    \item Decoder Barycenter 
        \begin{equation}
\rhodec \in \argmin_{\rho \in \Ptwo(\R^d)}[\Bdec(\rho) :=\sum_{i=1}^N\lambda_i\SBB(\rho, \mu_i)],\label{eq:bdec-main}
        \end{equation}
    \item Symmetric Barycenter
        \begin{equation} 
\!\!\!\!\!\!\!\!\!\!\!\!\rhosym \in \argmin_{\rho \in \Ptwo(\R^d)} [\Bsym(\rho) :=\tfrac12\bigl[\Benc(\rho)+\Bdec(\rho)\bigr]].\label{eq:bsym-main}
        \end{equation}
\end{itemize}
An \emph{encoder barycenter} is a common terminal law reached from all input domains. A \emph{decoder barycenter} learns a common source from which the inputs are generated by domain-specific bridges. The \emph{symmetric barycenter} uses both directions and consequently avoids asymmetry. Also, when there are only two marginals, one for the initial and the other for the final time point, we can interpolate the marginal at an intermediate time point as the barycenter of the initial and final marginals (see our application in \Cref{sec:single-cell} about single-cell RNA intermediate-state reconstruction). This motivates us to propose the following interpolation-variant of the barycenter:
\begin{itemize}
\item SBB Markovian Barycenter
{\small
\begin{equation} 
    \!\!\!\!\!\!\!\!\!\!\!\! \rho_{\text {Mar}} \in \argmin _{\rho \in P_2\left(\mathbb{R}^{d}\right)}
    [\mathcal{B}_{\text{mar}}(\rho) :=  \lambda_1  \operatorname{SBB}\left(\mu_1, \rho\right)+ \lambda_2 \operatorname{SBB}\left(\rho, \mu_2\right)], \label{eq:bmarkovian-main} 
\end{equation}
}
\end{itemize}
where $\mu_1$ and $\mu_2$ are the initial and final marginals and $\lambda_1, \lambda_2$ are non-negative weights.

\paragraph{Existence.} We establish that these SBB barycenter formulations are well posed by proving the existence of minimizers. In particular, we have the following result

\begin{theorem}[Existence]\label{thm:existence-main}
    Let $
\mu_1,\ldots,\mu_N\in\mathcal{P}_2(\R^d)$.
Then the decoder barycenter problem \ref{eq:bdec-main} admits at least one minimizer in $\Ptwo(\R^d)$. The same conclusion holds for the encoder \ref{eq:benc-main}, symmetric \ref{eq:bsym-main} and Markovian \ref{eq:bmarkovian-main} barycenter problem. 
\end{theorem}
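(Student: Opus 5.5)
We use the direct method of the calculus of variations. The two ingredients are lower semicontinuity of $\SBB$ under weak convergence and tightness with uniformly bounded second moments along a minimizing sequence.

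\textbf{Finiteness.} First we check that each objective is finite somewhere. For any $\mu,\nu\in\Ptwo(\R^d)$, take a coupling $\pi$ of $\mu$ and $\nu$. Consider the process $X_t=X_0+\tfrac tT(X_T-X_0)$ with constant volatility $\sqrt\varepsilon I_d$, suitably corrected; the simplest correction is zero drift for $t\in[0,T/2]$ followed by a Brownian-bridge-type drift on $[T/2,T]$ reaching $X_T\sim\nu$. This gives $\SBB(\mu,\nu)\le C(1+m_2(\mu)+m_2(\nu))<\infty$. One could alternatively use the static weak-transport representation of \citet{hasenbichler2026weak} and take a Gaussian-convolution coupling. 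Hence each of $\Benc,\Bdec,\Bsym,\mathcal B_{\rm mar}$ has a finite infimum $m$, attained in the limit by a minimizing sequence $(\rho_n)$ with $\sup_n\mathcal B(\rho_n)\le m+1$.

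\textbf{Compactness.} Next we bound $m_2(\rho_n)$. For any admissible $\mathbb P$ with $X_0\sim\mu$ and $X_T\sim\rho$, write $X_T-X_0=\int\alpha\dd t+\int\sigma\dd W$. By Cauchy--Schwarz and the It\^o isometry,
\[
\E\norm{X_T-X_0}^2\le 2T\,\E\!\int_0^T\norm{\alpha_t}^2\dd t+2\,\E\!\int_0^T\norm{\sigma_t}_F^2\dd t,
\]
and $\norm{\sigma}_F^2\le 2\norm{\sigma-\sqrt\varepsilon I_d}_F^2+2\varepsilon d$. Therefore
\[
m_2(\rho)\le 2m_2(\mu)+C_{T,\beta,\varepsilon,d}\bigl(1+\SBB(\mu,\rho)\bigr),
\]
and the same bound holds with the roles of $\mu$ and $\rho$ swapped. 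Because every formulation contains at least one term $\lambda_i\SBB(\cdot,\cdot)$ with $\lambda_i>0$ that involves $\rho$ together with a fixed $\mu_i$, the second moments $m_2(\rho_n)$ are uniformly bounded. For the Markovian problem we need $\lambda_1+\lambda_2>0$, and we use whichever term has a positive weight. Prokhorov's theorem then yields a subsequence $\rho_n\rightharpoonup\rho^*$ weakly, and by Fatou's lemma $\rho^*\in\Ptwo$.

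\textbf{Lower semicontinuity (main obstacle).} The key step is to show that $(\mu,\nu)\mapsto\SBB(\mu,\nu)$ is weakly lower semicontinuous on sets with bounded second moments. Take near-optimal laws $\mathbb P_n$ for $\SBB(\mu_i,\rho_n)$, or for $\SBB(\rho_n,\mu_i)$, with bounded action. We plan to rewrite the action in relaxed, convex form. Let $b_t=\alpha_t$ and $a_t=\sigma_t\sigma_t^\top$. The volatility penalty is then a convex lower semicontinuous function of $a$: minimizing over symmetric square roots gives $\norm{\sigma-\sqrt\varepsilon I}_F^2\ge\Tr a-2\sqrt\varepsilon\Tr a^{1/2}+\varepsilon d$, with equality for the symmetric PSD root. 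This brings the problem into the semimartingale optimal transport framework of \citet{tan2013optimal}. In that framework, the uniform moment bound on $\int(\norm{b}^2+\Tr a)\dd t$ gives tightness of $(\mathbb P_n)$ on $\Omega$ through Aldous/Kolmogorov criteria. Limits of such laws remain semimartingales with characteristics absolutely continuous in time. The convex integrand then gives $J(\mathbb P^*)\le\liminf J(\mathbb P_n)$, and the marginal constraints pass to the limit because $X_0,X_T$ are continuous functionals of the path. This is exactly the existence/l.s.c. statement underlying \citet{henry2026bridging}. As an alternative route, we would invoke the l.s.c. of weak transport costs with convex, l.s.c. cost in the conditional law \citep{backhoff2019existence} via the static formulation of \citet{hasenbichler2026weak}.

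\textbf{Conclusion.} Summing the lower semicontinuity over all terms gives $\mathcal B(\rho^*)\le\liminf\mathcal B(\rho_n)=m$, so $\rho^*$ is a minimizer. The argument is the same for all four formulations: the encoder, decoder, symmetric and Markovian objectives differ only in which slot of $\SBB$ the measure $\rho$ occupies. Both the moment bound and the lower semicontinuity above hold jointly in the two marginals, so the slot does not matter.
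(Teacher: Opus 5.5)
Your skeleton (finite value, coercivity via Cauchy--Schwarz and the It\^o isometry, Prokhorov, lower semicontinuity) is the paper's. Your moment bound is exactly its inequality $\SBB\ge(W_2^2-4\eps dT)/(4T+8/\beta)$, which holds in either slot.

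\textbf{Lower semicontinuity.} The genuine gap is your primary lower-semicontinuity argument. Written in $a=\sigma\sigma^\top$, the relaxed volatility penalty $\Tr a-2\sqrt\eps\Tr a^{1/2}+\eps d$ grows only \emph{linearly} in $a$. Tan--Touzi compactness needs superlinear growth $L\ge c\,|(b,a)|^p$ with $p>1$. A bound on $\E\int_0^T(\norm{b_t}^2+\Tr a_t)\dd t$ therefore does not give tightness in $C([0,T];\R^d)$. Take zero drift and $\sigma^n_t=\sqrt n\,\Id$ on $[T/2,T/2+1/n]$, $\sigma^n_t=0$ elsewhere. The action converges to $\tfrac\beta2 d(1+\eps T)$, yet $X_{T/2+1/n}-X_{T/2}\sim\N(0,\Id)$ for every $n$, so any limit jumps at $T/2$. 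So neither the Aldous/Kolmogorov step nor ``limits remain continuous semimartingales with absolutely continuous characteristics'' holds. This volatility concentration is why Bass-type problems are treated statically.

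Your fallback should become the main argument; it closes the gap once you check its hypotheses. In the static form $\SBB(\mu,\nu)=\inf_{\pi\in\Pi(\mu,\nu)}\int C_{\beta,\eps,T}(x,\pi_x)\,\mu(\dd x)$, the cost $C(x,\cdot)$ is convex: relative entropy and $W_2^2$ are jointly convex under mixtures, and the mean constraint is linear. It is also lower semicontinuous: both terms are weakly l.s.c., and the entropy bound controls second moments and hence means. The joint lower semicontinuity of weak-transport values of Backhoff-Veraguas--Beiglb\"ock--Pammer then applies, using $W_1$ on the second marginal, which weak convergence plus bounded second moments provides. This route treats both slots uniformly.

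The paper instead argues by duality in the source slot only: $\SBB(\rho,\nu)=\sup_\psi\{\int\psi\,\dd\nu-\int v^\psi(0,\cdot)\,\dd\rho\}$. Translating admissible paths shows $v^\psi(0,x)+\norm{x}^2/(2T)$ is finite and convex, so $v^\psi(0,\cdot)$ is continuous; also $v^\psi\le\sup\psi$. Hence each $\rho\mapsto-\int v^\psi(0,\cdot)\,\dd\rho$ is weakly l.s.c.\ by Portmanteau. For the encoder slot the paper invokes $W_2$-continuity from Henry-Labord\`ere et al. Taken literally, that needs convergence of second moments, not only the weak convergence Prokhorov delivers.

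\textbf{Finiteness.} Your finiteness construction also fails as stated. With volatility fixed at $\sqrt\eps\Id$, Girsanov and data processing give $\tfrac12\E\int_0^T\norm{\alpha_t}^2\dd t\ge\eps\,H\bigl(\nu\mid\mu*\N(0,\eps T\Id)\bigr)$. This is $+\infty$ whenever $\nu$ is singular, e.g.\ Diracs or empirical measures, which is the paper's use case. Concretely, a Brownian bridge pinned at a point has drift energy diverging logarithmically as $t\to T$. So this construction does not yield $\SBB(\mu,\nu)\le C(1+m_2(\mu)+m_2(\nu))$. Switch the volatility off instead: $\sigma\equiv0$ with straight-line interpolation along a $W_2$-optimal coupling gives $\SBB(\mu,\nu)\le W_2^2(\mu,\nu)/(2T)+\beta\eps dT/2$, which is the paper's bound. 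Your static alternative also works, e.g.\ $\pi=\mu\otimes\nu$ and $\kappa_x=\N(\overline{\nu},\eps T\Id)$ with $\overline{\nu}$ the mean of $\nu$.
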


The full proof is provided in Appendix~\ref{proof:existence}. We present the decoder case in detail, as it is technically more challenging. The encoder, symmetric and Markovian case follows essentially the same argument, as explained in the remark after the proof. The key ingredient is based on this inequality characterizing the bound of SBB using $W_2$
\begin{equation*}\label{eq:coercivity-main}
 \SBB(\rho,\mu_i)\geq
 \frac{W_2^2(\rho,\mu_i)-4\varepsilon dT}{4T+8/\beta}.
\end{equation*}
We leave the question of uniqueness to future work.

\section{A HYBRID WASSERSTEIN GRADIENT PARTICLE DESCENT SCHEME}
\label{sec:algorithm}
The main goal of this section is to design an efficient numerical algorithm to solve for the SBB-Barycenter. Since the optimization problem is posed over probability measures, a natural direction is to employ a particle-based gradient scheme. We represent the current barycenter by $\rho^k=M^{-1}\sum_{m=1}^M\delta_{X_m^k}$ and update the particles $X_m^k$ instead. The update is done on manifold using Wasserstein gradient descent (WGD) (for the background, see \cite{yang2023estimating}). Our key step is to compute the Wasserstein gradient of either $SBB(\mu,\rho)$ or $SBB(\rho,\mu)$ with respect to the variable measure $\rho$. We show that for the SBB functional, its first variation may be expressible in terms of a suitable dual potential. If this dual potential can be computed for each term in the objective, then we can use its gradient to update the particles, yielding a practical particle-based scheme for approximating the barycenter.
\subsection*{First Variation of SBB and Update Rules}

\cite{alouadi2026lightsbb} proved the following dual variational characterization of the pairwise SBB
\begin{equation}\label{eq:sbb-dual}
\small
\SBB(\mu_0,\mu_T)=\sup_{(\psi,v)}\left\{\int\psi(x)\,\mu_T(\dd x)-\int v(0,x)\,\mu_0(\dd x)\right\}.
\end{equation}
The function $v$ solves the associated backward Hamilton--Jacobi--Bellman (HJB) equation with $v(T,\cdot)=\psi$ and curvature constraint $D^2v<\beta I_d$.
The reduced SBB system introduces heat potentials $h_t^*$ and maps $Y_t$ through
\begin{equation*}\label{eq:Phi-def}
\Phi_t(y)=\frac{|y|^2}{2}+\frac{\eps}{\beta}\log h_t^*(y),
\qquad
Y_t=(\nabla\Phi_t)^{-1}.
\end{equation*}
Equivalently, we have the computationally crucial identity
\begin{equation}\label{eq:key-force-identity}
\nabla v^*(t,x)
=\beta\bigl(x-Y_t(x)\bigr)
=\eps\nabla\log h_t^*\bigl(Y_t(x)\bigr).
\end{equation}
To have the information of $Y_t$ or $h_t^*$ and thereby trace back the corresponding dual quantities $v$ and $\psi$, we employ a pairwise SBB solver that provides a representation of the pairwise bridge. Specifically, the solver may return either the inverse map $Y_i(t,x)$ or a score/drift field $s_i(t,y)=\varepsilon \nabla \log h_{i,t}(y)$. In practice, we choose LightSBB \cite{alouadi2026lightsbb}.

In the Appendix \ref{app:particles}, we provide the detailed derivation of the SBB convex subgradient and the resulting first-variation formulas. Under suitable regularity assumptions on the dual solutions, the main results are summarized as follows.
\begin{itemize}
    \item Encoder Marginal First Variation
    $$\nabla_W \SBB(\mu,\rho)(x)=\nabla\psi_{\mu\to\rho}^*(x).$$
    \item Decoder Marginal First Variation
    $$\nabla_W\SBB(\rho, \mu)(x)=-\nabla v^*_{\rho\to\mu}(0,x).$$
\end{itemize}
Note the sign change, which arises because in the decoder case the optimized marginal changes its position in the dual objective.
Using above calculation for $\Benc$ \ref{eq:benc-main}, we have
\begin{equation*}
\frac{\delta \mathcal B_{\mathrm{enc}}}{\delta\rho}(x)
=\sum_{i=1}^N\lambda_i\psi_i^*(x),
\end{equation*}
where $\psi_i^*$ is the terminal dual potential for the pairwise bridge $\mu_i\to\rho$.
Using the Moreau-envelope representation, $\beta\bigl(x-Y_T^{(i)}(x)\bigr) \in \partial\psi_i^*(x)$. So the explicit particles update for any particle $m = 1:M$ with step size $\eta_k$ is
\begin{equation}\label{eq:encoder-particle-update}
\boxed{
X_m^{k+1}
=X_m^k
-\eta_k\sum_{i=1}^N\lambda_i\beta\bigl(X_m^k-Y_T^{(i,k)}(X_m^k)\bigr).
}
\end{equation}

Using above calculation for $\Bdec$ \ref{eq:bdec-main}, we have
\begin{equation*}
\frac{\delta \mathcal{B}_{\mathrm{dec}}}{\delta\rho}(x)
=-\sum_{i=1}^N\lambda_i v_i^*(0,x),
\end{equation*}
where $v_i^*$ is the HJB value for the pairwise bridge $\rho\to\mu_i$.
Thus the descent velocity is $\sum_{i=1}^N\lambda_i\beta\bigl(x-Y_0^{(i)}(x)\bigr)$
The explicit particle update is
\begin{equation}\label{eq:decoder-particle-update}
\boxed{
X_m^{k+1}
=X_m^k
+\eta_k\sum_{i=1}^N\lambda_i\beta\bigl(X_m^k-Y_{0}^{(i,k)}(X_m^k)\bigr).
}
\end{equation}

For $\Bsym$ \ref{eq:bsym-main}, the natural smoothed descent velocity is the average of the decoder and encoder velocities
{\footnotesize
\begin{equation*}\label{eq:sym-velocity}
\mathfrak v_{\mathrm{sym}}(x)
=\frac12\sum_{i=1}^N\lambda_i
\left[
\beta\bigl(x-Y_0^{(i,k),\mathrm{dec}}(x)\bigr)
-
\beta\bigl(x-Y_{T}^{(i,k),\mathrm{enc}}(x)\bigr)
\right].
\end{equation*}
Thus
\begin{equation}\label{eq:sym-update}
\boxed{
X_m^{k+1}=X_m^k+\eta_k\mathfrak v_{\mathrm{sym}}(X_m^k).
}
\end{equation}}

Finally, for the Markovian objective $\mathcal{B}_{\text{mar}}$ \ref{eq:bmarkovian-main}
\begin{equation*}
  \mathfrak v_{\mathrm{mar}}(x)
  =
  \lambda_2\beta\bigl(x-Y_0^{(2,k),\mathrm{dec}}(x)\bigr)
  -
  \lambda_1\beta\bigl(x-Y_T^{(1,k),\mathrm{enc}}(x)\bigr),
  \label{eq:sbb-markovian-velocity}
\end{equation*}
thus
\begin{equation}\label{eq:markovian-update}
\boxed{
X_m^{k+1}=X_m^k+\eta_k\mathfrak v_{\mathrm{mar}}(X_m^k).
}
\end{equation}

In practice, since the encoder first variation is represented by the terminal dual potential, its spatial derivative may be less regular at $t=T$. For numerical stability, we therefore may evaluate the encoder duality at $T-\delta_k$, with $\delta_k>0$ small, where the HJB provides additional smoothing. This regularization could be applied whenever an encoder component enters the update, including the symmetric and Markovian barycenters.

\begin{algorithm}[t]
\caption{{\scriptsize SBB Barycenter particle descent (outer loop)}}\label{alg:main}
\begin{algorithmic}[1]
\Require Input $\{\mu_i\}_{i=1}^n$, initial particles $X_m^0$, step sizes $\eta_k$, pairwise bridge solver (which accounts for inner loop)
\For{$k=0,\ldots,K-1$}
 \State $\rho^k\gets M^{-1}\sum_m\delta_{X_m^k}$
 \For{each $i$}
  \State Bridge fit $\rho^k\to\mu_i$ for decoder terms
  \State Bridge fit $\mu_i\to\rho^k$ for encoder terms
 \EndFor
 \State Update $X_m^{k+1}$ by \ref{eq:encoder-particle-update}, \ref{eq:decoder-particle-update}, \ref{eq:sym-update} or \ref{eq:markovian-update} for all $m$
\EndFor
\State \Return $M^{-1}\sum_m\delta_{X_m^K}$
\end{algorithmic}
\end{algorithm}

Algorithm~\ref{alg:main} displays the common outer loop. Appendix \ref{app:pseudocode} contains the full pseudocode. The Gaussian specialization in the next section gives a lower-dimensional implementation when the problem reduces to optimizing its mean and covariance.

\section{THE GAUSSIAN CASE}
\label{sec:gaussian}
The goal of this section is to study the geometry of the SBB barycenter when the input measures are Gaussian, i.e. $\mu_i=\mathcal N(m_i,\Sigma_i)$, and, in particular, to determine whether the barycenter can also be chosen Gaussian.

Gaussian closure is natural in related transport models. For the Schr\"odinger Bridge, this behavior is consistent with its static entropic optimal transport formulation \cite{leonard2014survey} together with the Gaussian closure of Wasserstein barycenters \cite{agueh2011barycenters}. Meanwhile, for Bass martingale, in Gaussians setting, due to convex order constraint, a common Gaussian source can exist only when all terminal means coincide, say $m_1=\cdots=m_N=:m$. In that case, $\mathcal N(m,S)$ is feasible precisely when $0\preceq S\preceq \Sigma_i$ for all $i$, and the covariance is obtained from the finite-dimensional convex problem $\min_{\substack{0 \prec S\preceq\Sigma_i,\ \forall i}}
\frac12\sum_{i=1}^N
\lambda_i
\left\|
(\Sigma_i-S)^{1/2}
-\sqrt{\varepsilon T}\,I
\right\|_F^2.$

The SBB setting is different from both cases. It is not a linear combination of the Schr\"odinger and Bass problems: unlike Bass transport, SBB allows nonzero drift and therefore does not impose convex order on the endpoints, while unlike the Schr\"odinger Bridge, the adaptive volatility in SBB prevents it from being reduced to an ordinary point-to-point transport problem. Our main result in this section is that, nevertheless, the decoder SBB barycenter of Gaussian marginals admits a Gaussian solution; see \Cref{thm:projection-main}.

To prove this result, we work with a static representation rather than directly with the dynamic control problem. This viewpoint also clarifies the relation with the two limiting geometries. The Schr\"odinger Bridge admits a static entropic transport formulation \cite{leonard2014survey}, whereas the Bass martingale problem leads to a weak transport problem whose cost depends on conditional distributions \cite{backhoff2020martingale}. For SBB, building on the measure-valued static formulation of \cite{hasenbichler2026weak}, we derive an equivalent three-variable representation in Proposition \ref{prop:triple-static}. The variables have distinct roles: $X$ is the physical initial state, $Y$ is a hidden terminal state, and $Z$ is the physical terminal state. This representation is the key tool for the Gaussian projection argument.

\paragraph{Three-variable static representation.} 
Define \(\mathcal A(\mu,\nu)\) to be the set of all probability
measures
\[
\Gamma\in
\mathcal P_2
\bigl(
\mathbb R^d\times\mathbb R^d\times\mathbb R^d
\bigr)
\]
on variables \((X,Y,Z)\) satisfying $
\Gamma_X=\mu,
\Gamma_Z=\nu$, 
and
\begin{equation*}
\mathbb E_\Gamma[Y\mid X]=\mathbb E_\Gamma[Z\mid X]
\qquad
\Gamma\text{-a.s.}
\label{eq:conditional-mean-constraint}
\end{equation*}
Equivalently, if $
\Gamma(dx\,dy\,dz)
=
\mu(dx)\Gamma_x(dy\,dz)$,
then
\begin{equation*}
\int_{\mathbb R^d\times\mathbb R^d}
(y-z)\,\Gamma_x(dy\,dz)
= 0
\qquad
\mu\text{-a.e. }x.
\label{eq:conditional-mean-kernel}
\end{equation*}

For \(\Gamma\in\mathcal A(\mu,\nu)\), with the notation $\gamma_{x,\tau}=\N(x,\tau I_d)$, define
\begin{equation}
\mathcal J_{\beta,\varepsilon,T}(\Gamma)
:=
\varepsilon
H\!\left(
\Gamma_{XY}
\,\middle|\,
\mu(dx)\gamma_{x,\tau}(dy)
\right)
+
\frac{\beta}{2}
\int \|y-z\|^2\,\Gamma(dx\,dy\,dz).
\label{eq:J-static}
\end{equation}

\begin{proposition}[Exact triple representation]
\label{prop:triple-static}
For every \(\mu,\nu\in\mathcal P_2(\mathbb R^d)\),
\begin{equation}
\boxed{
\SBB_{\beta,\varepsilon,T}(\mu,\nu) = \inf_{\Gamma\in\mathcal A(\mu,\nu)} \mathcal J_{\beta,\varepsilon,T}(\Gamma).
}
\label{eq:triple-representation}
\end{equation}
\end{proposition}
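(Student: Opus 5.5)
The plan is to derive \eqref{eq:triple-representation} from the measure-valued static weak-transport formulation of \cite{hasenbichler2026weak}, rather than to work with the dynamic problem \eqref{eq:cost-main} directly. I take that formulation in the following form. For a kernel $x\mapsto p_x\in\Ptwo(\R^d)$ with $\pi(dx\,dy)=\mu(dx)p_x(dy)$,
\[
\SBB_{\beta,\varepsilon,T}(\mu,\nu)=\inf_{p}\Bigl\{\varepsilon H\bigl(\pi\,\big|\,\mu(dx)\gamma_{x,\tau}(dy)\bigr)+\tfrac{\beta}{2}\,\mathcal C_x(p,\nu)\Bigr\},
\]
where $\mathcal C_x(p,\nu)$ is the weak quadratic cost of moving the hidden law to the terminal law $\nu$. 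It is an infimum of $\int\|y-z\|^2$ over couplings of $p_x$ with the conditional law of the terminal state given $x$, with the barycentric (conditional-mean) matching constraint. This is the Schr\"odinger part (relative entropy of the hidden endpoint with respect to Brownian motion) followed by the Bass part (volatility correction, which preserves conditional means). If the cited formulation is instead stated with the Bass part as a mean-preserving map acting on $p_x$, I would first rewrite it in this form; the two are equivalent by the usual martingale-coupling argument.

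The proof then splits into two inequalities, both obtained by gluing and disintegration.

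\textbf{Step 1: $\inf\mathcal J\ge\SBB$.} Take $\Gamma\in\mathcal A(\mu,\nu)$ with $\mathcal J_{\beta,\varepsilon,T}(\Gamma)<\infty$ and disintegrate $\Gamma=\mu(dx)\Gamma_x(dy\,dz)$. Set $p_x:=(\Gamma_x)_Y$. The entropy terms of the two problems coincide exactly. The $Z$-marginal of $\Gamma$ is $\nu$. The conditional-mean constraint in the definition of $\mathcal A(\mu,\nu)$ says precisely that $\Gamma_x$ is an admissible coupling in the weak cost $\mathcal C_x$. Therefore $\mathcal C_x(p,\nu)\le\int\|y-z\|^2\,\Gamma(dx\,dy\,dz)$, and hence $\SBB_{\beta,\varepsilon,T}(\mu,\nu)\le\mathcal J_{\beta,\varepsilon,T}(\Gamma)$.

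\textbf{Step 2: $\inf\mathcal J\le\SBB$.} Take a near-optimal $p$ for the static formulation. For each $x$, pick an $\eta$-optimal admissible coupling $\Gamma_x$ in $\mathcal C_x$. This choice must be made measurably in $x$, and I would obtain that from a measurable selection theorem for the lower-semicontinuous weak-transport cost, as in \cite{backhoff2019existence}. Then $\Gamma:=\mu\otimes\Gamma_x$ belongs to $\mathcal A(\mu,\nu)$ and satisfies $\mathcal J_{\beta,\varepsilon,T}(\Gamma)\le\SBB_{\beta,\varepsilon,T}(\mu,\nu)+\eta$. Finally let $\eta\to0$.

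\textbf{Integrability.} Second moments of $\Gamma$ must be finite. Finite entropy relative to $\mu\otimes\gamma_{x,\tau}$ gives finite second moments of $Y$ via the Donsker--Varadhan inequality. The $Z$-marginal is $\nu\in\Ptwo(\R^d)$.

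\textbf{Main obstacle.} The main obstacle is identifying the form of the weak cost in \cite{hasenbichler2026weak}, and in particular checking that their constraint is exactly $\E[Y\mid X]=\E[Z\mid X]$ and not a stronger martingale constraint such as $\E[Z\mid X,Y]=Y$. If it turns out to be the stronger constraint, Step 1 requires an extra argument. I would then try to show that, for the quadratic cost, replacing $Y$ by a suitable conditional projection weakly reduces both the entropy and $\tfrac{\beta}{2}\int\|y-z\|^2$. For the entropy this would come from the data-processing inequality together with convexity of $H$ along conditional averaging; for the quadratic term it would come from Jensen's inequality. Together these would show that relaxing to the conditional-mean constraint does not lower the infimum.

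As a fallback, I would prove the inequality $\le$ dynamically. On $[0,T/2]$ I would run a Brownian motion with drift whose endpoint law realizes $\Gamma_{XY}$; by Girsanov its cost equals $\varepsilon H$ after time rescaling. On $[T/2,T]$ I would run a Bass-type martingale from $Y$ to $Z$ with cost $\tfrac{\beta}{2}\E\|Y-Z\|^2$. This would still require checking that the split costs add up exactly to $J(\mathbb P)$ in \eqref{eq:action-main}.
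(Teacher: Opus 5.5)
Your proposal is correct and follows essentially the paper's route. The paper also quotes the Hasenbichler--Pammer--Thonhauser formulation with the barycentric constraint, namely $\inf_{\pi\in\Pi(\mu,\nu)}\int C_{\beta,\varepsilon,T}(x,\pi_x)\,\mu(dx)$ with $C$ an infimum over $\kappa$ satisfying $\overline{\kappa}=\overline{\pi_x}$; this is your version with the order of the infima exchanged, so the stronger martingale constraint you worried about never arises. It then proves $\ge$ by disintegration, the entropy chain rule and the coupling bound $\int\|y-z\|^2\,d\Gamma_x\ge W_2^2(\kappa_x,\eta_x)$, and proves $\le$ by gluing measurably selected optimal couplings $\lambda_x\in\Pi(\kappa_x,\pi_x)$. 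Your check that the glued $\Gamma$ lies in $\mathcal P_2$ covers a point the paper leaves implicit; it follows from Donsker--Varadhan with test function $c\|y-x\|^2$, $c<1/(2\tau)$, or more simply from $\int\|y\|^2\le 2\int\|y-z\|^2+2\int\|z\|^2$.
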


Next, we present our main result, \Cref{thm:projection-main}, which shows that replacing the source marginal measure by its Gaussian projection can only decrease the SBB cost. This projection principle is the key ingredient for establishing the existence of a Gaussian solution to the decoder SBB barycenter problem in Corollary~\ref{cor:gaussian-main}.

\begin{theorem}[Gaussian projection]\label{thm:projection-main}
Let $\nu=\N(m_\nu,\Sigma_\nu)$ be Gaussian, with $\Sigma_\nu\succeq0$. For $\mu\in\Ptwo(\R^d)$, let $\mu^G:=\N(\E_\mu[X],\Cov_\mu(X))$ be a Gaussian which has the same first and second moment as $\mu$. Then, for every $\beta,\varepsilon,T>0$,
\begin{equation*}\label{eq:projection-main}
 \sbbcost(\mu^G,\nu)\leq\sbbcost(\mu,\nu).
\end{equation*}
\end{theorem}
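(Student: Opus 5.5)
The plan is to use the triple representation of Proposition~\ref{prop:triple-static} and replace any admissible triple coupling by its Gaussian moment projection. Fix $\Gamma\in\mathcal A(\mu,\nu)$ with $\mathcal J_{\beta,\varepsilon,T}(\Gamma)<\infty$; if no such $\Gamma$ exists there is nothing to prove. Let $\Gamma^G$ be the Gaussian law on $\R^{3d}$ with the same mean vector and joint covariance of $(X,Y,Z)$ as $\Gamma$; it is well defined since $\Gamma\in\mathcal P_2$. I will show that $\Gamma^G\in\mathcal A(\mu^G,\nu)$ and $\mathcal J_{\beta,\varepsilon,T}(\Gamma^G)\le \mathcal J_{\beta,\varepsilon,T}(\Gamma)$. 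Taking the infimum over $\Gamma$ and applying \eqref{eq:triple-representation} on both sides then gives the claim.

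\emph{Admissibility.} The $X$-marginal of $\Gamma^G$ is Gaussian with the moments of $\mu$, so it equals $\mu^G$. The $Z$-marginal is Gaussian with the moments of $\nu$; because $\nu$ is itself Gaussian, this marginal is exactly $\nu$. This is the only place the Gaussianity of $\nu$ is used.

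For the martingale-type constraint, note that $\E_\Gamma[Y-Z\mid X]=0$ implies two things: $\E_\Gamma[Y-Z]=0$, and $\Cov_\Gamma(Y-Z,X)=\E_\Gamma\bigl[\E_\Gamma[Y-Z\mid X](X-\E X)^\top\bigr]=0$. Under $\Gamma^G$ the conditional mean $\E[Y-Z\mid X]$ is affine in $X$. It equals $\E[Y-Z]+\Cov(Y-Z,X)\Cov(X)^{+}(X-\E X)$, with a pseudoinverse if $\Cov(X)$ is singular, using that $X-\E X$ lies a.s. in the range of $\Cov(X)$. Both of these quantities vanish, so $\Gamma^G$ satisfies the constraint.

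\emph{Cost comparison.} The transport term $\frac\beta2\int\|y-z\|^2\,d\Gamma$ depends only on first and second moments, so it is unchanged. For the entropy term, since $\Gamma_X=\mu$, the chain rule gives
\[
H\bigl(\Gamma_{XY}\,\big|\,\mu\otimes\gamma_{x,\tau}\bigr)=\int H\bigl(\Gamma_{Y\mid X=x}\,\big|\,\N(x,\tau I_d)\bigr)\,\mu(dx)
=-h_\Gamma(Y\mid X)+\frac{1}{2\tau}\E_\Gamma\|Y-X\|^2+\frac d2\log(2\pi\tau).
\]
The same identity holds for $\Gamma^G$ with $\mu^G$ in place of $\mu$. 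The quadratic term again matches, so it suffices to prove $h_\Gamma(Y\mid X)\le h_{\Gamma^G}(Y\mid X)$, the maximum-conditional-entropy property of Gaussians. Let $A=\Cov(Y,X)\Cov(X)^{+}$ be the linear MMSE coefficient. Then
\[
h(Y\mid X)=h(Y-AX\mid X)\le h(Y-AX)\le \tfrac12\log\det\bigl(2\pi e\,S\bigr),
\]
where $S=\Cov(Y)-\Cov(Y,X)\Cov(X)^{+}\Cov(X,Y)$ is the Schur complement. The last bound is the Gaussian maximum-entropy inequality at fixed covariance. Under $\Gamma^G$, the residual $Y-AX$ is Gaussian with covariance $S$ and is independent of $X$, so both inequalities are equalities.

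The main obstacle is making this entropy step rigorous. First, finiteness of $\mathcal J(\Gamma)$ must force $\Gamma_{Y\mid X=x}\ll$ Lebesgue for $\mu$-a.e.\ $x$ and $h_\Gamma(Y\mid X)>-\infty$, which yields $S\succ0$, so $\Gamma^G_{Y\mid X}$ is nondegenerate and the formula above is legitimate. Second, the conditional-entropy identity must be handled when $\mu$ or $\Cov(X)$ is degenerate. I would write it directly as an expectation of relative entropies of conditional kernels, which avoids any joint density of $X$, and use the pseudoinverse form of $A$ throughout.
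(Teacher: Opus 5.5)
Your argument is correct and follows the paper's skeleton: the triple representation, Gaussian moment-matching of the whole coupling $\Gamma$, admissibility from $\E[Y-Z]=0$ and $\Cov(Y-Z,X)=0$, and invariance of the quadratic term. The paper phrases admissibility as independence of $D^G$ and $X^G$, which is the same fact as your vanishing affine regression.

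The genuine difference is the entropy step. The paper disintegrates both entropies into $\int H(\kappa_x\mid\gamma_{x,\tau})\,\mu(dx)$. It notes that $x\mapsto H(\kappa^G_x\mid\gamma_{x,\tau})$ is quadratic, so its $\mu$- and $\mu^G$-integrals agree. It then applies Lemma~\ref{lem:gaussian-projection-entropy}, the pointwise identity $H(p\mid q)=H(p\mid p^G)+H(p^G\mid q)$ for Gaussian $q$.

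That lemma, however, Gaussianizes each conditional kernel separately, giving $\N(\E_\Gamma[Y\mid X=x],\Cov_\Gamma(Y\mid X=x))$. The kernel of $\Gamma^G$ is instead $\N(\E Y+A(x-\E X),S)$, with your regression coefficient $A$ and Schur complement $S$. The two coincide only when the conditional mean is affine and the conditional covariance is constant. Bridging them needs an averaging step the paper leaves implicit. The trace and mean terms match by the law of total covariance. The log-determinant term needs $\E[\log\det\Cov_\Gamma(Y\mid X)]\le\log\det\E[\Cov_\Gamma(Y\mid X)]\le\log\det S$, which uses concavity and monotonicity of $\log\det$ and $\E[\Cov(Y\mid X)]\preceq S$.

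Your chain $h(Y\mid X)=h(Y-AX\mid X)\le h(Y-AX)\le\tfrac12\log\det(2\pi e\,S)$ does the per-$x$ Gaussianization and this averaging at once. The ``conditioning reduces entropy'' inequality is exactly what absorbs the nonlinear regression and the heteroscedasticity. It compares directly against the kernel of $\Gamma^G$, and it gives $S\succ0$ for free from finiteness of $\mathcal J_{\beta,\varepsilon,T}(\Gamma)$.

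The price is working with differential entropies. You must check absolute continuity of the kernels and that no $\infty-\infty$ arises, which you have correctly flagged. The paper's relative-entropy route avoids those technicalities but, as written, is the less complete of the two arguments.
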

\begin{corollary}\label{cor:gaussian-main}
When all
$\mu_i=\mathcal N(m_i,\Sigma_i), \quad \Sigma_i\succeq0, \quad i=1,\dots,N
$ are Gaussians; we have for every $\rho\in\mathcal P_2(\mathbb R^d)$,
\begin{equation*}
\mathcal \Bdec(\rho^G) \leq \mathcal \Bdec(\rho).
\label{eq:barycenter-Gaussianization}
\end{equation*}
Consequently,
\begin{equation*}
\inf_{\rho\in\mathcal P_2(\mathbb R^d)}
\mathcal \Bdec(\rho)
= \inf_{\substack{m\in\mathbb R^d, S\in\mathbb S_+^d
}}
\mathcal \Bdec\bigl(\mathcal N(m,S)\bigr),
\label{eq:barycenter-Gaussian-reduction}
\end{equation*}
or problem \ref{eq:bdec-main} admits a Gaussian minimizer.
\end{corollary}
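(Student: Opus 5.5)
The statement to prove is Corollary~\ref{cor:gaussian-main}. I would deduce it from Theorem~\ref{thm:projection-main} applied term by term, and then handle attainment using Theorem~\ref{thm:existence-main}.

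\emph{Step 1: the inequality.} Fix $\rho\in\Ptwo(\R^d)$ and let $\rho^G=\N(\E_\rho[X],\Cov_\rho(X))$. Each $\mu_i=\N(m_i,\Sigma_i)$ with $\Sigma_i\succeq0$ is Gaussian. Theorem~\ref{thm:projection-main}, with $\mu=\rho$ and $\nu=\mu_i$, therefore gives $\SBB(\rho^G,\mu_i)\le\SBB(\rho,\mu_i)$ for every $i$. Multiplying by $\lambda_i>0$ and summing gives $\Bdec(\rho^G)\le\Bdec(\rho)$. If $\Bdec(\rho)=+\infty$ there is nothing to check. (In fact SBB is finite on $\Ptwo$, for instance via the triple representation of Proposition~\ref{prop:triple-static}, but finiteness is not needed here.)

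\emph{Step 2: equality of infima.} Note that $\rho^G$ is of the form $\N(m,S)$ with $m\in\R^d$ and $S\in\mathbb S_+^d$. Hence
\[
\inf_{m,S}\Bdec(\N(m,S))\le\Bdec(\rho^G)\le\Bdec(\rho)\quad\text{for every }\rho.
\]
Taking the infimum over $\rho$ gives ``$\le$''. The reverse inequality is trivial, since Gaussians with $S\succeq0$ (degenerate ones included) belong to $\Ptwo(\R^d)$.

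\emph{Step 3: a Gaussian minimizer.} Theorem~\ref{thm:existence-main} provides a minimizer $\rho_{\mathrm{dec}}$ of~\eqref{eq:bdec-main}. By Step 1, $\Bdec(\rho_{\mathrm{dec}}^G)\le\Bdec(\rho_{\mathrm{dec}})=\inf\Bdec$, so $\rho_{\mathrm{dec}}^G$ is also a minimizer, and it is Gaussian.

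The only potential obstacle is ensuring that the hypotheses of Theorem~\ref{thm:projection-main} hold uniformly. That is, degenerate covariances $\Sigma_i\succeq0$ must be allowed (the theorem explicitly allows this), and $\rho^G$ may itself be degenerate, which is why the reduction is stated over $S\in\mathbb S_+^d$ rather than $S\succ0$. Beyond this bookkeeping, the argument is immediate. All the substance lies in Theorem~\ref{thm:projection-main} and Theorem~\ref{thm:existence-main}.
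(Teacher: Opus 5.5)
Your proof is correct and follows the paper's argument: Theorem~\ref{thm:projection-main} applied to each pair $(\rho,\mu_i)$ and summed with the weights, then the two-sided comparison of infima, then Gaussianization of a minimizer. The one difference is small and in your favor: you explicitly cite Theorem~\ref{thm:existence-main} to guarantee that a minimizer exists, whereas the paper only states the last step conditionally (``if $\rho^\star$ minimizes $\Bdec$'').
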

Appendix \ref{app:main_proof_gaussian} contains the complete triple-representation, entropy-lemma, projection, and corollary proofs. By establishing that the decoder SBB barycenter of Gaussian marginals is itself Gaussian, the barycenter problem reduces to a finite-dimensional optimization over its mean and covariance. Building on this reduction, we propose a specialization of Algorithm~1 tailored to the Gaussian setting. 

\paragraph{Gaussian Specialization of SBB Decoder Barycenter Particle Descent.} To exploit this reduction computationally, we first characterize the geometry of the pairwise SBB problem between two Gaussian measures. This pairwise problem will subsequently serve as the inner problem of the Gaussian barycenter algorithm. Let $\mu=\mathcal N(m_\mu,\Sigma_\mu), \nu=\mathcal N(m_\nu,\Sigma_\nu),$ with $\Sigma_\mu,\Sigma_\nu\succ0$. For $\Sigma\succ0$ and a symmetric matrix $U$, let $K$ denote the unique symmetric solution of the Lyapunov equation $K\Sigma+\Sigma K=U$, and define
$\Gamma_\Sigma(U):=\Tr(K\Sigma K)$. We then introduce the covariance action
\begin{equation}
\footnotesize
\label{eq:covaction-main}
\begin{aligned}
& \mathcal G_{\beta,\varepsilon,T}(\Sigma_0,\Sigma_1)
\\ &:=
\inf_{\Sigma_t,A_t}\int_0^T
\Bigg[
&\frac12
\Gamma_{\Sigma_t}\bigl(\dot\Sigma_t-A_t\bigr) +
\frac{\beta}{2}
\left\|
A_t^{1/2}-\sqrt{\varepsilon}I_d
\right\|_F^2
\Bigg]\dd t,
\end{aligned}
\end{equation}
where $\Sigma(0)=\Sigma_\mu, \Sigma(T)=\Sigma_\nu, A_t\succeq0$. The following result separates the Gaussian SBB geometry into a Euclidean mean component and a covariance component.
\begin{theorem}[Gaussian covariance-action decomposition]
\label{thm:covaction-main} 
We have
\begin{equation}
\label{eq:gauss-decomposition-main}
\mathsf{SBB}_{\beta,\varepsilon,T}(\mu,\nu)
=
\frac{\|m_\mu-m_\nu\|^2}{2T}
+
\mathcal G_{\beta,\varepsilon,T}(\Sigma_\mu,\Sigma_\nu).
\end{equation}
\end{theorem}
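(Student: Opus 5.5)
The plan is to prove the two inequalities $\le$ and $\ge$ in \eqref{eq:gauss-decomposition-main} separately. Both directions work with the dynamic formulation \eqref{eq:cost-main} restricted to, or compared with, Gaussian (linear) controls. The mean and covariance parts then separate because the drift enters the action additively in its mean and in its fluctuation.

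\emph{Upper bound.} I would use linear Markov controls
\[
\alpha_t(x)=a_t+B_t(x-m_t),\qquad \sigma_t=A_t^{1/2}\ \text{(deterministic)}.
\]
Started from $\mu$, these keep $X_t\sim\N(m_t,\Sigma_t)$, with
\[
\dot m_t=a_t,\qquad \dot\Sigma_t=B_t\Sigma_t+\Sigma_tB_t^\top+A_t .
\]
The action then splits as
\[
J=\int_0^T\Bigl[\tfrac12\|a_t\|^2+\tfrac12\Tr(B_t\Sigma_tB_t^\top)+\tfrac\beta2\|A_t^{1/2}-\sqrt\eps I_d\|_F^2\Bigr]\dd t .
\]
\begin{itemize}
\item The mean part is minimized by a straight line and gives $\|m_\mu-m_\nu\|^2/(2T)$.
\item For fixed $U=\dot\Sigma_t-A_t$, we minimize $\Tr(B\Sigma B^\top)$ subject to $B\Sigma+\Sigma B^\top=U$. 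This is a least-norm problem in the $\Sigma$-weighted inner product. By symmetrization its minimizer is the symmetric $K$ solving the Lyapunov equation, with value $\Tr(K\Sigma K)=\Gamma_\Sigma(U)$.
\end{itemize}
Taking the infimum over $(\Sigma_t,A_t)$ gives $\SBB\le \|m_\mu-m_\nu\|^2/(2T)+\mathcal G$.

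\emph{Lower bound.} Take any admissible $\mathbb P$, which we may assume has finite action. First reduce to Markovian controls by conditioning on $X_t$ via mimicking (Gyöngy): replace $\alpha_t$ by $\E[\alpha_t\mid X_t]$ and $\sigma_t\sigma_t^\top$ by $\E[\sigma_t\sigma_t^\top\mid X_t]$. By Jensen the drift cost does not increase. Whether the volatility penalty $\|\sigma-\sqrt\eps I\|_F^2=\Tr(\sigma\sigma^\top)-2\sqrt\eps\Tr\sigma+\eps d$ decreases needs concavity of $S\mapsto\Tr S^{1/2}$ together with $\Tr\sigma\le\Tr(\sigma\sigma^\top)^{1/2}$; this suffices since only the inequality is needed.

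Next set $m_t=\E X_t$, $\Sigma_t=\Cov X_t$, and $A_t=\E[\sigma_t\sigma_t^\top]$. The same concavity and Jensen argument gives
\[
\E\|\sigma_t-\sqrt\eps I\|_F^2\ge\|A_t^{1/2}-\sqrt\eps I\|_F^2 .
\]
By Itô's formula, $\dot m_t=\E\alpha_t$ and
\[
\dot\Sigma_t-A_t=\E[\alpha_t(X_t-m_t)^\top]+\E[(X_t-m_t)\alpha_t^\top].
\]
Decompose $\E\|\alpha_t\|^2=\|\E\alpha_t\|^2+\E\|\alpha_t-\E\alpha_t\|^2$. The mean part then gives $\|m_\mu-m_\nu\|^2/(2T)$ by Cauchy--Schwarz. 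The key step is the inequality
\[
\E\|\tilde\alpha\|^2\ge \Gamma_{\Sigma}(U)\qquad\text{whenever } \E[\tilde\alpha\tilde X^\top]+\E[\tilde X\tilde\alpha^\top]=U .
\]
To prove it, project $\tilde\alpha$ onto the linear span $\{B\tilde X\}$ in $L^2$. This is an $L^2$ projection that preserves $\E[\tilde\alpha\tilde X^\top]$, so it reduces to the linear case already solved above, where the Lyapunov/least-norm value is $\Gamma_\Sigma(U)$. Integrating over $t$ gives the lower bound, since the moment path $(\Sigma_t,A_t)$ is admissible in $\mathcal G$.

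\emph{Main obstacle.} The delicate points are regularity and endpoint issues: the Lyapunov equation needs $\Sigma_t\succ0$ along the path, the moment paths must be absolutely continuous, and the degenerate cases must be controlled. I would handle $\Sigma_t$ possibly becoming singular with a small perturbation $\Sigma_t+\delta I$ and a limiting argument, using the lower semicontinuity of $\Gamma_\Sigma(U)$ in $\Sigma$. The projection inequality itself is elementary once it is written as a Hilbert-space least-norm problem.
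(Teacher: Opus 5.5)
Your proposal is correct and follows essentially the same route as the paper: Jensen for the mean drift, the $L^2$ regression of $\tilde\alpha_t$ on $\tilde X_t$ combined with the Lyapunov least-norm problem for the centered drift, concavity of $S\mapsto\Tr S^{1/2}$ (together with $\Tr\sigma\le\Tr(\sigma\sigma^\top)^{1/2}$) for the volatility, and the affine Gaussian SDE for attainment. Your version is slightly more careful than the paper's: you take an infimum over paths instead of assuming an optimal $(\Sigma_t,A_t)$ exists, you make the splitting $\E\|\alpha_t\|^2=\|\dot m_t\|^2+\E\|\tilde\alpha_t\|^2$ explicit, and you address singular $\Sigma_t$; however, the Gy\"ongy mimicking step is superfluous, and the least-norm claim should be justified by $\Tr(K\Sigma N^\top)=0$ whenever $N\Sigma+\Sigma N^\top=0$, since literally symmetrizing $B$ need not preserve the constraint $B\Sigma+\Sigma B^\top=U$.
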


The complete proof is given in Appendix~\ref{app:main_proof}. Importantly, the proof also characterizes the attaining Gaussian dynamics. In particular, the optimal bridge can be represented by an affine Gaussian process
\begin{equation*}
\label{eq:affine-sde-main}
\dd X_t
=
\dot m_t\,\dd t
+
K_t(X_t-m_t)\,\dd t
+
A_t^{1/2}\,\dd W_t,
\end{equation*}
where $m_t=\left(1-\frac{t}{T}\right)m_\mu+\frac{t}{T}m_\nu$, and $K_t\Sigma_t+\Sigma_t K_t = \dot\Sigma_t-A_t$. By the affine Gaussian dynamics above, the dual variable in \Cref{sec:algorithm} takes the form
\begin{equation}
\label{eq:gaussian-drift-main}
\nabla v_i^k(0,x)
=
\frac{m_i-m^k}{T}
+
K_{i,0}^k(x-m^k),
\end{equation}
where $K_{i,0}^k$ is obtained from the corresponding pairwise Gaussian SBB bridge. Substituting \eqref{eq:gaussian-drift-main} into \ref{eq:decoder-particle-update} gives
\begin{equation*}
X^{k+1} = m^k + \frac{\eta_k}{T}(\bar m-m^k) + L_k(X^k-m^k).
\label{eq:gaussian-affine-update}
\end{equation*}
with $\bar m = \sum_{i=1}^N\lambda_i m_i, \quad \bar K_0^k:=\sum_{i=1}^N\lambda_iK_{i,0}^k, \quad L_k:=I_d+\eta_k\bar K_0^k$. 
Taking expectation yields the mean update
\begin{equation*}
\label{eq:mean-update-main}
m^{k+1}=m^k+\frac{\eta_k}{T}(\bar m-m^k).
\end{equation*}
Hence,
$X^{k+1}-m^{k+1}=L_k(X^k-m^k)$,
and therefore the covariance update is according to
\begin{equation*}
\label{eq:cov-update-main}
S^{k+1}=L_kS^kL_k^\top.
\end{equation*}
The algorithm based on this analysis is given in Appendix \ref{alg:gaussian-barycenter}. Compared with the general algorithm, the Gaussian specialization replaces the $M$ barycenter particles by a single mean vector and covariance matrix. The pairwise Gaussian SBB problems remain as inner problems, but their outputs are summarized via matrices $K_{i,0}^k$. 

\paragraph{Remark.} Whether Gaussian closure holds for the encoder barycenter remains an open theoretical question. However, the anisotropic Gaussian experiment in \Cref{sec:toy-experiment}, with results shown in Appendix \ref{app:experiments}, provides numerical evidence suggesting that this property may fail.

\section{APPLICATIONS}
\label{sec:experiments}
Unlike Wasserstein transport, the SBB model can account for stochastic spreading through its volatility component. Our experiments are organized around this interpretation. 
We now exploit the asymmetry and directionality of SBB. We begin with a simple particle example illustrating why the direction of the SBB barycenter matters in \Cref{sec:toy-experiment}. We then consider two complementary latent-recovery problems for handwritten digits in \Cref{sec:mnist-experiment} and single-cell RNA sequencing in \Cref{sec:single-cell}. Detailed settings of all experiments are in Appendix  \ref{app:experiments}. All experiments are run on a NVIDIA H200.

\subsection{Directionality of the SBB Barycenter}
\label{sec:toy-experiment}
Since $\SBB(\mu,\nu) \neq \SBB(\nu,\mu)$ in general (Section \ref{sec:formulation}), the encoder, decoder, and symmetric formulations need not produce the same barycenter. Figure~\ref{fig:toy-dec-app}-\ref{fig:toy-sym-app} in Appendix \ref{app:experiments} visualize this effect on two-dimensional particles using three anisotropic Gaussian distributions as marginals. The same collection of input distributions is used in all three panels, but the resulting barycenter particles differ in both shape and spread. In this particular example, the decoder barycenter is more concentrated, while the encoder and symmetric formulations produce visibly different geometries. The figure is intended only to illustrate the directional nature of SBB; it should not be interpreted as a comparison of convergence rates or numerical accuracy.

This observation motivates the choice of direction in the applications below. Our objective is to infer a common latent distribution from several observed, corrupted domains. The natural generative picture is therefore $\rho^\star \longrightarrow \nu_1,\ldots,\nu_N$, where $\rho^\star$ is an unknown source and the $\nu_i$ are its observed endpoints. Consequently, the decoder barycenter $\rhodec$ is the formulation aligned with the latent-recovery task.

\subsection{Recovering a Sharp Spatial Measure}
\label{sec:mnist-experiment}

This application is inspired by the spatial barycenter experiment of \citet{cuturi2014fast}. Here an image is interpreted as a probability measure on the pixel plane: after normalizing its intensity, the pixel values specify how probability mass is distributed over spatial locations. 

Starting from a clean latent digit $\rho_c^\star$, we generate $N=32$ observations of the form $\nu_{c,i} = (T_i)_\#\rho_c^\star * \N(0,\Omega_i)$, where $T_i$ applies deterministic geometric transformations such as translation and scaling, with rotation included as an ablation, while the Gaussian convolution introduces spatial diffusion. Moreover, each observation receives a different anisotropic blur. The clean source $\rho_c^\star$ is never supplied to any barycenter method. It is retained only as ground truth for evaluation. We report average results across the ten test images using $W_2$ distance to the latent source, rendered pixel error, and SSIM \cite{nilsson2020understanding}.

\begin{table}[htp!]
\caption{MNIST latent-source recovery. Report paired averages across latent digits.}
\label{tab:mnist-main}
\centering
\begin{tabular}{lccc}
\hline
Barycenter Method
& $W_2$ error $\downarrow$
& Pixel $\ell_1$ error $\downarrow$
& SSIM $\uparrow$ \\
\hline
Arithmetic
& $3.21 \pm 0.20$
& $1.35 \pm 0.09$
& $0.9808 \pm 0.005$ \\

Wasserstein
& $1.01 \pm 0.11$
& $0.62 \pm 0.07$
& $0.9964 \pm 0.0013$ \\

RegWB 
& $2.08 \pm 0.04$
& $1.15 \pm 0.01$
& $0.9727 \pm 0.0015$ \\

Debiased Sinkhorn
& $0.99 \pm 0.11$
& $0.62 \pm 0.06$
& $0.9964 \pm 0.0012$ \\

Decoder SBB
& \textcolor{red}{$0.79 \pm 0.09$}
& \textcolor{red}{$0.45 \pm 0.07$}
& \textcolor{red}{$0.9975 \pm 0.0010$} \\
\hline
\end{tabular}
\end{table}

We compare against Euclidean arithmetic mean, Wasserstein/Bures barycenters, regularized Wasserstein barycenter (RegWB) \cite{noble2023tree} and debiased Sinkhorn barycenters \cite{janati2020debiased}. For all methods, preprocessing, data splits, and corruption draws are kept identical. Table~\ref{tab:mnist-main} confirms the qualitative results in \Cref{fig:mnist_visualization} in Appendix \ref{app:experiments}. Wasserstein and debiased Sinkhorn substantially outperform the arithmetic mean, while decoder SBB achieves the best performance on all three metrics. In particular, SBB reduces the $W_2$ error and pixel $\ell_1$ error notably, while increasing SSIM. This suggests that the main gain of SBB lies in recovering a sharper and better localized latent spatial measure.

\subsection{Single-cell developmental trajectory reconstruction}
\label{sec:single-cell}

Single-cell RNA sequencing provides population snapshots rather than trajectories of individual cells, since cells are destroyed during measurement \cite{kester2018single}. This makes developmental reconstruction naturally a distributional problem: at each time $t$, the cell population is represented by an empirical measure $P_t$ in gene-expression space, and the goal is to infer how these distributions evolve. We consider the Waddington-OT iPSC reprogramming dataset ~\cite{schiebinger2019optimal}, which contains a dense 18-day time course of mouse embryonic fibroblast reprogramming. Cells were collected at 39 time points approximately every 12 hours (every 6 hours between days 8 and 9), yielding more than $2.5\times 10^5$ high-quality cells. The original work infers ancestor--descendant couplings and shows that iPSC, epithelial, trophoblast, and neural lineages share a common MET-like ancestry until approximately day 8. 

\paragraph{Recovering a common ancestor distribution.}
Motivated by this finding, we test whether a decoder SBB barycenter of late-stage populations resembles the observed MET population around day 8, which serves as a biological reference for evaluating population-level reconstruction. In the serum condition, we take late-stage iPSC, epithelial, trophoblast, and neural populations as descendant measures $\nu_1,\ldots,\nu_K$ and compute their $\rhodec$. The recovered barycenter is interpreted as a population-level common ancestor rather than as a set of individually identified ancestral cells. Importantly, the observed day-8 MET population is \emph{not} used to fit the barycenter; it is retained only for evaluation. We compare the recovered distribution with the target population using $W_2$ for distance between two distributions, MMD for differences in location, spread, and shape in PCA space; cell-type TV for differences in cell-type proportions; signature MAE for differences in population-level MET/epithelial and pluripotency activity. Baselines are the arithmetic mixture of descendant samples, $W_2$, RegWB and Waddington-OT \cite{schiebinger2019optimal} barycenters. Results are summarized as mean and standard deviation across five seeds, with identical endpoint and target samples across methods within each seed. 

{
\begin{table}[!htp]
\centering
\caption{Ancestor reconstruction results. Lower values indicate better agreement for all four metrics. Serum: a culture medium containing serum, which provides components that support cell growth. Serum + stromal: negative-control experiment that adds the stromal population to the four serum-condition descendant populations}
\scriptsize
\resizebox{\linewidth}{!}{%
\begin{tabular}{lrrrr}
\toprule
Barycenter Method & $W_2\downarrow$ & MMD$\downarrow$ &
Cell-type TV$\downarrow$ & Signature MAE$\downarrow$ \\
\midrule
\multicolumn{5}{l}{\textbf{serum condition}} \\
\midrule
Arithmetic
& $19.92 \pm 0.31$
& $0.235 \pm 0.003$
& $0.073 \pm 0.006$
& $0.080 \pm 0.012$ \\
Wasserstein
& $15.99 \pm 0.03$
& $0.241 \pm 0.003$
& $0.068 \pm 0.016$
& $0.078 \pm 0.011$ \\
RegWB
& $16.06 \pm 0.10$ 
& $0.515 \pm 0.002$
& $0.091 \pm 0.007$
& $0.075 \pm 0.012$ \\
Waddington-OT
& $15.87 \pm 0.05$
& $0.254 \pm 0.003$
& $0.070 \pm 0.010$
& $0.086 \pm 0.013$ \\
SBB Decoder (ours)
& \textcolor{red}{$15.85 \pm 0.05$}
& \textcolor{red}{$0.214 \pm 0.003$}
& \textcolor{red}{$0.064 \pm 0.013$}
& \textcolor{red}{$0.068 \pm 0.004$} \\
\midrule
\multicolumn{5}{l}{\textbf{serum + stromal condition}} \\
\midrule
Arithmetic
& $21.00 \pm 0.12$
& $0.254 \pm 0.008$
& $0.068 \pm 0.003$
& $0.070 \pm 0.007$ \\
Wasserstein
& $16.70 \pm 0.13$
& $0.261 \pm 0.003$
& \textcolor{red}{$0.052 \pm 0.008$}
& $0.068 \pm 0.004$ \\
RegWB
& $16.68 \pm 0.07$
& $0.266 \pm 0.004$
& $0.058 \pm 0.020$
& $0.073 \pm 0.014$ \\
Waddington-OT
& $16.42 \pm 0.05$
& $0.271 \pm 0.005$
& $0.060 \pm 0.009$
& $0.089 \pm 0.016$ \\
SBB Decoder (ours)
& \textcolor{red}{$16.40 \pm 0.04$}
& \textcolor{red}{$0.248 \pm 0.001$}
& $0.064 \pm 0.012$
& \textcolor{red}{$0.059 \pm 0.007$} \\
\bottomrule
\end{tabular}%
}
\label{tab:recover_ancestor}
\end{table}
}

As shown in Table~\ref{tab:recover_ancestor}, the decoder SBB barycenter achieves the best overall ancestor reconstruction.  It obtains the lowest error on all four metrics in the serum condition and the lowest $W_2$, MMD and Signature MAE under stromal. While some baselines give slightly better Cell-type TV under stromal, SBB more consistently recovers the global distributional geometry and gene-signature activity of the held-out ancestor.

\paragraph{Intermediate-state reconstruction.} We evaluate whether SBB barycenters can reconstruct intermediate cell-state distributions from two temporal endpoints separated by \emph{one, two, or three days}. Target-time cells are excluded from transport fitting. For three consecutive times $t_-<t_0<t_+$, we remove all cells at mid-point time $t_0$ and approximate the barycenters. For example, a one-day interval uses cells from days 1 and 2 to reconstruct the distribution at day 1.5, while a two-day interval uses days 1 and 3 to reconstruct day 2. We report means and standard deviations aggregated over all temporal windows of each duration and five random seeds. Each run uses 512 cells per endpoint, 512 reconstructed particles, and 512 target cells, all represented in a shared, transductively fitted 30-dimensional PCA space.

In addition to the SBB decoder, we also evaluate SBB-Markovian barycenter \ref{eq:bmarkovian-main}. Baselines include linear arithmetic, Wasserstein barycenter, RegWB, Waddington-OT barycenter and TrajectoryNet \cite{tong2020trajectorynet}. We report $W_2$ distance, MMD, cell-type TV, gene-signature MAE. Due to space constraints, \Cref{tab:serum_2i_1day_comparison} presents the quantitative results for one-day intervals. Appendix \ref{app:experiments} provides quantitative results for two-day and three-day intervals, together with qualitative examples.

\begin{table}[htbp]
\centering
\small
\caption{Intermediate-state reconstruction one-day intervals result. 2i: a serum-free culture medium containing two inhibitors. }
\label{tab:serum_2i_1day_comparison}
\resizebox{1.0\linewidth}{!}{%
\begin{tabular}{lcccc}
\toprule
Interpolation Method & $W_2\downarrow$ & MMD$\downarrow$ & Cell-type TV$\downarrow$ & Signature MAE$\downarrow$ \\
\midrule
\multicolumn{5}{l}{\textbf{serum condition}} \\
\midrule
Arithmetic mixture
& $11.9916 \pm 0.1855$
& $0.4044 \pm 0.0120$
& $0.0588 \pm 0.0289$
& $0.0364 \pm 0.0062$ \\
Wasserstein
& $10.9134 \pm 0.2175$
& $0.4189 \pm 0.0155$
& $0.0859 \pm 0.0284$
& $0.0359 \pm 0.0008$ \\
RegWB
& $11.0323 \pm 0.2059$ 
& $0.4154 \pm 0.0151$
& $0.0892 \pm 0.0171$
& $0.0370 \pm 0.0029$ \\
TrajectoryNet
& $14.6255 \pm 0.2413$ 
& $0.5539 \pm 0.0123$
& $0.0658 \pm 0.0119$
& \textcolor{red}{$0.0329 \pm 0.0043$} \\
Waddington-OT
& $10.6621 \pm 0.0752$
& $0.4034 \pm 0.0080$
& $0.2383 \pm 0.0491$
& $0.0385 \pm 0.0036$ \\
SBB Decoder (ours)
& $10.3515 \pm 0.1308$
& $0.3778 \pm 0.0118$
& \textcolor{red}{$0.0534 \pm 0.0336$}
& $0.0375 \pm 0.0097$ \\
SBB Markovian (ours)
& \textcolor{red}{$10.1848 \pm 0.1203$}
& \textcolor{red}{$0.3408 \pm 0.0106$}
& $0.0553 \pm 0.0329$
& $0.0349 \pm 0.0072$ \\
\midrule
\multicolumn{5}{l}{\textbf{2i condition}} \\
\midrule
Arithmetic mixture
& $11.9001 \pm 0.6598$
& $0.1731 \pm 0.0196$
& $0.1536 \pm 0.0530$
& $0.0882 \pm 0.0422$ \\
Wasserstein
& $11.3029 \pm 0.7513$
& $0.1797 \pm 0.0234$
& $0.1745 \pm 0.0368$
& $0.1146 \pm 0.0458$ \\
RegWB
& $11.5855 \pm 0.7610$
& $0.1773 \pm 0.0247$
& $0.1816 \pm 0.0508$
& $0.1128 \pm 0.0392$ \\
TrajectoryNet
& $12.3068 \pm 0.4723$
& $0.1857 \pm 0.0110$
& $0.1700 \pm 0.0175$
& $0.1037 \pm 0.0071$ \\
Waddington-OT
& $12.6290 \pm 1.2882$
& $0.2239 \pm 0.0400$
& $0.1901 \pm 0.0587$
& $0.1024 \pm 0.0323$ \\
SBB Decoder (ours)
& \textcolor{red}{$10.9584 \pm 0.7523$}
& \textcolor{red}{$0.1572 \pm 0.0236$}
& \textcolor{red}{$0.1530 \pm 0.0536$}
& $0.0791 \pm 0.0384$ \\
SBB Markovian (ours)
& $11.5364 \pm 0.5676$
& $0.1698 \pm 0.0191$
& $0.1536 \pm 0.0530$
& \textcolor{red}{$0.0684 \pm 0.0374$} \\
\bottomrule
\end{tabular}%
}
\end{table}

Table~\ref{tab:serum_2i_1day_comparison} demonstrates that the SBB formulations remain competitive when reconstructing an unobserved intermediate population from its temporal endpoints. 
In the serum condition, SBB Markovian achieves the lowest $W_2$ and MMD errors and also provides strong agreement in gene-signature activity, while SBB Decoder remains competitive in cell-type composition. 
This behavior is consistent with the construction of the Markovian objective: an intermediate population simultaneously acts as the terminal state of the preceding bridge and the initial state of the subsequent bridge, so exploiting both SBB directions provides additional temporal information. Under the 2i condition, SBB Decoder instead gives the lowest $W_2$ and MMD errors, whereas SBB Markovian remains competitive on the biological metrics.  Thus, neither directional variant uniformly dominates across all criteria; rather, the results indicate that the decoder formulation is particularly effective at matching the geometry of the held-out distribution, while the Markovian formulation can better exploit the two-sided temporal structure when reconstructing intermediate states.

\section{CONCLUSION}
\label{sec:conclusion}
\vspace{-0.1in}
We introduced Schr\"odinger--Bass Bridge barycenters as a directed framework for averaging measure-valued data while accounting jointly for systematic transport and stochastic dispersion. Exploiting the intrinsic asymmetry of SBB, we formulated encoder, decoder, symmetric, and Markovian barycenters. We develop the theory of existence, a Wasserstein gradient particle scheme, and Gaussian analysis. The spatial-MNIST and single-cell RNA experiments show that our method improves latent recovery beyond different baselines. Future work includes establishing convergence guarantees for the particle scheme and exploring applications of SBB barycenters (including encoder and symmetric model also) in a broader range of domains.

\section{Acknowledgment}
\vspace{-0.1in}
This work is supported via NSF under Cooperative Agreement PHY-2019786 and PHY-2525568. Shuchin Aeron would also like to acknowledge funding via NSF DMS 2309519.

\bibliography{references}

@inproceedings{yang2023estimating,
 author={Yang, Yibo and Eckstein, Stephan and Nutz, Marcel and Mandt, Stephan},
 title={Estimating the Rate-Distortion Function by {Wasserstein} Gradient Descent},
 booktitle={Advances in Neural Information Processing Systems},volume={36},pages={2768--2794},year={2023},url={https://arxiv.org/abs/2310.18908}}

@book{billingsley2013convergence,
  title={Convergence of Probability Measures},
  author={Billingsley, Patrick},
  year={2013}, publisher={John Wiley \& Sons}}

@inproceedings{schrodinger1932theorie,
  title={Sur la th{\'e}orie relativiste de l'{\'e}lectron et l'interpr{\'e}tation de la m{\'e}canique quantique},
  author={Schr{\"o}dinger, Erwin},
  booktitle={Annales de l'institut Henri Poincar{\'e}},
  volume={2},
  number={4},
  pages={269--310},
  year={1932}
}

@article{conze2021bass,
  title={Bass construction with multi-marginals: Lightspeed computation in a new local volatility model},
  author={Conze, Antoine and Henry-Labordere, Pierre},
  journal={Available at SSRN 3853085},
  year={2021}
}

@article{agueh2011barycenters,
  title={Barycenters in the Wasserstein space},
  author={Agueh, Martial and Carlier, Guillaume},
  journal={SIAM Journal on Mathematical Analysis},
  volume={43},
  number={2},
  pages={904--924},
  year={2011},
  publisher={SIAM}
}

@article{benamou2000computational,
  author  = {Benamou, Jean-David and Brenier, Yann},
  title   = {A Computational Fluid Mechanics Solution to the
             {Monge--Kantorovich} Mass Transfer Problem},
  journal = {Numerische Mathematik},
  year    = {2000},
  volume  = {84},
  pages   = {375--393},
  doi     = {10.1007/s002110050002}
}

@inproceedings{cuturi2014fast,
  author    = {Cuturi, Marco and Doucet, Arnaud},
  title     = {Fast Computation of {Wasserstein} Barycenters},
  booktitle = {Proceedings of the 31st International Conference
               on Machine Learning},
  series    = {Proceedings of Machine Learning Research},
  volume    = {32},
  pages     = {685--693},
  year      = {2014},
  url       = {https://proceedings.mlr.press/v32/cuturi14.html}
}

@article{benamou2015iterative,
  author  = {Benamou, Jean-David and Carlier, Guillaume
             and Cuturi, Marco and Nenna, Luca
             and Peyr{\'e}, Gabriel},
  title   = {Iterative {Bregman} Projections for Regularized
             Transportation Problems},
  journal = {SIAM Journal on Scientific Computing},
  year    = {2015},
  volume  = {37},
  number  = {2},
  pages   = {A1111--A1138},
  doi     = {10.1137/141000439}
}

@inproceedings{janati2020debiased,
  author    = {Janati, Hicham and Cuturi, Marco
               and Gramfort, Alexandre},
  title     = {Debiased {Sinkhorn} Barycenters},
  booktitle = {Proceedings of the 37th International Conference
               on Machine Learning},
  series    = {Proceedings of Machine Learning Research},
  volume    = {119},
  pages     = {4692--4701},
  year      = {2020},
  url       = {https://proceedings.mlr.press/v119/janati20a.html}
}

@misc{chizat2023doubly,
  author        = {Chizat, L{\'e}na{\"i}c},
  title         = {Doubly Regularized Entropic {Wasserstein}
                   Barycenters},
  year          = {2023},
  eprint        = {2303.11844},
  archivePrefix = {arXiv},
  url           = {https://arxiv.org/abs/2303.11844}
}

@article{schrodinger1931umkehrung,
  author  = {Schr{\"o}dinger, Erwin},
  title   = {{\"U}ber die Umkehrung der Naturgesetze},
  journal = {Sitzungsberichte der Preussischen Akademie der
             Wissenschaften, Physikalisch-mathematische Klasse},
  year    = {1931},
  pages   = {144--153}
}

@article{daipra1991stochastic,
  author  = {Dai Pra, Paolo},
  title   = {A Stochastic Control Approach to Reciprocal
             Diffusion Processes},
  journal = {Applied Mathematics and Optimization},
  year    = {1991},
  volume  = {23},
  pages   = {313--329},
  doi     = {10.1007/BF01442404}
}

@article{leonard2014survey,
  author  = {L{\'e}onard, Christian},
  title   = {A Survey of the {Schr{\"o}dinger} Problem and
             Some of Its Connections with Optimal Transport},
  journal = {Discrete and Continuous Dynamical Systems},
  year    = {2014},
  volume  = {34},
  number  = {4},
  pages   = {1533--1574},
  doi     = {10.3934/dcds.2014.34.1533}
}

@article{haasler2021multimarginal,
  author  = {Haasler, Isabel and Ringh, Axel
             and Chen, Yongxin and Karlsson, Johan},
  title   = {Multimarginal Optimal Transport with a
             Tree-Structured Cost and the
             {Schr{\"o}dinger} Bridge Problem},
  journal = {SIAM Journal on Control and Optimization},
  year    = {2021},
  volume  = {59},
  number  = {4},
  pages   = {2428--2453},
  doi     = {10.1137/20M1320195}
}

@inproceedings{noble2023tree,
  author    = {Noble, Maxence and De Bortoli, Valentin
               and Doucet, Arnaud and Durmus, Alain},
  title     = {Tree-Based Diffusion {Schr{\"o}dinger} Bridge
               with Applications to {Wasserstein} Barycenters},
  booktitle = {Advances in Neural Information Processing Systems},
  volume    = {36},
  year      = {2023},
  url       = {https://proceedings.neurips.cc/paper_files/paper/2023/hash/ad08767706825033b99122332293033d-Abstract-Conference.html}
}

@inproceedings{howard2025tree,
  author    = {Howard, Samuel and Potaptchik, Peter
               and Deligiannidis, George},
  title     = {{Schr{\"o}dinger} Bridge Matching for
               Tree-Structured Costs and Entropic
               {Wasserstein} Barycentres},
  booktitle = {Advances in Neural Information Processing Systems},
  volume    = {38},
  year      = {2025},
  url       = {https://openreview.net/forum?id=DliPKnn6e0}
}

@article{bass1983skorokhod,
  author  = {Bass, Richard F.},
  title   = {{Skorokhod} Imbedding via Stochastic Integrals},
  journal = {S{\'e}minaire de Probabilit{\'e}s},
  year    = {1983},
  volume  = {17},
  pages   = {221--224},
  url     = {https://www.numdam.org/item/SPS_1983__17__221_0/}
}

@article{backhoff2020martingale,
  author  = {Backhoff-Veraguas, Julio and Beiglb{\"o}ck, Mathias
             and Huesmann, Martin and K{\"a}llblad, Sigrid},
  title   = {Martingale {Benamou--Brenier}:
             A Probabilistic Perspective},
  journal = {The Annals of Probability},
  year    = {2020},
  volume  = {48},
  number  = {5},
  pages   = {2258--2289},
  doi     = {10.1214/20-AOP1422}
}

@article{gozlan2017kantorovich,
  author  = {Gozlan, Natha{\"e}l and Roberto, Cyril
             and Samson, Paul-Marie and Tetali, Prasad},
  title   = {{Kantorovich} Duality for General Transport Costs
             and Applications},
  journal = {Journal of Functional Analysis},
  year    = {2017},
  volume  = {273},
  number  = {11},
  pages   = {3327--3405},
  doi     = {10.1016/j.jfa.2017.08.015}
}

@article{backhoff2019existence,
  author  = {Backhoff-Veraguas, Julio and Beiglb{\"o}ck, Mathias
             and Pammer, Gudmund},
  title   = {Existence, Duality, and Cyclical Monotonicity
             for Weak Transport Costs},
  journal = {Calculus of Variations and Partial Differential
             Equations},
  year    = {2019},
  volume  = {58},
  number  = {6},
  pages   = {203},
  doi     = {10.1007/s00526-019-1624-y}
}

@article{tan2013optimal,
  author  = {Tan, Xiaolu and Touzi, Nizar},
  title   = {Optimal Transportation under Controlled
             Stochastic Dynamics},
  journal = {The Annals of Probability},
  year    = {2013},
  volume  = {41},
  number  = {5},
  pages   = {3201--3240},
  doi     = {10.1214/12-AOP797}
}

@misc{alouadi2026pde,
  author        = {Alouadi, Alexandre
                   and Henry-Labord{\`e}re, Pierre
                   and Loeper, Gr{\'e}goire
                   and Mazhar, Othmane
                   and Pham, Huy{\^e}n and Touzi, Nizar},
  title         = {A {PDE} Derivation of the
                   {Schr{\"o}dinger--Bass} Bridge},
  year          = {2026},
  eprint        = {2601.17863},
  archivePrefix = {arXiv},
  url           = {https://arxiv.org/abs/2601.17863v1}
}

@misc{henry2026bridging,
  author        = {Henry-Labord{\`e}re, Pierre
                   and Loeper, Gr{\'e}goire
                   and Mazhar, Othmane
                   and Pham, Huy{\^e}n and Touzi, Nizar},
  title         = {Bridging {Schr{\"o}dinger} and {Bass}:
                   A Semimartingale Optimal Transport Problem
                   with Diffusion Control},
  year          = {2026},
  eprint        = {2603.27712},
  archivePrefix = {arXiv},
  note          = {Version 2, May 29, 2026},
  url           = {https://arxiv.org/abs/2603.27712v2}
}

@misc{hasenbichler2026weak,
  author        = {Hasenbichler, Manuel and Pammer, Gudmund
                   and Thonhauser, Stefan},
  title         = {A Weak Transport Approach to the
                   {Schr{\"o}dinger--Bass} Bridge},
  year          = {2026},
  eprint        = {2604.02312},
  archivePrefix = {arXiv},
  note          = {Version 2, June 30, 2026},
  url           = {https://arxiv.org/abs/2604.02312v2}
}

@misc{alouadi2026lightsbb,
  author        = {Alouadi, Alexandre
                   and Henry-Labord{\`e}re, Pierre
                   and Loeper, Gr{\'e}goire
                   and Mazhar, Othmane
                   and Pham, Huy{\^e}n and Touzi, Nizar},
  title         = {{LightSBB-M}: Bridging {Schr{\"o}dinger}
                   and {Bass} for Generative Diffusion Modeling},
  year          = {2026},
  eprint        = {2601.19312},
  archivePrefix = {arXiv},
  note          = {Version 2, May 5, 2026},
  url           = {https://arxiv.org/abs/2601.19312v2}
}

@article{del2020statistical,
  title={The statistical effect of entropic regularization in optimal transportation},
  author={del Barrio, Eustasio and Loubes, Jean-Michel},
  journal={arXiv preprint arXiv:2006.05199},
  year={2020}
}

@article{mallasto2022entropy,
  author  = {Mallasto, Anton and Gerolin, Augusto
             and Minh, H{\`a} Quang},
  title   = {Entropy-Regularized 2-{Wasserstein} Distance
             between {Gaussian} Measures},
  journal = {Information Geometry},
  year    = {2022},
  volume  = {5},
  pages   = {289--323},
  doi     = {10.1007/s41884-021-00052-8}
}

@inproceedings{janati2020gaussian,
  author    = {Janati, Hicham and Muzellec, Boris
               and Peyr{\'e}, Gabriel and Cuturi, Marco},
  title     = {Entropic Optimal Transport between Unbalanced
               {Gaussian} Measures Has a Closed Form},
  booktitle = {Advances in Neural Information Processing Systems},
  volume    = {33},
  year      = {2020},
  url       = {https://proceedings.neurips.cc/paper/2020/hash/766e428d1e232bbdd58664b41346196c-Abstract.html}
}

@inproceedings{bunne2023gaussian,
  author    = {Bunne, Charlotte and Hsieh, Ya-Ping
               and Cuturi, Marco and Krause, Andreas},
  title     = {The {Schr{\"o}dinger} Bridge between
               {Gaussian} Measures Has a Closed Form},
  booktitle = {Proceedings of the 26th International Conference
               on Artificial Intelligence and Statistics},
  series    = {Proceedings of Machine Learning Research},
  volume    = {206},
  pages     = {5802--5833},
  year      = {2023},
  url       = {https://proceedings.mlr.press/v206/bunne23a.html}
}

@article{alvarez2016fixedpoint,
  author  = {{\'A}lvarez-Esteban, Pedro C.
             and del Barrio, Eustasio
             and Cuesta-Albertos, J. A.
             and Matr{\'a}n, Carlos},
  title   = {A Fixed-Point Approach to Barycenters in
             {Wasserstein} Space},
  journal = {Journal of Mathematical Analysis and Applications},
  year    = {2016},
  volume  = {441},
  number  = {2},
  pages   = {744--762},
  doi     = {10.1016/j.jmaa.2016.04.045}
}

@inproceedings{altschuler2021averaging,
  author    = {Altschuler, Jason M. and Chewi, Sinho
               and Gerber, Patrik R. and Stromme, Austin J.},
  title     = {Averaging on the {Bures--Wasserstein} Manifold:
               Dimension-Free Convergence of Gradient Descent},
  booktitle = {Advances in Neural Information Processing Systems},
  volume    = {34},
  pages     = {22132--22145},
  year      = {2021},
  url       = {https://proceedings.neurips.cc/paper_files/paper/2021/hash/b9acb4ae6121c941324b2b1d3fac5c30-Abstract.html}
}

@inproceedings{nguyenbarycenter,
  title={On Barycenter Computation: Analyzing Semi-Unbalanced Optimal Transport-based Method on Bures-Wasserstein manifold.},
  author={Nguyen, Ngoc-Hai and Nguyen, Hoang-Phi and Pham, Tung and Ho, Nhat and others},
  year      = {2025},
  booktitle={The 29th International Conference on Artificial Intelligence and Statistics}
}

@article{schiebinger2019optimal,
  title={Optimal-transport analysis of single-cell gene expression identifies developmental trajectories in reprogramming},
  author={Schiebinger, Geoffrey and Shu, Jian and Tabaka, Marcin and Cleary, Brian and Subramanian, Vidya and Solomon, Aryeh and Gould, Joshua and Liu, Siyan and Lin, Stacie and Berube, Peter and others},
  journal={Cell},
  volume={176},
  number={4},
  pages={928--943},
  year={2019},
  publisher={Elsevier}
}

@article{kester2018single,
  title={Single-cell transcriptomics meets lineage tracing},
  author={Kester, Lennart and Van Oudenaarden, Alexander},
  journal={Cell stem cell},
  volume={23},
  number={2},
  pages={166--179},
  year={2018},
  publisher={Elsevier}
}

@inproceedings{werenski2022measure,
  title={Measure estimation in the barycentric coding model},
  author={Werenski, Matthew and Jiang, Ruijie and Tasissa, Abiy and Aeron, Shuchin and Murphy, James M},
  booktitle={International Conference on Machine Learning},
  pages={23781--23803},
  year={2022},
  organization={PMLR}
}

@article{cheng2021dynamical,
  title={Dynamical Wasserstein barycenters for time-series modeling},
  author={Cheng, Kevin and Aeron, Shuchin and Hughes, Michael C and Miller, Eric L},
  journal={Advances in Neural Information Processing Systems},
  volume={34},
  pages={27991--28003},
  year={2021}
}

@article{mallery2025synthesis,
  title={Synthesis and analysis of data as probability measures with entropy-regularized optimal transport},
  author={Mallery, Brendan and Murphy, James M and Aeron, Shuchin},
  journal={arXiv preprint arXiv:2501.07446},
  year={2025}
}

@inproceedings{
werenski2025linearized,
title={Linearized Wasserstein Barycenters: Synthesis, Analysis, Representational Capacity, and Applications},
author={Matthew Werenski and Brendan Mallery and Shuchin Aeron and James M. Murphy},
booktitle={The 28th International Conference on Artificial Intelligence and Statistics},
year={2025},
url={https://openreview.net/forum?id=MWCHedOm2L}
}

@article{JMLR:v25:23-0708,
  author  = {Florian Gunsilius and Meng Hsuan Hsieh and Myung Jin Lee},
  title   = {Tangential Wasserstein Projections},
  journal = {Journal of Machine Learning Research},
  year    = {2024},
  volume  = {25},
  number  = {69},
  pages   = {1--41},
  url     = {http://jmlr.org/papers/v25/23-0708.html}
}

@inproceedings{bai2023linear,
  title={Linear optimal partial transport embedding},
  author={Bai, Yikun and Medri, Ivan Vladimir and Martin, Rocio Diaz and Shahroz, Rana and Kolouri, Soheil},
  booktitle={International Conference on Machine Learning},
  pages={1492--1520},
  year={2023},
  organization={PMLR}
}

@article{schmitz2018wasserstein,
  title={Wasserstein dictionary learning: Optimal transport-based unsupervised nonlinear dictionary learning},
  author={Schmitz, Morgan A and Heitz, Matthieu and Bonneel, Nicolas and Ngole, Fred and Coeurjolly, David and Cuturi, Marco and Peyr{\'e}, Gabriel and Starck, Jean-Luc},
  journal={SIAM Journal on Imaging Sciences},
  volume={11},
  number={1},
  pages={643--678},
  year={2018},
  publisher={SIAM}
}

@article{xu2018distilled,
  title={Distilled wasserstein learning for word embedding and topic modeling},
  author={Xu, Hongteng and Wang, Wenlin and Liu, Wei and Carin, Lawrence},
  journal={Advances in Neural Information Processing Systems},
  volume={31},
  year={2018}
}

@inproceedings{tong2020trajectorynet,
  title={Trajectorynet: A dynamic optimal transport network for modeling cellular dynamics},
  author={Tong, Alexander and Huang, Jessie and Wolf, Guy and Van Dijk, David and Krishnaswamy, Smita},
  booktitle={International conference on machine learning},
  pages={9526--9536},
  year={2020},
  organization={PMLR}
}

@article{nilsson2020understanding,
  title={Understanding ssim},
  author={Nilsson, Jim and Akenine-M{\"o}ller, Tomas},
  journal={arXiv preprint arXiv:2006.13846},
  year={2020}
}
\bibliographystyle{plainnat}

\clearpage

\appendix
\thispagestyle{empty}
\tableofcontents  
\addtocontents{toc}{\protect\setcounter{tocdepth}{1}}
\numberwithin{equation}{section}
\numberwithin{theorem}{section}
\numberwithin{algorithm}{section}
\numberwithin{figure}{section}
\numberwithin{table}{section}

\section{MORE BACKGROUND ON SCHR\"ODINGER BASS BRIDGE AND PROPERTIES}
\label{app:background}
The table below summarizes the main differences in intuition and dynamic formulation among the Schrödinger Bridge, Bass Martingale, and SBB frameworks. We also note that the feasibility of the Bass martingale problem is characterized by the convex-order condition, whereas no such constraint is required in the SBB formulation.
\begin{center}
\scriptsize
\setlength{\tabcolsep}{3pt}
\renewcommand{\arraystretch}{1.1}
\begin{tabular}{llll}
\toprule
 & \textbf{Schr\"odinger Bridge}~\cite{schrodinger1932theorie}
 & \textbf{Bass Martingale}~\cite{conze2021bass}
 & \textbf{SBB}~\cite{henry2026bridging} \\
\midrule

Drift
& Controlled
& Zero
& Controlled \\

Volatility
& Fixed
& Controlled
& Controlled \\

Dynamics
& $dX_t=\alpha_tdt+\sqrt{\varepsilon}\,dW_t$
& $dX_t=\sigma_t\,dW_t$
& $dX_t=\alpha_tdt+\sigma_t\,dW_t$ \\

Main constraint
& Endpoint marginals
& Endpoint marginals + martingale
& Endpoint marginals \\

\bottomrule
\end{tabular}
\end{center}

We present some useful properties of SBB which might be helpful in analysis this problem

\begin{itemize}
        \item Translation invariance: 
For $a \in \mathbb{R}^d$, let $\tau_a(x)=x+a$. Then

$$
\mathsf{SBB}\left(\tau_a \# \mu, \tau_a \# \rho\right)=\mathsf{SBB}(\mu, \rho) .
$$  
    \begin{proof}
        Take any admissible process $\boldsymbol{X}_{\boldsymbol{t}}$ from $\mu$ to $\rho$
    $$
    X_0 \sim \mu, \quad X_T \sim \rho,
    $$
    and
    $$
    d X_t=\alpha_t d t+\sigma_t d W_t .
    $$
    Define the translated process
    $$
    Y_t=X_t+a .
    $$
    Then
    $$
    Y_0=X_0+a \sim \tau_{a \# \mu}, \quad Y_T=X_T+a \sim \tau_{a \#} \rho .
    $$
    Also,
    $$
    d Y_t=d X_t=\alpha_t d t+\sigma_t d W_t .
    $$
    So the drift and volatility are exactly the same. Therefore the cost is unchanged.
    \end{proof}
        
        \item Orthogonal invariance: 
For an orthogonal matrix $Q$,
$$
\mathsf{SBB}(Q \# \mu, Q \# \rho)=\mathsf{SBB}(\mu, \rho) .
$$
    \begin{proof}
        Take any admissible process
$$
d X_t=\alpha_t d t+\sigma_t d W_t, \quad X_0 \sim \mu, \quad X_T \sim \rho .
$$
Define
$$
Y_t=Q X_t .
$$
Then
$$
Y_0=Q X_0 \sim Q_{\# \mu}, \quad Y_T=Q X_T \sim Q_{\#} \rho .
$$
Now compute the dynamics:
$$
d Y_t=Q d X_t=Q \alpha_t d t+Q \sigma_t d W_t .
$$
Since $Q$ is orthogonal, the process
$$
B_t=Q W_t
$$
is again a standard Brownian motion. Equivalently,
$$
d W_t=Q^{\top} d B_t .
$$
Thus
$$
d Y_t=Q \alpha_t d t+Q \sigma_t Q^{\top} d B_t .
$$
So the transformed drift and volatility are
$$
\alpha_t^Q=Q \alpha_t, \quad \sigma_t^Q=Q \sigma_t Q^{\top} .
$$
Now compare the cost. For the drift part,
$$
\left\|\alpha_t^Q\right\|^2=\left\|Q \alpha_t\right\|^2=\alpha_t^{\top} Q^{\top} Q \alpha_t=\left\|\alpha_t\right\|^2 .
$$

For the volatility part,

$$
Q \sigma_t Q^{\top}-\sqrt{\varepsilon} I_d=Q \sigma_t Q^{\top}-\sqrt{\varepsilon} Q I_d Q^{\top}=Q\left(\sigma_t-\sqrt{\varepsilon} I_d\right) Q^{\top} .
$$

The Frobenius norm is invariant under orthogonal transformations, so

$$
\left\|Q\left(\sigma_t-\sqrt{\varepsilon} I_d\right) Q^{\top}\right\|_F=\left\|\sigma_t-\sqrt{\varepsilon} I_d\right\|_F .
$$

Therefore

$$
J\left(Y, \alpha^Q, \sigma^Q\right)=J(X, \alpha, \sigma) .
$$
    \end{proof}

        \item Not affine invariant because the reference volatility $\sqrt{\varepsilon} I_d$ fixes an isotropic Euclidean scale.
        \item Directed heat-flow bias
$$
\mathsf{SBB}\left(\mu, H_T \mu\right)=0, \quad H_T \mu=\mu * \mathcal{N}\left(0, \varepsilon T I_d\right)
$$
        \begin{proof}
            Indeed, let $$
\alpha_t=0, \quad \sigma_t=\sqrt{\varepsilon} I_d .
$$
Therefore the cost integrand is
$$
\frac{1}{2}\left\|\alpha_t\right\|^2+\frac{\beta}{2}\left\|\sigma_t-\sqrt{\varepsilon} I_d\right\|_F^2=0
$$
Hence
$$
J\left(X, 0, \sqrt{\varepsilon} I_d\right)=0 .
$$
Also,
$$
X_T=X_0+\sqrt{\varepsilon} W_T .
$$
Since
$$
\sqrt{\varepsilon} W_T \sim N\left(0, \varepsilon T I_d\right),
$$
we get
$$
X_T \sim \mu * N\left(0, \varepsilon T I_d\right)=H_T \mu .
$$
Thus there exists an admissible process from $\mu$ to $H_T \mu$ with zero cost.
        \end{proof}

    \end{itemize}


\section{MATHEMATICAL PROOFS FOR THE MAIN RESULTS} \label{app:main_proof}

\subsection{Proof for the existence of the decoder SBB barycenter (\Cref{thm:existence-main})} \label{proof:existence}
\textbf{Proof.}
First, the functional is finite somewhere. For example, fix a candidate $\rho\in\Ptwo(\mathbb{R}^d)$. For each $i$, choose a quadratic optimal coupling $(U,V)$ of $\rho$ and $\mu_i$. Consider
\[
X_t
=
\left(1-\frac{t}{T}\right)U
+
\frac{t}{T}V.
\]
Then $X_0\sim\rho$, $X_T\sim\mu_i$, and
\[
\alpha_t=\frac{V-U}{T},
\qquad
\sigma_t=0.
\]
Hence
\[
\begin{aligned}
J(P)
&=
\frac{1}{2T}\mathbb E\|V-U\|^2
+
\frac{\beta\varepsilon dT}{2} \\
&=
\frac{W_2^2(\rho,\mu_i)}{2T}
+
\frac{\beta\varepsilon dT}{2}.
\end{aligned}
\]
Therefore
\[
0 \leq
\SBB_{\beta,\varepsilon,T}(\rho,\mu_i)
\leq \frac{W_2^2(\rho,\mu_i)}{2T} + \frac{\beta\varepsilon dT}{2}
<\infty.
\]
Then the infimum exists and there is a minimizing sequence $\{\sum_i\lambda_i\mathsf{SBB}_{\beta,\varepsilon,T}(\rho,\mu_{i_n})\}_n$ converging to this value. 

Now take any admissible process from $\rho$ to $\mu_i$,
\[
X_T-X_0=\int_0^T\alpha_t\dd t+\int_0^T\sigma_t\dd W_t,
\]
We have
\[
W_2^2(\rho,\mu_i)\leq \E\|X_T-X_0\|^2.
\]
Using Cauchy-Schwarz,
$$
\mathbb{E}\left\|\int_0^T \alpha_t d t\right\|^2 \leq T \mathbb{E} \int_0^T\left\|\alpha_t\right\|^2 d t .
$$
Using Ito isometry (details in \ref{proof:ito}),
$$
\mathbb{E}\left\|\int_0^T \sigma_t d W_t\right\|^2=\mathbb{E} \int_0^T\left\|\sigma_t\right\|_F^2 d t
$$
So
$$
\mathbb{E}\left\|X_T-X_0\right\|^2 \leq 2 T \mathbb{E} \int_0^T\left\|\alpha_t\right\|^2 d t+2 \mathbb{E} \int_0^T\left\|\sigma_t\right\|_F^2 d t
$$
Also, by Cauchy-Schwarz
\[
\|\sigma_t\|_F^2\leq 2\|\sigma_t-\sqrt{\varepsilon}\Id\|_F^2+2\varepsilon d.
\]
Therefore, for
\[
J(P)=\E_{\mathbb{P}}\int_0^T\left[\frac12\|\alpha_t\|^2+\frac{\beta}{2}\|\sigma_t-\sqrt{\varepsilon}\Id\|_F^2\right]\dd t,
\]
we obtain
\[
W_2^2(\rho,\mu_i)
\leq \left(4T+\frac8\beta\right)J(P)+4\varepsilon dT.
\]
Indeed, for the drift part
$$\mathbb E\int_0^T\|\alpha_t\|^2dt
\le
2J(P)$$
For the diffusion part, 
$$
\mathbb{E} \int_0^T\left\|\sigma_t\right\|_F^2 d t \leq 2 \mathbb{E} \int_0^T\left\|\sigma_t-\sqrt{\varepsilon} I_d\right\|_F^2 d t+2 \varepsilon d T .
$$
And since
$$
\frac{\beta}{2} \mathbb{E} \int_0^T\left\|\sigma_t-\sqrt{\varepsilon} I_d\right\|_F^2 d t \leq J(P),
$$
we have
$$
\mathbb{E} \int_0^T\left\|\sigma_t-\sqrt{\varepsilon} I_d\right\|_F^2 d t \leq \frac{2}{\beta} J(P) .
$$
So
$$
\mathbb{E} \int_0^T\left\|\sigma_t\right\|_F^2 d t \leq \frac{4}{\beta} J(P)+2 \varepsilon d T
$$
Plugging back:

$$
\mathbb{E}\left\|X_T-X_0\right\|^2 \leq 2 T(2 J(P))+2\left(\frac{4}{\beta} J(P)+2 \varepsilon d T\right) .
$$

So
$$
\mathbb{E}\left\|X_T-X_0\right\|^2 \leq\left(4 T+\frac{8}{\beta}\right) J(P)+4 \varepsilon d T .
$$
Therefore,
$$
W_2^2\left(\mu_i, \rho\right) \leq\left(4 T+\frac{8}{\beta}\right) J(P)+4 \varepsilon d T.
$$
Taking the infimum over $P$,
\[
\mathsf{SBB}_{\beta,\varepsilon,T}(\rho,\mu_i)
\geq
\frac{W_2^2(\rho,\mu_i)-4\varepsilon dT}{4T+8/\beta}.
\]
Thus, as $SBB(\rho_n,\mu_i)$ is upper bounded, then $W_2^2(\rho_n,\mu_i)\le C_i'$ for a constant $C_i'$ that is independent of $n$; hence every minimizing sequence $(\rho_n)$ has uniformly bounded $W_2(\rho_n,\mu_i)$.  Then it has uniformly bounded second moments. Indeed, let $X_n \sim \rho_n$, we want to prove that $\mathbb{E}\|X_n\|^2$ is bounded. To do this, let $X_0 \sim \mu_i$, as $\mu_i \in \Ptwo(\R^d)$, $\mathbb{E}\|X_0\|^2$ is bounded. We use the inequality
$$
\|X_n\|^2=\|X_n-X_0+X_0\|^2 \leq 2\|X_n-X_0\|^2+2\|X_0\|^2 .
$$
Take the expectation
$$
\mathbb{E}\|X_n\|^2 \leq 2 \mathbb{E}\|X_n-X_0\|^2+2 \mathbb{E}\|X_0\|^2 .
$$
As the inequality holds for all coupling of $(X_0, X_n)$, we have 
$$
\mathbb{E}\|X_n\|^2 \leq 2 W_2^2(\rho_n, \mu_i)+2 \mathbb{E}\|X_0\|^2
$$
Therefore the minimizing sequence $\{\rho_n\}$ is  uniformly tight in $\Ptwo(\mathbb{R}^d)$. Then by theorem of Prokhorov \cite{billingsley2013convergence}, there exists a subsequence $\{\rho'_n\} \subset \{\rho_n\}$ that converges weakly to a limit $\rho' \in \Ptwo(\mathbb{R}^d)$. 

By the weak lower semicontinuity Lemma \ref{lem:lower_semicontinuity}, for every $i$,
\[
\SBB_{\beta,\varepsilon,T}(\rho',\mu_i)
\le
\liminf_{n\to\infty}
\SBB_{\beta,\varepsilon,T}(\rho_n,\mu_i).
\]
Therefore
\[
\begin{aligned}
\mathcal{B}_{\mathrm{dec}}(\rho')
&=
\sum_{i=1}^N
\lambda_i\SBB_{\beta,\varepsilon,T}(\rho',\mu_i) \\
&\le
\sum_{i=1}^N
\lambda_i
\liminf_{n\to\infty}
\SBB_{\beta,\varepsilon,T}(\rho_n,\mu_i) \\
&\le
\liminf_{n\to\infty}
\mathcal{B}_{\mathrm{dec}}(\rho_n) \\
&=
b_*.
\end{aligned}
\]
By definition of $b_*$, the reverse inequality is automatic. Thus
\[
\mathcal{B}_{\mathrm{dec}}(\rho')=b_*,
\]
and therefore $\rho'
\in
\operatorname*{arg\,min}_{\rho\in\mathcal{P}_2(\R^d)}
\sum_{i=1}^N
\lambda_i
\SBB_{\beta,\varepsilon,T}(\rho,\mu_i).$
\qedbox

\paragraph{Remark.} The proof above can be straightforwardly adapted to the encoder SBB barycenter case $\rho\mapsto\sum_i\lambda_i\mathsf{SBB}_{\beta,\varepsilon,T}(\mu_i,\rho)$. Indeed, from the beginning of the argument up to the extraction of a weakly convergent subsequence $\{\rho'_n\} \rightharpoonup \rho'$ step, all the techniques employed are independent of the ordering of $\rho$ and $\mu_i$ in SBB functional. The final step is even simpler, thanks to continuity of the SBB for the second marginal according to $W_2$ proved in \cite{henry2026bridging} (and also metrizes the weak convergence), which implies that $\rho'$ achieves the infimum. \\

Meanwhile, for the symmetric and Markovian cases, the same proof strategy applies to $\mathcal{F}(\rho)=\sum_{i=1}^n\left[a_i \operatorname{SBB}\left(\mu_i, \rho\right)+b_i \operatorname{SBB}\left(\rho, \mu_i\right)\right]$. For the symmetric objective, take $a_i=b_i=\lambda_i / 2$. For the Markovian objective, take $a_1=\lambda_1, b_2=\lambda_2$, and all other coefficients zero. Since the Wasserstein bound holds in either orientation, combining it with the joint weak lower semicontinuity of SBB, established from the encoder and decoder cases above, yields the conclusion.


\paragraph{Proof of Ito isometry step.} \label{proof:ito}
Consider
$$
M=\int_0^T \sigma_t d W_t
$$
Here $W_t$ is a $d$-dimensional Brownian motion, and $\sigma_t$ is a matrix-valued process. The stochastic integral is vector-valued. The Ito isometry says:
$$
\mathbb{E}\left\|\int_0^T \sigma_t d W_t\right\|^2=\mathbb{E} \int_0^T\left\|\sigma_t\right\|_F^2 d t .
$$
Here $\left\|\sigma_t\right\|_F$ is the Frobenius norm:
$$
\left\|\sigma_t\right\|_F^2=\sum_{i=1}^d \sum_{j=1}^d\left(\sigma_t^{i j}\right)^2 .
$$
To see why, write the $i$-th coordinate of $M$ :
$$
M^i=\sum_{j=1}^d \int_0^T \sigma_t^{i j} d W_t^j .
$$
Then
$$
\|M\|^2=\sum_{i=1}^d\left(M^i\right)^2
$$
By Itô isometry and independence/orthogonality of Brownian coordinates,
$$
\mathbb{E}[\left(M^i\right)^2]=\mathbb{E} \int_0^T \sum_{j=1}^d\left(\sigma_t^{i j}\right)^2 d t
$$
Summing over $i$,

$$
\mathbb{E}\|M\|^2=\sum_{i=1}^d \mathbb{E}[\left(M^i\right)^2]=\mathbb{E} \int_0^T \sum_{i=1}^d \sum_{j=1}^d\left(\sigma_t^{i j}\right)^2 d t .
$$

Therefore,
$$
\mathbb{E}\left\|\int_0^T \sigma_t d W_t\right\|^2=\mathbb{E} \int_0^T\left\|\sigma_t\right\|_F^2 d t .
$$
\qedbox

\paragraph{Weak lower semicontinuity in the source marginal}
\begin{lemma}[Weak lower semicontinuity in the source marginal]\label{lem:lower_semicontinuity}
Fix $\nu\in\mathcal{P}_2(\R^d)$. If
\[
\rho_n\rightharpoonup\rho
\]
weakly and $\rho_n,\rho\in\mathcal{P}_2(\R^d)$, then
\[
\SBB_{\beta,\varepsilon,T}(\rho,\nu)
\le
\liminf_{n\to\infty}
\SBB_{\beta,\varepsilon,T}(\rho_n,\nu).
\]
\end{lemma}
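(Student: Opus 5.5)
We plan to work with the exact triple representation of Proposition~\ref{prop:triple-static}, which recasts $\SBB_{\beta,\varepsilon,T}(\rho_n,\nu)$ as a minimization of $\mathcal J_{\beta,\varepsilon,T}$ over $\mathcal A(\rho_n,\nu)$. Lower semicontinuity of a value function in a marginal then follows from three ingredients: existence of near-optimal plans, tightness of these plans, and closedness of the constraint together with lower semicontinuity of $\mathcal J$ under weak convergence.

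\textbf{Setup and tightness.} We may assume $L:=\liminf_n \SBB(\rho_n,\nu)<\infty$ and pass to a subsequence attaining it, with $\SBB(\rho_n,\nu)\le C$. For each $n$ we choose $\Gamma_n\in\mathcal A(\rho_n,\nu)$ with $\mathcal J(\Gamma_n)\le \SBB(\rho_n,\nu)+1/n$. Tightness of $(\Gamma_n)$ is then checked coordinate by coordinate.
\begin{itemize}
\item The $X$-marginals $\rho_n$ are tight because they converge weakly.
\item The $Z$-marginal is the fixed measure $\nu$.
\item For $Y$, the entropy bound $\varepsilon H(\Gamma_{n,XY}\mid \rho_n\otimes\gamma_{x,\tau})\le C$ gives tightness of $\Gamma_{n,XY}$: the reference measures $\rho_n(dx)\gamma_{x,\tau}(dy)$ are tight, and relative entropy bounded by $C$ with respect to a tight family yields tightness, via the Donsker--Varadhan inequality applied with indicator functions of compact complements.
\item Alternatively, $\int\|y-z\|^2\,d\Gamma_n\le 2C/\beta$, combined with the second moment of $\nu$, bounds $\E\|Y\|^2$ uniformly.
\end{itemize}
By Prokhorov, a further subsequence satisfies $\Gamma_n\rightharpoonup\Gamma$, with $\Gamma_X=\rho$ and $\Gamma_Z=\nu$.

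\textbf{Lower semicontinuity of $\mathcal J$.} The entropy term is jointly lower semicontinuous in its two arguments. The reference $\rho_n(dx)\gamma_{x,\tau}(dy)$ converges weakly to $\rho(dx)\gamma_{x,\tau}(dy)$, since $x\mapsto\gamma_{x,\tau}$ is weakly continuous. Hence $H(\Gamma_{XY}\mid\rho\otimes\gamma)\le\liminf_n H(\Gamma_{n,XY}\mid\rho_n\otimes\gamma)$. The quadratic term $\int\|y-z\|^2\,d\Gamma$ is lower semicontinuous because its integrand is continuous and nonnegative. Together these give $\mathcal J(\Gamma)\le L$.

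\textbf{Main obstacle: closedness of the martingale-type constraint.} The remaining step is to show $\Gamma\in\mathcal A(\rho,\nu)$, that is, that $\E_\Gamma[Y-Z\mid X]=0$ survives the weak limit. We will test against bounded continuous functions: the constraint is equivalent to
\[
\int f(x)\,(y-z)\,\Gamma(dx\,dy\,dz)=0\qquad\text{for all } f\in C_b(\R^d).
\]
This identity holds for every $\Gamma_n$. Passing it to the limit requires uniform integrability of $\|y-z\|$ under $\Gamma_n$, which follows from the uniform second-moment bound $\int\|y-z\|^2\,d\Gamma_n\le 2C/\beta$. Truncating $y-z$ at level $R$ and letting first $n\to\infty$ and then $R\to\infty$ gives the identity for $\Gamma$. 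Finally, $\Gamma$ has finite second moments: $\rho\in\Ptwo$, $\nu\in\Ptwo$, and $Y$ has a bounded second moment by the estimate above together with lower semicontinuity of moments. Therefore
\[
\SBB(\rho,\nu)\le\mathcal J(\Gamma)\le L,
\]
which is the claim.
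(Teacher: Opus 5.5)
Your proof is correct, but it takes a genuinely different route from the paper's.

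The paper argues on the dual side. It writes $\SBB(\rho,\nu)=\sup_{\psi\in\Psi}\{\int\psi\,d\nu-\int v^\psi(0,\cdot)\,d\rho\}$ and shows that each $v^\psi(0,\cdot)$ is finite, bounded above by $\sup\psi$, and continuous. Continuity comes from the fact that $x\mapsto v^\psi(0,x)+\|x\|^2/(2T)$ is a supremum of affine functions, obtained by shifting a controlled path started at $0$ by $(1-t/T)x$. The paper then applies the Portmanteau theorem to the bounded-below continuous function $-v^\psi(0,\cdot)$, and concludes because a supremum of lower semicontinuous functionals is lower semicontinuous.

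You work on the primal side through Proposition~\ref{prop:triple-static}, with near-optimal plans $\Gamma_n\in\mathcal A(\rho_n,\nu)$. Tightness comes from weak convergence of $\rho_n$ together with the cost bound on $\int\|y-z\|^2\,d\Gamma_n$. Lower semicontinuity of $\mathcal J$ comes from joint lower semicontinuity of relative entropy, using that the references $\rho_n(dx)\gamma_{x,\tau}(dy)$ converge weakly. The conditional-mean constraint is closed by uniform integrability of $y-z$. All of these steps check out. In particular, your key observation is the right one: the quadratic part of $\mathcal J$ supplies exactly the $L^2$ control on $Y-Z$ needed to pass $\int f(x)(y-z)\,d\Gamma_n=0$ to the limit.

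The two routes trade off as follows.
\begin{itemize}
\item The paper's route is shorter and needs no compactness. However, it rests on an imported strong-duality statement for the class $\Psi$ of bounded-above continuous potentials, and on the stochastic-control definition of $v^\psi$.
\item Your route rests instead on the static triple representation, which the paper derives from the weak-transport formulation of Hasenbichler et al., plus standard weak-convergence tools. As a by-product, taking $\rho_n\equiv\rho$, it gives attainment of the infimum over $\mathcal A(\rho,\nu)$. The dual argument does not provide this.
\end{itemize}
The Donsker--Varadhan tightness remark is unnecessary, since your second-moment bound on $Y$ already suffices.
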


\begin{proof}
Define $\mathcal{P}_x=\left\{\mathbb{P} \in \mathcal{P}: X_0=x \quad \text { $\mathbb{P}$-a.s., } J(\mathbb{P})<\infty\right\}$. Let $\Psi$ be a class of continuous terminal potentials $\psi$ that are bounded above and have at most quadratic growth. By the dual formulation of the SBB problem,
\[
\SBB_{\beta,\varepsilon,T}(\rho,\nu)
=
\sup_{\psi\in\Psi}
\left\{
\int_{\R^d}\psi(y)\,\nu(dy)
-
\int_{\R^d}v^\psi(0,x)\,\rho(dx)
\right\}.
\]
with 
\begin{equation*}
    v^\psi(0,x)
:=
\sup_{X_0=x}
\left\{
\mathbb E[\psi(X_T)]-J(\mathbb{P})
\right\},
\end{equation*}

We first show that $v^\psi(0, \cdot)$ is finite, continuous, and bounded above. Since $J(\mathbb{P})\ge0$,
\begin{equation} \label{eq:dual_relation}
    v^\psi(0,x)\le \sup_{y\in\R^d}\psi(y).
\end{equation}
On the other hand, the deterministic path
\[
X_t=\left(1-\frac{t}{T}\right)x
\]
satisfies $X_0=x$ and $X_T=0$, with drift $-x/T$ and volatility $0$. Therefore
\[
v^\psi(0,x)
\ge
\psi(0)
-\frac{\|x\|^2}{2T}
-\frac{\beta\varepsilon dT}{2}.
\]
Hence $v^\psi(0, \cdot)$ is finite everywhere. Now fix $\mathbb{Q}\in\mathcal P_0$ and define, under $\mathbb{Q}$,
\[
X_t^x:=X_t+\left(1-\frac{t}{T}\right)x,
\qquad
\mathbb{P}^x:=\operatorname{Law}_{\mathbb{Q}}(X^x).
\]
Then $\mathbb{P}^x\in \mathcal{P}_x$, and
\[
J(\mathbb{P}^x)
=
J(\mathbb{Q})-\frac{x}{T} \mathbb E_{\mathbb{Q}}[X_T]
+\frac{\|x\|^2}{2T}.
\]
Since this translation is invertible, we obtain
\[
v^\psi(0,x)+\frac{\|x\|^2}{2T}
=
\sup_{\mathbb{Q}\in\mathcal P}
\left\{
\mathbb E_{\mathbb{Q}}[\psi(X_T)]-J(\mathbb{Q})
+\frac{x}{T} \mathbb E_{\mathbb{Q}}[X_T]
\right\}.
\]

The right-hand side is a supremum of affine functions of $x$, hence is convex. Since it is finite on all of $\R^d$, it is continuous. Therefore $v^\psi(0, \cdot)$ is continuous. Moreover, by \ref{eq:dual_relation}, $-v^\psi$ is continuous and bounded below. Hence, by the Portmanteau theorem,
\[
\int_{\R^d}-v^\psi(0,x)\,\rho(dx)
\le
\liminf_{n\to\infty}
\int_{\R^d}-v^\psi(0,x)\,\rho_n(dx).
\]
Therefore, for every $\psi\in\Psi$,
\[
\begin{aligned}
\int\psi\,d\nu-\int v^\psi\,d\rho
&\le
\liminf_{n\to\infty}
\left(
\int\psi\,d\nu-\int v^\psi\,d\rho_n
\right) \\
&\le
\liminf_{n\to\infty}
\SBB_{\beta,\varepsilon,T}(\rho_n,\nu).
\end{aligned}
\]
Taking the supremum over $\psi\in\Psi$ proves the claim.
\end{proof}

\subsection{Proof for the Gaussian geometry of the decoder SBB Barycenter}
\label{app:main_proof_gaussian}
\subsubsection{Proof for Proposition \ref{prop:triple-static}}
\begin{proof}
The work of \cite{hasenbichler2026weak} gives the pointwise static SBB cost as
\begin{equation*}
C_{\beta,\varepsilon,T}(x,\eta)
=
\inf_{\substack{
\kappa\in\mathcal P_2(\mathbb R^d)\\
\overline\kappa=\overline\eta
}}
\left\{
\varepsilon
H(\kappa\mid\gamma_{x,\tau})
+
\frac{\beta}{2}
W_2^2(\kappa,\eta)
\right\},
\label{eq:point-static}
\end{equation*}
and the global weak-transport formulation is
\begin{equation*}
\SBB_{\beta,\varepsilon,T}(\mu,\nu)
=
\inf_{\pi\in\Pi(\mu,\nu)}
\int
C_{\beta,\varepsilon,T}(x,\pi_x)\,\mu(dx).
\label{eq:global-static}
\end{equation*}

Let \(\Gamma\in\mathcal A(\mu,\nu)\), and disintegrate it as
\[
\Gamma(dx\,dy\,dz)
=
\mu(dx)\Gamma_x(dy\,dz).
\]
Set
\[
\kappa_x
:=
(\operatorname{pr}_Y)_\#\Gamma_x,
\qquad
\eta_x
:=
(\operatorname{pr}_Z)_\#\Gamma_x.
\]
Because of the conditional-mean constraint,
\[
\overline{\kappa_x} = \overline{\eta_x}
\quad
\mu\text{-a.e.} \qquad \text{ i.e. } \int_{\mathbb{R}^d} y \kappa_x(d y)=\int_{\mathbb{R}^d} z \eta_x(d z) \quad \mu \text {-a.e. }
\]
Chain rule for relative entropy gives
\begin{align*}
H\!\left(
\Gamma_{XY}
\,\middle|\,
\mu(dx)\gamma_{x,\tau}(dy)
\right)
& = H(\Gamma_X \mid \mu) + \int
H((\operatorname{pr}_Y)_\#\Gamma_x \mid\gamma_{x,\tau})\,\Gamma_X(dx) \\
& =\int H(\kappa_x\mid\gamma_{x,\tau})\,\mu(dx)\quad \text{ (as $\Gamma_X \ = \mu$).}
\label{eq:entropy-disintegration-static}
\end{align*}

Moreover, since \(\Gamma_x\) is a coupling of \(\kappa_x\) and \(\eta_x\),
\begin{equation*}
\int\|y-z\|^2\,\Gamma_x(dy\,dz)
\geq
W_2^2(\kappa_x,\eta_x).
\label{eq:conditional-W2}
\end{equation*}
It follows that $\mathcal J_{\beta,\varepsilon,T}(\Gamma) \geq C_{\beta,\varepsilon,T}(x,\eta_x) \quad \forall x$. Then 
$$
\mathcal J_{\beta,\varepsilon,T}(\Gamma)
\geq
\int
C_{\beta,\varepsilon,T}(x,\eta_x)\,\mu(dx)
\geq
\SBB_{\beta,\varepsilon,T}(\mu,\nu).
$$
Taking the infimum over \(\Gamma\) yields
\[
\inf_{\Gamma\in\mathcal A(\mu,\nu)}
\mathcal J_{\beta,\varepsilon,T}(\Gamma)
\geq
\SBB_{\beta,\varepsilon,T}(\mu,\nu).
\]
For the last inequality, we note that
\[
\pi(dx\,dz)
:=
\mu(dx)\eta_x(dz)
\]
belongs to \(\Pi(\mu,\nu)\), hence is feasible for the infimum defining.

Conversely, fix \(\delta>0\). Choose an endpoint coupling
\[
\pi(dx\,dz)=\mu(dx)\pi_x(dz)\in\Pi(\mu,\nu)
\]
and a measurable kernel \(x\mapsto\kappa_x\) such that
\[
\overline{\kappa_x}
=
\overline{\pi_x}
\qquad
\mu\text{-a.e.}
\]
and
\begin{align*}
\int
\Bigg[\varepsilon H(\kappa_x\mid\gamma_{x,\tau})+ \frac{\beta}{2}W_2^2(\kappa_x,\pi_x)\Bigg]\mu(dx) \leq \SBB_{\beta,\varepsilon,T}(\mu,\nu)+\delta.
\label{eq:delta-selector}
\end{align*}

For \(\mu\)-almost every \(x\), choose an optimal quadratic coupling
\[
\lambda_x\in\Pi(\kappa_x,\pi_x).
\]
Such a family can be chosen measurably because the underlying spaces are Polish and the quadratic cost is continuous; alternatively one may use universally measurable \(\delta\)-optimal selectors.

Define
\[
\Gamma(dx\,dy\,dz)
:=
\mu(dx)\lambda_x(dy\,dz).
\]
Then
\[
\Gamma\in\mathcal A(\mu,\nu)
\]
and
\[
\mathcal J_{\beta,\varepsilon,T}(\Gamma)
\leq
\SBB_{\beta,\varepsilon,T}(\mu,\nu)+\delta.
\]
Then by definition of infimum, $\SBB_{\beta,\varepsilon,T}(\mu,\nu) = \inf_{\Gamma\in\mathcal A(\mu,\nu)} \mathcal J_{\beta,\varepsilon,T}(\Gamma).$
\end{proof}

\subsubsection{Proof for Theorem \Cref{thm:projection-main}}
\begin{proof}
If the right-hand side is \(+\infty\), the claim is immediate.
Otherwise fix $\Gamma\in\mathcal A(\mu,\nu)$
with $\mathcal J_{\beta,\varepsilon,T}(\Gamma)<\infty$. Take $(X,Y,Z)\sim\Gamma$.

Let \(\Gamma^G\) denote the jointly Gaussian law on
\((\mathbb R^d)^3\) having exactly the same joint mean vector and joint
covariance matrix as \(\Gamma\). Take $(X^G,Y^G,Z^G)\sim\Gamma^G$. We will show that
\[
\Gamma^G\in\mathcal A(\mu^G,\nu)
\]
and
\[
\mathcal J_{\beta,\varepsilon,T}(\Gamma^G)
\leq
\mathcal J_{\beta,\varepsilon,T}(\Gamma).
\]

\paragraph{Part 1.}
Since \(X^G\) is Gaussian with the same mean and covariance as \(X\),
\[
X^G\sim \mu^G.
\]
Likewise \(Z^G\) is Gaussian and has the same mean and covariance as
\(Z\). Since $Z\sim\nu$ and \(\nu\) is already Gaussian,
\[
Z^G\sim\nu.
\]

Now we prove the preservation of the conditional-mean constraint. Set
\[
D:=Y-Z.
\]
Since
\[
\mathbb E[D\mid X]=0,
\]
we obtain
\[
\mathbb E[D]=0.
\]
Then
\begin{align*}
\Cov(D,X)
&=
\mathbb E
\left[
D(X-\mathbb E[X])^\top
\right]
\nonumber\\
&=
\mathbb E
\left[
\mathbb E[D\mid X]
(X-\mathbb E[X])^\top
\right]
\nonumber\\
&=
0.
\label{eq:D-X-cov-zero}
\end{align*}

Define
\[
D^G:=Y^G-Z^G.
\]
Since \(\Gamma^G\) has the same first and second moments as \(\Gamma\),
\[
\mathbb E[D^G]=0,
\qquad
\Cov(D^G,X^G)=0.
\]
The pair \((D^G,X^G)\) is jointly Gaussian. Zero cross-covariance
therefore implies independence. Hence
\[
\mathbb E[D^G\mid X^G]
=
\mathbb E[D^G]
=
0.
\]
Thus
\[
\mathbb E[Y^G\mid X^G]
=
\mathbb E[Z^G\mid X^G].
\]
Therefore
\[
\Gamma^G\in\mathcal A(\mu^G,\nu).
\]

\paragraph{Part 2.}
Now we work on two components of $\mathcal J_{\beta,\varepsilon,T}$. For the Wasserstein part, because \(\Gamma\) and \(\Gamma^G\) have the same joint first and second
moments,
\begin{align*}
\mathbb E_{\Gamma^G}\|Y^G-Z^G\|^2
&=
\left\|
\mathbb E_\Gamma[Y]-\mathbb E_\Gamma[Z]
\right\|^2 + \Tr\left(\Cov(Y)+\Cov(Z)-\Cov(Y,Z)-\Cov(Z,Y)\right)\\
&=\mathbb E_\Gamma\|Y-Z\|^2.
\label{eq:quadratic-term}
\end{align*}

For the relative entropy term, we want to prove that
$$
H\!\left(\Gamma^G_{X^GY^G}\,\middle|\, \mu^G(dx)\gamma_{x,\tau}(dy)\right) \leq H\!\left(\Gamma_{XY} \,\middle|\,\mu(dx)\gamma_{x,\tau}(dy)\right).
$$

By chain rule for relative entropy, as in Proposition \ref{prop:triple-static}'s proof, we have
\begin{align*}
H\!\left(
\Gamma_{XY}
\,\middle|\,
\mu(dx)\gamma_{x,\tau}(dy)
\right)
& =\int H(\kappa_x\mid\gamma_{x,\tau})\,\mu(dx)
\end{align*}
Similarly, since we have already shown that $\Gamma^G\in\mathcal A(\mu^G,\nu)$, the same disintegration argument yields
\begin{align*}
H\!\left(
\Gamma^G_{X^GY^G}
\,\middle|\,
\mu^G(dx)\gamma_{x,\tau}(dy)
\right)
& =\int H(\kappa^G_x\mid\gamma_{x,\tau})\,\mu^G(dx)
\end{align*}

Note that $\kappa^G_x$ and $\gamma_{x,\tau}$ are two Gaussians. It is the fact that for two Gaussians $\mathcal{N}(m_1, \Sigma_1)$ and $\mathcal{N}(m_2, \Sigma_2)$, the relative entropy has the explicit form
$$
H(\mathcal{N}(m_1, \Sigma_1) \mid \mathcal{N}(m_2, \Sigma_2)) = \frac{1}{2}\left[\operatorname{tr}\left(\Sigma_2^{\dagger} \Sigma_1\right)+\left(m_2-m_1\right)^T \Sigma_2^{\dagger}\left(m_2-m_1\right)-d+\ln \left(\frac{\operatorname{det}^* \Sigma_2}{\operatorname{det}^* \Sigma_1}\right)\right]
$$
where $\operatorname{det}^*$ is pseudodeterminant, i.e. the product of the nonzero eigenvalues. Then the relative entropy function $H\left(\kappa_{x}^G \mid \gamma_{x, \tau}\right)$ a quadratic polynomial in $x$. Because $\mu$ and $\mu^G$ share identical first and second moments, their integrals against any quadratic polynomial are exactly equal, i.e.
$$
\int H(\kappa^G_x\mid\gamma_{x,\tau})\,\mu^G(dx) = \int H(\kappa^G_x\mid\gamma_{x,\tau})\,\mu(dx)
$$

Lemma \ref{lem:gaussian-projection-entropy} below also gives
$$
\int H(\kappa_x\mid\gamma_{x,\tau})\,\mu(dx) \geq \int H(\kappa^G_x\mid\gamma_{x,\tau})\,\mu(dx)
$$
Combining the preceding identities with this inequality, we obtain
\begin{equation*}
\boxed{
H\!\left(
\Gamma^G_{X^GY^G}
\,\middle|\,
\mu^G(dx)\gamma_{x,\tau}(dy)
\right)
\leq
H\!\left(
\Gamma_{XY}
\,\middle|\,
\mu(dx)\gamma_{x,\tau}(dy)
\right).
}
\label{eq:entropy-final-compare}
\end{equation*}

Combining two parts,
\[
\mathcal J_{\beta,\varepsilon,T}(\Gamma^G)
\leq
\mathcal J_{\beta,\varepsilon,T}(\Gamma).
\]
Since
\[
\Gamma^G\in\mathcal A(\mu^G,\nu),
\]
it implies
\[
\SBB_{\beta,\varepsilon,T}(\mu^G,\nu)
\leq
\mathcal J_{\beta,\varepsilon,T}(\Gamma^G)
\leq
\mathcal J_{\beta,\varepsilon,T}(\Gamma).
\]
Taking the infimum over all finite-cost $\Gamma\in\mathcal A(\mu,\nu)$ yields
\[
\SBB_{\beta,\varepsilon,T}(\mu^G,\nu)
\leq
\SBB_{\beta,\varepsilon,T}(\mu,\nu).
\]
\end{proof}

\begin{lemma}
\label{lem:gaussian-projection-entropy}
Let $\mu$ be a probability measure on $\mathcal X$, and for
$\mu$-almost every $x\in\mathcal X$, let $p_x$ be a probability
measure on $\mathbb R^d$ with finite second moment. Let $p^G_{x}$ be the Gaussian measure having the same mean and covariance as
$p_x$. Suppose that $q_x $ is itself Gaussian. Then, whenever the relative entropies are well defined,
\[
H(p_x\mid q_x)
=
H(p_x\mid p^G_{x})
+
H(p^G_{x}\mid q_x).
\]
In particular,
\[
H(p_x\mid q_x)
\geq
H(p^G_{x}\mid q_x)
\qquad
\text{for $\mu$-a.e. }x.
\]
Consequently,
\[
\boxed{
\int_{\mathcal X} H(p_x\mid q_x)\,\mu(dx)
\geq
\int_{\mathcal X} H(p^G_{x}\mid q_x)\,\mu(dx).
}
\]
Equality holds if and only if
\[
H(p_x\mid p^G_{x})=0
\]
for $\mu$-almost every $x$, i.e., $p_x=p^G_{x}$ for
$\mu$-almost every $x$.
\end{lemma}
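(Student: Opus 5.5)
The statement to prove is Lemma \ref{lem:gaussian-projection-entropy}. The plan is to prove the Pythagorean identity $H(p_x\mid q_x)=H(p_x\mid p^G_x)+H(p^G_x\mid q_x)$ pointwise in $x$, and then integrate. Fix $x$ and drop the subscript. First dispose of the degenerate cases. If $p$ is not absolutely continuous with respect to $q$, then $H(p\mid q)=+\infty$. If $q$ is nondegenerate, then $H(p\mid q)<\infty$ forces $p\ll\mathrm{Leb}$, so $\Cov(p)\succ0$ and $p^G$ has a density. In the setting of interest, $q=\gamma_{x,\tau}$ with $\tau>0$, so $q$ is nondegenerate and every term has a density. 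I will therefore assume $p\ll q$, $\Cov(p)\succ 0$, and write $p,p^G,q$ also for the densities.

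The key step is the log-ratio decomposition
\[
\log\frac{p}{q}=\log\frac{p}{p^G}+\log\frac{p^G}{q}.
\]
Integrating against $p$ gives
\[
H(p\mid q)=H(p\mid p^G)+\int \log\frac{p^G}{q}\,dp.
\]
Since both $p^G$ and $q$ are Gaussian, $\log(p^G/q)$ is a quadratic polynomial in $y$ (constant plus linear plus quadratic form). Because $p$ and $p^G$ share their mean and covariance, they share all moments of order at most two. Hence
\[
\int \log\frac{p^G}{q}\,dp=\int \log\frac{p^G}{q}\,dp^G=H(p^G\mid q).
\]
This gives the identity. To justify splitting the integral, I note that the quadratic $\log(p^G/q)$ is $p$-integrable by the finite second moment. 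Then $\int\log(p/p^G)\,dp$ is well defined in $[0,\infty]$, and the identity holds in the extended sense. In particular, $H(p\mid q)=\infty$ if and only if $H(p\mid p^G)=\infty$, because $H(p^G\mid q)<\infty$.

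Nonnegativity of relative entropy then yields $H(p_x\mid q_x)\ge H(p^G_x\mid q_x)$ for $\mu$-a.e. $x$. Integrating in $\mu$ gives the boxed inequality. Measurability of $x\mapsto H(p^G_x\mid q_x)$ follows from the explicit Gaussian formula, which is a measurable function of the moments of $p_x$. For the equality case, integrating the identity shows that equality (with finite integrals) holds if and only if $\int H(p_x\mid p^G_x)\,\mu(dx)=0$. This happens if and only if $H(p_x\mid p^G_x)=0$ $\mu$-a.e., which by strict positivity of relative entropy for distinct measures is equivalent to $p_x=p^G_x$ $\mu$-a.e.

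The main obstacle is the degenerate and infinite bookkeeping: making sure no $\infty-\infty$ arises when splitting the logarithm, and handling possibly singular covariances. I will avoid the singular case by noting that $q_x$ nondegenerate plus finite entropy forces nondegenerate $p_x$. Otherwise I will interpret the statement's "whenever well defined" as restricting to this case, and use pseudo-inverses on the support if needed.
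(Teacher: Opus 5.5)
Your proposal is correct and follows the paper's proof. Both arguments split $\log\frac{p}{q}=\log\frac{p}{p^G}+\log\frac{p^G}{q}$, note that $\log(dp^G_x/dq_x)$ is a polynomial of degree at most two so its $p_x$-integral equals its $p^G_x$-integral $H(p^G_x\mid q_x)$, and then use nonnegativity of $H(p_x\mid p^G_x)$ and integrate in $\mu$. Your extra bookkeeping goes beyond what the paper writes: finite entropy against $\gamma_{x,\tau}$ forces $\Cov(p_x)\succ0$, $p_x$-integrability of the quadratic rules out $\infty-\infty$, and you restrict the equality case to finite integrals. That last restriction is genuinely needed, because the stated \emph{if and only if} fails when both integrals equal $+\infty$.
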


\begin{proof}
For $\mu$-almost every $x$, we have
\[
\begin{aligned}
H(p_x\mid q_x)-H(p_x\mid p^G_{x})
&=
\int_{\mathbb R^d}
\log\left(
\frac{dp^G_{x}}{dq_x}(y)
\right)p_x(dy).
\end{aligned}
\]
Since both $p^G_{x}$ and $q_x$ are Gaussian,
\[
y\longmapsto
\log\left(
\frac{dp^G_{x}}{dq_x}(y)
\right)
\]
is a polynomial of degree at most two in $y$. Since $p_x$ and
$p^G_{x}$ have the same first and second moments, it follows that
\[
\int_{\mathbb R^d}
\log\left(
\frac{dp^G_{x}}{dq_x}
\right)\,dp_x
=
\int_{\mathbb R^d}
\log\left(
\frac{dp^G_{x}}{dq_x}
\right)\,dp^G_{x}.
\]
The right-hand side is precisely
\[
H(p^G_{x}\mid q_x).
\]
Hence
\[
H(p_x\mid q_x)
=
H(p_x\mid p^G_{x})
+
H(p^G_{x}\mid q_x).
\]
The desired inequality follows from the non-negativity of relative
entropy $H(p_x\mid p^G_{x})\geq0$. Integrating the pointwise inequality with respect to $\mu(dx)$ gives
\[
\int_{\mathcal X}H(p_x\mid q_x)\,\mu(dx)
\geq
\int_{\mathcal X}H(p^G_{x}\mid q_x)\,\mu(dx).
\]
Finally, equality holds precisely when
$H(p_x\mid p^G_{x})=0$ for $\mu$-almost every $x$, which is equivalent
to $p_x=p^G_{x}$ $\mu$-almost everywhere.
\end{proof}

\subsubsection{Proof for Corollary \ref{cor:gaussian-main}}
\begin{proof}
Applying Theorem~\ref{thm:projection-main} to each pair $(\rho, \mu_i)$,
\[
\SBB_{\beta,\varepsilon,T}
(\rho^G,\mu_i)
\leq
\SBB_{\beta,\varepsilon,T}
(\rho,\mu_i) \qquad \forall i=1,\dots,n.
\]
Multiplying by \(\lambda_i\) and summing yields the first conclusion.

Since Gaussian laws form a subset of
\(\mathcal P_2(\mathbb R^d)\),
\[
\inf_{\mu\in\mathcal P_2(\mathbb{R^d})} \Bdec(\rho)
\leq
\inf_{m\in\mathbb R^d, S\in\mathbb S_+^d}
\Bdec(\mathcal N(m,S)).
\]
Conversely, for every $\rho$,
\[
\Bdec(\rho^G)
\leq
\Bdec(\rho).
\]
Taking the infimum over $\rho$ gives the reverse inequality, proving the second conclusion. Finally, if $\rho^\star$ minimizes $\Bdec$, then
\[
\inf \Bdec
\leq
\Bdec(\rho^{G\star})
\leq
\mathcal F(\rho^\star)
=
\inf\Bdec,
\]
so $\rho^{G\star}$ is also a minimizer.
\end{proof}

\subsection{Proof for \Cref{thm:covaction-main}}

\begin{proof}
    Let $X_t$ be any admissible semimartingale with finite cost. Define
\[
m_t=\E[X_t],
\qquad
\Sigma_t=\Cov(X_t).
\]
Since
\[
\dd X_t=\alpha_t\dd t+\sigma_t\dd W_t,
\]
we have
\[
\dot m_t=\E[\alpha_t].
\]
Therefore, by Jensen,
\[
\E\|\alpha_t\|^2\geq\|\dot m_t\|^2.
\]
Thus
\[
\int_0^T\frac12\E\|\alpha_t\|^2\dd t
\geq
\int_0^T\frac12\|\dot m_t\|^2\dd t
\geq
\frac{\|m_1-m_0\|^2}{2T},
\]
with equality when $\dot m_t$ is constant in time i.e. $\dot m_t=\frac{m_\nu-m_\mu}{T}$ or
\[
m_t=(1-t/T)m_\mu+(t/T)m_\nu.
\]
Now center the process:
\[
\tilde X_t=X_t-m_t,
\qquad
\tilde\alpha_t=\alpha_t-\dot m_t.
\]
Let
\[
A_t=\E[\sigma_t\sigma_t^\top],
\qquad
C_t=\E[\tilde\alpha_t\tilde X_t^\top].
\]
Differentiating the covariance gives
\[
\dot\Sigma_t=C_t+C_t^\top+A_t.
\]
So, with
\[
S_t=\dot\Sigma_t-A_t,
\]
we need
\[
C_t+C_t^\top=S_t.
\]
For any random vectors $U,V$ with covariance $\Sigma=\E[VV^\top]$, the regression inequality gives
\[
\E\|U\|^2\geq\Tr(C\Sigma^{-1}C^\top),
\qquad
C=\E[UV^\top].
\]
Applying this to $U=\tilde\alpha_t$, $V=\tilde X_t$,
\[
\E\|\tilde\alpha_t\|^2\geq\Tr(C_t\Sigma_t^{-1}C_t^\top).
\]
Minimizing over all $C_t$ such that $C_t+C_t^\top=S_t$ gives
\[
\E\|\tilde\alpha_t\|^2\geq\Gamma_{\Sigma_t}(S_t).
\]
Indeed, write $C=K\Sigma$. Then
\[
C+C^\top=K\Sigma+\Sigma K^\top.
\]
The minimizer is the symmetric $K$ solving
\[
K\Sigma+\Sigma K=S,
\]
and the minimum is
\[
\Tr(K\Sigma K)=\Gamma_\Sigma(S).
\]
Next, consider the diffusion term,
\[
\E\|\sigma_t-\sqrt{\varepsilon}\Id\|_F^2
=\Tr(A_t)-2\sqrt{\varepsilon}\,\E\Tr(\sigma_t)+\varepsilon d.
\]
For fixed $A_t=\E[\sigma_t\sigma_t^\top]$, the quantity is minimized by deterministic symmetric volatility
\[
\sigma_t=A_t^{1/2},
\]
giving
\[
\E\|\sigma_t-\sqrt{\varepsilon}\Id\|_F^2
\geq
\|A_t^{1/2}-\sqrt{\varepsilon}\Id\|_F^2.
\]
Combining the drift and diffusion lower bounds yields
\[
\mathsf{SBB}(\mu,\nu)
\geq
\frac{\|m_\nu-m_\mu\|^2}{2T}+\G_{\beta,\varepsilon,T}(\Sigma_\mu,\Sigma_\nu).
\]
It remains to show equality for Gaussian endpoints. Choose optimal paths $(\Sigma_t,A_t)$ for the covariance problem and define $K_t$ by
\[
K_t\Sigma_t+\Sigma_tK_t=\dot\Sigma_t-A_t.
\]
Consider the affine Gaussian SDE
\[
\dd X_t=\dot m_t\dd t+K_t(X_t-m_t)\dd t+A_t^{1/2}\dd W_t,
\qquad
X_0\sim\N(m_\mu,\Sigma_\mu).
\]
Then $X_t$ remains Gaussian, its mean is $m_t$, and its covariance satisfies
\[
\dot\Sigma_t=K_t\Sigma_t+\Sigma_tK_t+A_t.
\]
Thus $X_T\sim\N(m_\nu,\Sigma_\nu)$. Its cost is exactly the lower bound. Therefore equality holds.
\end{proof}

\clearpage

\clearpage
\section{DUAL POTENTIALS AND THE PARTICLE SCHEME}
\label{app:particles}
\subsection{Dual Formulation of SBB}

Thanks to \cite{alouadi2026lightsbb}, the pairwise SBB dual has the form
\begin{equation}\label{eq:sbb-dual}
\SBB(\mu_0,\mu_T)
=
\sup_{(\psi,v)}
\left\{
\int \psi(x)\,\mu_T(\dd x)
-
\int v(0,x)\,\mu_0(\dd x)
\right\},
\end{equation}
where $v$ solves a Hamilton--Jacobi--Bellman equation
\begin{equation}\label{eq:HJB}
\partial_t v + H_{\beta,\eps}^*(\nabla v,D^2v)=0,
\qquad
v(T,\cdot)=\psi,
\end{equation}
under the curvature constraint
\begin{equation*}
D^2v(t,x) \prec \beta I.
\end{equation*}
The optimal feedback controls are
\begin{equation*}\label{eq:controls-v}
\alpha_t^*(x)=\nabla v^*(t,x),
\qquad
\sigma_t^*(x)=\sqrt{\eps}\left(I-\frac1\beta D^2v^*(t,x)\right)^{-1}.
\end{equation*}

The reduced SBB system introduces heat potentials $h_t^*$ and maps $Y_t$ through
\begin{equation*}\label{eq:Phi-def}
\Phi_t(y)=\frac{|y|^2}{2}+\frac{\eps}{\beta}\log h_t^*(y),
\qquad
Y_t=(\nabla\Phi_t)^{-1}.
\end{equation*}
Equivalently,
\begin{equation*}\label{eq:Y-v-identity}
Y_t(x)=x-\frac1\beta \nabla v^*(t,x).
\end{equation*}
Hence
\begin{equation}\label{eq:grad-v-Y}
\nabla v^*(t,x)=\beta\bigl(x-Y_t(x)\bigr).
\end{equation}
The optimal controls can also be written as
\begin{equation}\label{eq:controls-h}
\alpha_t^*(x)=\eps\nabla\log h_t^*\bigl(Y_t(x)\bigr),
\qquad
\sigma_t^*(x)=\sqrt{\eps}\,D^2\Phi_t\bigl(Y_t(x)\bigr).
\end{equation}
Combining \eqref{eq:grad-v-Y} and \eqref{eq:controls-h} gives the computationally crucial identity
\begin{equation*}\label{eq:key-force-identity}
\boxed{
\nabla v^*(t,x)
=\beta\bigl(x-Y_t(x)\bigr)
=\eps\nabla\log h_t^*\bigl(Y_t(x)\bigr).
}
\end{equation*}

\subsection{First Variation of SBB}
Our result holds under the regularity assumptions made on the dual solutions. We leave it to future work to precisely quantify under what conditions on the measures those assumptions will hold.

We first show that optimal dual potentials provide convex subgradients with respect to linear perturbations of the marginals. Recall that for $F$ to be a functional on probability measures, A function $f_\rho$
is a \emph{linear functional subgradient (or convex subgradient)} at $\rho$ if
\begin{equation*}
  F(\nu)-F(\rho)\ge\int f_\rho\,d(\nu-\rho)
  \label{eq:sbb-linear-subgradient}
\end{equation*}
for every measure $\nu$ in the domain under consideration.
We write $f_\rho\in\partial_{\mathrm{lin}}F(\rho)$. 

\subsubsection*{Encoder Marginal}
Fix $\mu$ and define $F_{\mu}(\rho):=\SBB(\mu,\rho)$. The dual representation can be written as
\begin{equation}\label{eq:terminal-conjugate}
F_{\mu}(\rho)
=
\sup_{\psi}\left\{
\langle \psi,\rho\rangle-\mathcal A_{\mu}(\psi)
\right\},
\end{equation}
where $\mathcal A_{\mu}$ is the value term induced by the HJB solution at time $0$. Indeed, fix $\mu$ in the pairwise SBB dual \Cref{eq:sbb-dual}. For each terminal potential $\psi$, solve the HJB backward and call the solution $v^\psi$. Define $\mathcal{A}_{\mu}(\psi):=\int v^\psi(0, x) \mu(d x)$; then the dual becomes \Cref{eq:terminal-conjugate}

Let $\psi^*$ be an active maximizer at $\rho$. Then for any perturbation $\tilde\rho$,
\begin{align*}
F_{\mu}((1-\theta)\rho+\theta\tilde\rho)
&\ge
\langle \psi^*,(1-\theta)\rho+\theta\tilde\rho\rangle
-\mathcal A_{\mu}(\psi^*)\\
&=
F_{\mu}(\rho)
+\theta\int \psi^*\,\dd(\tilde\rho-\rho).
\end{align*}
Thus $\psi^*$ is a convex subgradient of $F_{\mu}$ at $\rho$.  

\subsubsection*{Decoder Marginal}

Fix $\mu$ and define $G_\mu(\rho):=\SBB(\rho, \mu)$. From the two-marginal dual,
\begin{equation*}\label{eq:source-dual}
G_\mu(\rho) = \sup_{(\psi,v)} \left\{
\langle \psi,\mu\rangle-\langle v(0,\cdot),\rho\rangle
\right\}.
\end{equation*}
Let $(\psi^*,v^*)$ be an active dual pair at $\rho$.  Then
\begin{align*}
G_\mu((1-\theta)\rho+\theta\tilde\rho)
&\ge
\langle \psi^*,\mu\rangle
-
\langle v^*(0,\cdot),(1-\theta)\rho+\theta\tilde\rho\rangle\\
&=G_\mu(\rho)
-\theta\int v^*(0,x)\,\dd(\tilde\rho-\rho)(x).
\end{align*}
Thus $-v^*(0,\cdot)$ is a subgradient.  

We now state assumptions under which a convex subgradient could be used to obtain the \emph{first variation}. Recall that A \emph{first variation} is a function $f_\rho$ satisfying
\begin{equation}
  \lim_{\theta\downarrow0}
  \frac{F((1-\theta)\rho+\theta\nu)-F(\rho)}{\theta}
  =
  \int f_\rho\,d(\nu-\rho)
  \label{eq:sbb-first-variation-definition}
\end{equation}
for every admissible direction $\nu-\rho$.  

Let $\mathcal U\subset\mathcal P_2$ be a $W_2$-open neighborhood of $\rho$, and assume:
\begin{enumerate}
\item \textbf{Finite values and attained duals.}
  The dual solution exists for every $\sigma\in\mathcal U$, $F(\sigma)<\infty$. Select its active marginal coefficient $f_\sigma$ and normalize
  it by $f_\sigma(0)=0$.
  \item \textbf{Uniform spatial regularity.}
  The selected coefficients belong to $C^1(\mathbb R^d)$, and there
  are constants $L_x,C_0<\infty$ such that, for every
  $\sigma\in\mathcal U$ and $x,y\in\mathbb R^d$,
  \begin{equation}
    |\nabla f_\sigma(x)-\nabla f_\sigma(y)|
       \le L_x|x-y|,
    \qquad
    |\nabla f_\sigma(0)|\le C_0.
    \label{eq:sbb-spatial-regularity}
  \end{equation}
  \item \textbf{Stability with respect to the marginal.}
  If $\sigma_n,\rho\in\mathcal U$ and
  $W_2(\sigma_n,\rho)\to0$, then
  \begin{equation}
    \nabla f_{\sigma_n}\longrightarrow\nabla f_\rho
    \quad\text{locally uniformly on }\mathbb R^d.
    \label{eq:sbb-marginal-stability}
  \end{equation}
\end{enumerate}

The normalization and \eqref{eq:sbb-spatial-regularity} imply
\begin{equation}
  |\nabla f_\sigma(x)|\le C_0+L_x|x|,
  \qquad
  |f_\sigma(x)|\le C_0|x|+\frac{L_x}{2}|x|^2.
  \label{eq:sbb-growth}
\end{equation}
Consequently, all selected coefficients are integrable against
measures in $\mathcal P_2$, and
$\nabla f_\rho\in L^2(\rho;\mathbb R^d)$.
Furthermore, \eqref{eq:sbb-marginal-stability} and the normalization give $f_{\sigma_n}\to f_\rho$ locally uniformly.

When the dual is attained at both $\rho$ and $\sigma$, then we have two-sided dual bound $\langle f_\rho,\sigma-\rho\rangle \le F(\sigma)-F(\rho) \le\langle f_\sigma,\sigma-\rho\rangle.$  Now define
$\rho_\theta=(1-\theta)\rho+\theta\nu$.
Since
$W_2^2(\rho,\rho_\theta)\le\theta W_2^2(\rho,\nu)$,
the measures $\rho_\theta$ lie in $\mathcal U$ for sufficiently
small $\theta>0$. The two-sided bound gives
\[
  \int f_\rho\,d(\nu-\rho)
  \le
  \frac{F(\rho_\theta)-F(\rho)}{\theta}
  \le
  \int f_{\rho_\theta}\,d(\nu-\rho).
\]
The normalized coefficients $f_{\rho_\theta}$ converge locally uniformly to $f_\rho$. By \eqref{eq:sbb-growth}, they are dominated in absolute value by a common quadratic function, integrable against $\rho+\nu$. Thus the integral on the right converges to the integral on the left. This gives the first variation.

Put everything together, we get
\begin{equation*}\label{eq:first-var-terminal}
\boxed{
\frac{\delta}{\delta\rho}\SBB(\mu,\rho)=\psi_{\mu\to\rho}^*
\qquad\text{up to an additive constant.}
}
\end{equation*}
and therefore
\begin{equation*}\label{eq:wgrad-terminal}
\nabla_W \SBB(\mu,\rho)(x)=\nabla\psi_{\mu\to\rho}^*(x).
\end{equation*}
Also,
\begin{equation*}\label{eq:first-var-source}
\boxed{
\frac{\delta}{\delta\rho}\SBB(\rho,\mu)
=-v^*_{\rho\to\mu}(0,\cdot)
\qquad\text{up to an additive constant.}
}
\end{equation*}
and therefore
\begin{equation*}\label{eq:wgrad-source}
\nabla_W\SBB(\rho,\mu)(x)=-\nabla v^*_{\rho\to\mu}(0,x).
\end{equation*}

\clearpage
\subsection{Pseudocode} \label{app:pseudocode}
\begin{breakablealgorithm}
\caption{SBB-Barycenters Hybrid Wasserstein Gradient Particle Descent Scheme}
\label{alg:sbb-symmetric-barycenter-particles}
\begin{algorithmic}[1]
\Require Data measures $\{\mu_i\}_{i=1}^N$ with weights $\{\lambda_i\}_{i=1}^N$; $\beta,\varepsilon,T$ such that $\beta T > 1$; Number of barycenter particles $M$; Outer iterations $K$; Step sizes $\{\eta_k\}_{k=0}^{K-1}$; (Optional) Terminal smoothing schedule $\{\delta_k\}_{k=0}^{K-1}$, with $0 < \delta_k \ll T$; Tolerance $\epsilon$; \textsc{Pairwise\_SBB\_Solver} (e.g. $\textsc{LightSBB}(\mu_{\mathrm{src}},\mu_{\mathrm{tgt}};\texttt{warm\_start})$) which returns a representation of the pairwise bridge; \textsc{Mode} $\in$ \{Decoder, Encoder, Symmetric, Markovian\}

\State Initialize particles $X_1^0,\dots,X_M^0$ for $\rho^0$
\State (Optional) Initialize warm starts $W_{i,\mathrm{dec}}$ and $W_{i,\mathrm{enc}}$ for all $i$

\For{$k=0,1,\dots,K-1$}

    \State Form the empirical barycenter iterate
    \[
        \rho^k = \frac{1}{M}\sum_{m=1}^M \delta_{X_m^k}.
    \]

    \ForAll{$i=1,\dots,N$ \textbf{in parallel}}
        \If{\textsc{Mode} $\in$ \{Decoder, Symmetric, Markovian\}}
        \State \Comment{Decoder bridge: $\rho^k \to \mu_i$}
        \State $\texttt{BridgeDec}_i \gets 
         \textsc{Pairwise\_SBB\_Solver}(\rho^k,\mu_i)$
        \ElsIf{\textsc{Mode} $\in$ \{Encoder, Symmetric, Markovian\}}
        \State \Comment{Encoder bridge: $\mu_i \to \rho^k$}
        \State $\texttt{BridgeEnc}_i \gets
         \textsc{Pairwise\_SBB\_Solver}(\mu_i,\rho^k)$
         \EndIf
    \EndFor

    \For{$m=1,\dots,M$}
        \State $\mathfrak v _{\mathrm{dec}} \gets 0, \mathfrak v_{\mathrm{enc}} \gets 0$
        \For{$i=1,\dots,N$}
            \If{\textsc{Mode} $\in$ \{Encoder, Symmetric, Markovian\}}
            \State $g_{\mathrm{enc},i} \gets \beta\bigl(X_m^k-Y_T^{(i,k), enc}(X_m^k)\bigr).$
            \State $\mathfrak v_{\mathrm{enc}} \gets \mathfrak v_{\mathrm{enc}} - \lambda_i g_{\mathrm{enc},i}$
            \ElsIf{\textsc{Mode} $\in$ \{Decoder, Symmetric, Markovian\}}
            \State $g_{\mathrm{dec},i} \gets \beta\bigl(X_m^k-Y_{0}^{(i,k), dec}(X_m^k)\bigr)$
            \State $\mathfrak v_{\mathrm{dec}} \gets \mathfrak v_{\mathrm{dec}} +\lambda_i g_{\mathrm{dec},i}$
            \EndIf
        \EndFor
        \If{\textsc{Mode} = Symmetric}
        \State $\mathfrak v_{\mathrm{sym}} \gets \frac{1}{2}\left(\mathfrak v_{\mathrm{dec}}+\mathfrak v_{\mathrm{enc}}\right)$
        \ElsIf{\textsc{Mode} = Markovian}
        \State $\mathfrak v_{\mathrm{mar}} \gets \lambda_2 g_{\mathrm{dec},2} - \lambda_1 g_{\mathrm{enc},1}$
        \EndIf
    \EndFor

    \For{$m=1,\dots,M$}
        \State \Comment{Particle update according to \textsc{Mode}}
        \State $X_m^{k+1} \gets X_m^k+\eta_k \mathfrak v_{\{\mathrm{enc}, \mathrm{dec}, \mathrm{sym}, \mathrm{mar}\}}$
    \EndFor

    \State \Comment{Stopping check}
    \If{$\frac{1}{M}\sum_{m=1}^M \lVert \mathfrak v_{\{\mathrm{enc}, \mathrm{dec}, \mathrm{sym}, \mathrm{mar}\}}\rVert^2 < \epsilon$}
        \State \textbf{break}
    \EndIf
\EndFor

\Ensure Final particle measure
\[
    \rho^K = \frac{1}{M}\sum_{m=1}^M \delta_{X_m^K}.
\]
\end{algorithmic}
\end{breakablealgorithm}

\begin{algorithm}[htp!]
\caption{Gaussian Specialization of Decoder SBB Barycenter}
\label{alg:gaussian-barycenter}
\begin{algorithmic}[1]
\Require Gaussian targets
$\{\nu_i=\mathcal N(m_i,\Sigma_i)\}_{i=1}^N$;
weights $\{\lambda_i\}_{i=1}^N$;
\textsc{Pairwise\_SBB\_Solver} (e.g. $\textsc{LightSBB}(\mu_{\mathrm{src}},\mu_{\mathrm{tgt}},\texttt{warm\_start})$);
initial covariance $S^0\succ0$;
step sizes $\{\eta_k\}_{k=0}^{K-1}$
\State Compute the barycenter mean
\[
\bar m\gets\sum_{i=1}^N\lambda_i m_i.
\]
\For{$k=0,\ldots,K-1$}
    \For{$i=1,\ldots,N$ \textbf{in parallel}}
        \State Solve the Gaussian SBB bridge
        \[
        \mathcal N(\bar m,S^k)
        \longrightarrow
        \mathcal N(m_i,\Sigma_i).
        \]
        \State Extract its initial linear drift coefficient $K_{i,0}^k$.
    \EndFor
    \State Aggregate
    \[
    \bar K_0^k
    \gets
    \sum_{i=1}^N\lambda_iK_{i,0}^k.
    \]
    \State Set
    \[
    L_k\gets I_d+\eta_k\bar K_0^k.
    \]
    \State Update the covariance
    \[
    S^{k+1}\gets L_kS^kL_k^\top.
    \]
\EndFor
\State \Return
\[
\rho^K=\mathcal N(\bar m,S^K).
\]
\end{algorithmic}
\end{algorithm}

\clearpage

\section{MEAN--COVARIANCE IMPLEMENTATION OF AFFINE PARTICLE UPDATES}
\label{app:moments}
This section gives the additional computational specialization of the particle scheme. The premise is that the pairwise oracle supplies affine update, as in the Gaussian dynamics of Section \ref{sec:gaussian}. 

\subsection{The general affine update}
Let the current measure have mean $m$ and covariance $S$, and suppose its chosen descent velocity is
\begin{equation*}\label{eq:affine-velocity-app}
 u(x)=b+C(x-m).
\end{equation*}
The particle update $X^+=X+\eta u(X)$ can be written as
\[
 X^+=m+\eta b+(I_d+\eta C)(X-m).
\]
Taking its expectation gives
\begin{equation}\label{eq:affine-mean-app}
 m^+=m+\eta b.
\end{equation}
Subtracting this mean from $X^+$ gives $X^+-m^+=(I_d+\eta C)(X-m)$. Hence
\begin{equation}\label{eq:affine-cov-app}
 S^+=(I_d+\eta C)S(I_d+\eta C)^\top.
\end{equation}
These are algebraic identities for any input law with a finite second moment. When the input is Gaussian, the output is also Gaussian, since the update is affine. For a finite cloud, they are exact for the empirical covariance computed with normalization $1/M$.

\subsection{Decoder SBB Gaussian Barycenter update}
For the bridge $\N(m,S)\to\N(m_i,\Sigma_i)$, write the Gaussian affine drift at the initial time as
\[
 g_{i,\mathrm{dec}}(x)=\frac{m_i-m}{T}+K_{i,0}(x-m).
\]
Using the decoder sign in the particle scheme gives
\[
 b_{\mathrm{dec}}=\frac{\bar m-m}{T},\qquad
 C_{\mathrm{dec}}=\sum_i\lambda_iK_{i,0},\qquad
 \bar m=\sum_i\lambda_i m_i.
\]
Equations~\eqref{eq:affine-mean-app}--\eqref{eq:affine-cov-app} give the two updates in the main paper. The covariance coefficient is determined by the fitted bridge, not by a Euclidean covariance gradient introduced in place of that bridge.

\section{EXPERIMENTAL DETAILS AND RESULTS}
\label{app:experiments}

\subsection{Complete toy visualizations}
We illustrate the directional geometry of SBB barycenters using three anisotropic Gaussian distributions in two dimensions, with equal weights $\lambda_i=1 / 3$. The reference setting uses 512 samples per distribution and 512 barycenter particles. Decoder, encoder, and symmetric objectives share the same input samples and Gaussian initialization. Pairwise models use $T=1, \beta=50, \epsilon=1$, and 32 Gaussian potential components. Adam uses a learning rate of $10^{-3}$, mini-batches of 256. Barycenter optimization runs for at most 50 iterations with step size $\eta_k=0.15(0.95)^k$ and a mean squared velocity norm stopping threshold of $10^{-3}$.

Figures~\ref{fig:toy-dec-app}--\ref{fig:toy-sym-app} provide the complete particle evolution for the three SBB barycenter orientations considered in the main text. 
All experiments start from the same three two-dimensional input measures and use the same initialization of the barycenter particles. Each figure contains three panels. 
The \emph{left panel} shows the three input measures, which are kept fixed throughout the optimization. 
The \emph{middle panel} shows the evolution of the barycenter particles during Wasserstein subgradient descent: the trajectories indicate how particles move from their initialization toward the final barycenter configuration. The \emph{right panel} overlays the final barycenter particles in black with the original input measures, making it possible to compare directly the location, shape, and spread selected by each orientation.
\begin{figure}[htbp]
\centering\includegraphics[width=\textwidth]{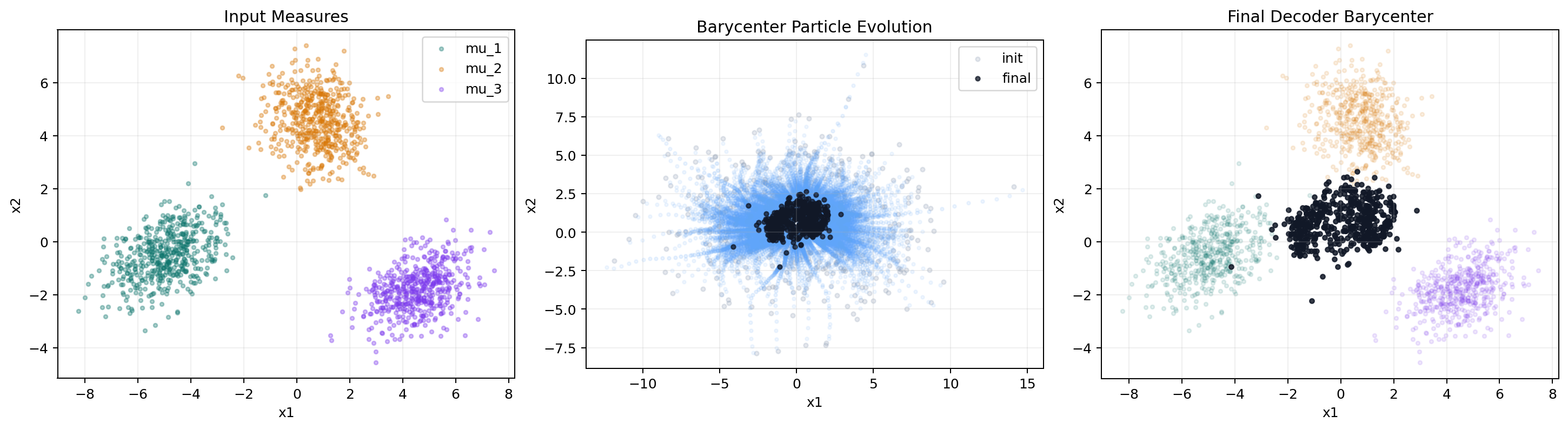}
\caption{Decoder Barycenter visualization}\label{fig:toy-dec-app}
\end{figure}
\begin{figure}[htbp]
\centering\includegraphics[width=\textwidth]{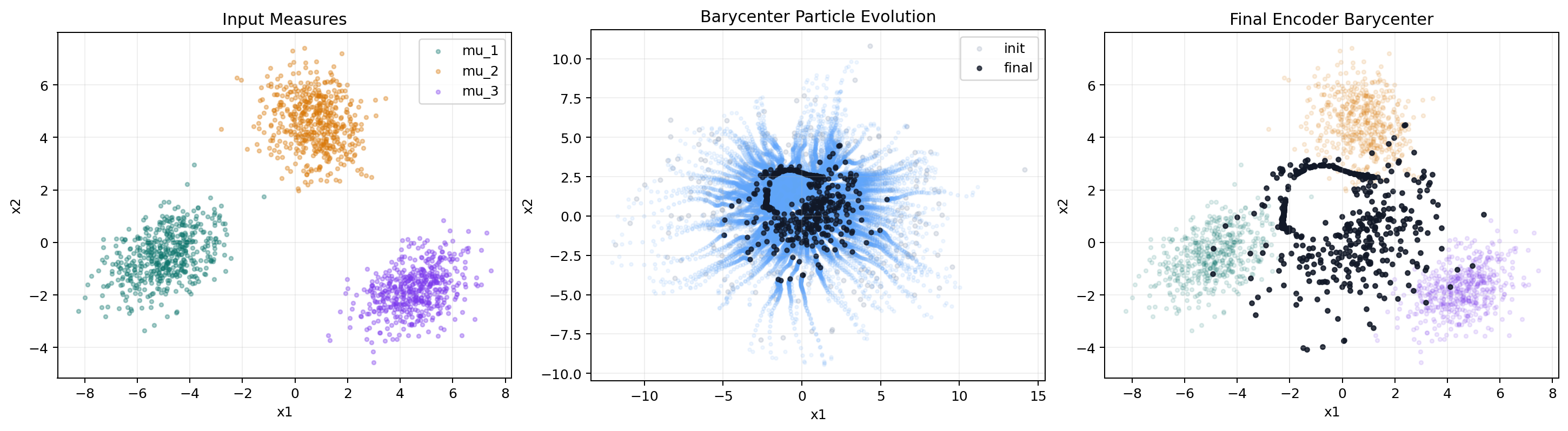}
\caption{Encoder Barycenter visualization}\label{fig:toy-enc-app}
\end{figure}
\begin{figure}[htbp]
\centering\includegraphics[width=\textwidth]{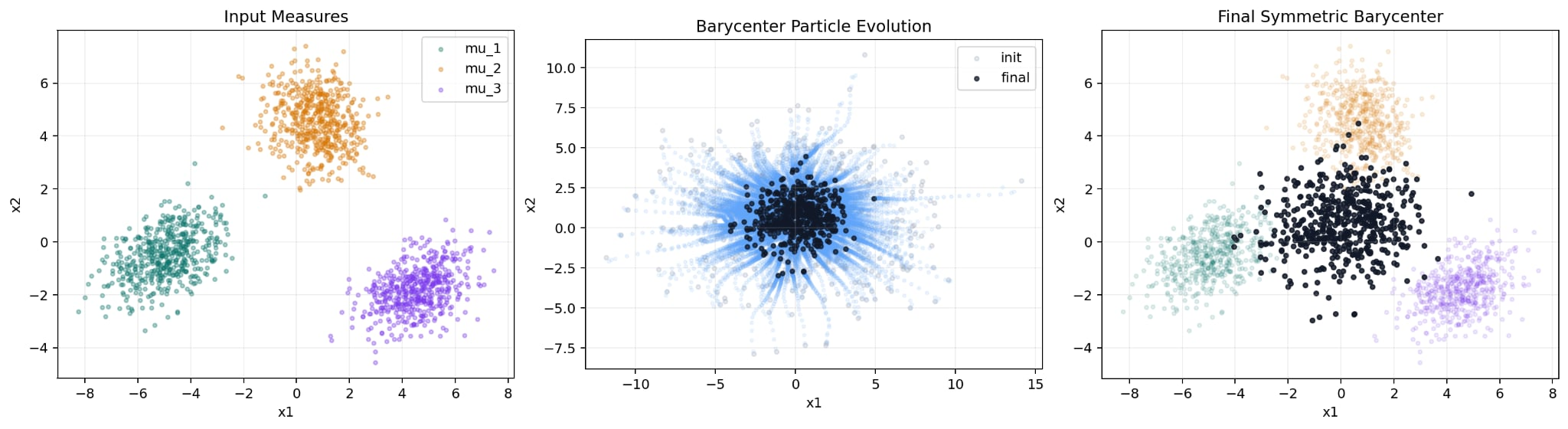}
\caption{Symmetric Barycenter visualization}\label{fig:toy-sym-app}
\end{figure}

\subsection{Spatial MNIST: protocol}
\paragraph{Probability-measure representation.}
We represent each digit image as a probability measure over the two-dimensional pixel plane. 
Given a pixel grid $G$, the nonnegative image intensities are normalized to define
\[
    \rho=\sum_{x\in G} p(x)\delta_x,
    \qquad
    \sum_{x\in G}p(x)=1.
\]
Under this representation, image mass is distributed over spatial locations rather than treated as a vector of pixel intensities. Consequently, a diffusion-like corruption is naturally modeled by spatial convolution of the measure with a Gaussian kernel, rather than by adding independent noise directly to the pixel values. Our experiment follows this viewpoint as a modification of the spatial barycenter experiment of \citet{cuturi2014fast}.

\paragraph{Latent sources and observations.}
We construct the experiment around a latent-source recovery task. We select one approximately centered clean image from each digit class as the latent source $\rho^\star$. From each latent source, we generate $N\in\{32,64,128\}$ corrupted observations, using $N=32$ as the default configuration. The original train/test split is kept fixed throughout the experiment, and the latent test images are never used for hyperparameter selection. Each observed measure is first obtained by applying a random geometric transformation to the latent source. On a $32\times32$ or $40\times40$ canvas, we define
\[
    T_i(x)
    =
    s_i R_{\vartheta_i}(x-x_0)+x_0+b_i,
\]
where $x_0$ denotes the image center, $s_i$ controls scaling, $R_{\vartheta_i}$ is a planar rotation, and $b_i$ is a translation. We sample translations from $[-6,6]^2$ and recenter them so that the average translation is approximately zero; one convenient construction is to use pairs $(b,-b)$. The scale is sampled as
\[
    s_i\sim\operatorname{Unif}(0.7,1.4),
\]
while the optional rotation ablation uses
\[
    \vartheta_i\sim\operatorname{Unif}(-20^\circ,20^\circ).
\]
A sufficiently padded canvas is used to limit boundary artifacts, and any mass loss or renormalization caused by the transformation is recorded. To ensure a fair comparison, all barycenter methods are evaluated using exactly the same geometric transformations and image-rendering procedure.

\paragraph{Blur regimes.}
After the deterministic transformation, we introduce spatial diffusion by convolving the transformed measure with a Gaussian kernel. We consider several regimes in order to separate the effects of deterministic transport and stochastic dispersion. In the common isotropic setting, all observations generated from the same latent source use the same covariance,
\[
    \Omega_i=\sigma^2 I_2,
\]
so that the stochastic component is shared across domains.

Our main SBB-specific setting instead uses heterogeneous anisotropic diffusion. For each observation, we define
\[
    \Omega_i
    =
    R_{\varphi_i}
    \diag\!\left(\sigma_{i,1}^2,\sigma_{i,2}^2\right)
    R_{\varphi_i}^{\top},
\]
where the principal blur scales and orientation vary across observations. In the default configuration, we use
\[
    \sigma_{i,1}\in[0.4,2.0],\qquad
    \sigma_{i,2}\in[0.2,1.2],
    \qquad
    \varphi_i\in[0,\pi].
\]
The resulting observed distribution is therefore
\[
    \nu_i
    =
    (T_i)_\#\rho^\star * \N(0,\Omega_i).
\]

\begin{figure*}[!t]
    \centering
    \includegraphics[width=1\linewidth]{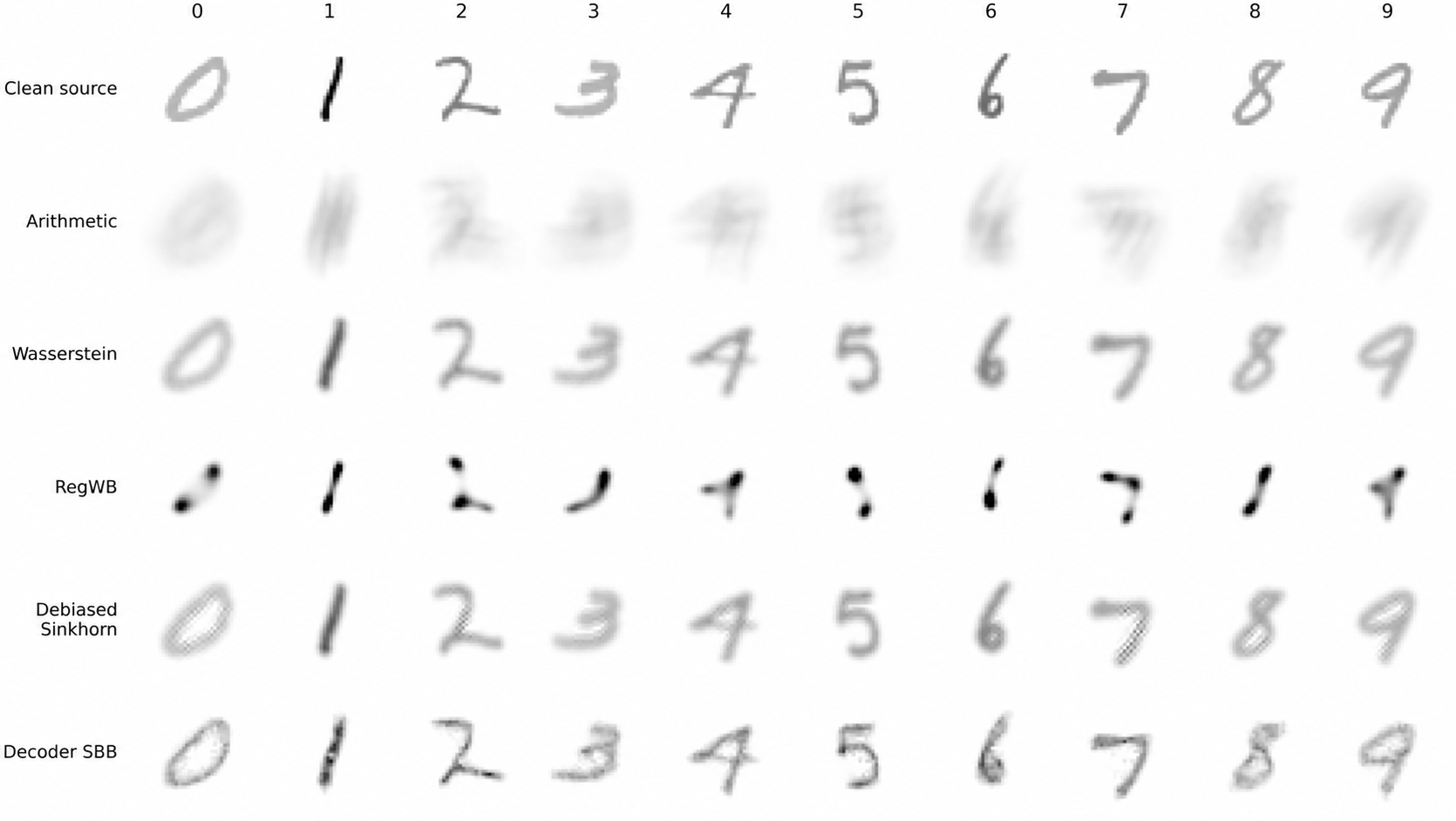}
    \caption{Qualitative latent-source recovery on MNIST.
Each column corresponds to one digit class, from $0$ to $9$.
The first row shows the clean latent source, while the remaining rows show the barycenters reconstructed by the arithmetic mean, Wasserstein barycenter, Regularized Wasserstein Barycenter, debiased Sinkhorn barycenter, and decoder SBB barycenter, respectively.
The decoder SBB reconstruction is generally sharper and more closely preserves the geometry of the clean source, consistent with the quantitative results in \Cref{tab:mnist-main}}
    \label{fig:mnist_visualization}
\end{figure*}

\paragraph{Hyperparameter Setting.}
The selected SBB parameters are $T=1, \beta=10$, and $\epsilon=$ 0.03 , equivalently $\theta=\beta T=10$ and $\tau=\epsilon T=0.03$. Each pairwise model uses 64 diagonal Gaussian potential components, diagonal initialization 0.1, and a learned inverse network with spatial width 64 and time-embedding width 16 . We use 4,096 barycenter particles and sample 4,096 points from each observation. Optimization performs exactly 95 outer updates with step size $\eta_k=0.1(0.995)^k$, with velocity-based early stopping disabled. Every outer update fits the pairwise models through two inner stages, each containing 100 gradient updates for the potential network and 100 for the inverse network. Both networks use Adam with learning rate 0.003 , minibatch size 256, and per-network gradient clipping at norm 1. The terminal smoothing fraction is 0.01 , sampling chunks contain 2,048 points, and the independent pairwise models are batched on the GPU in float32. Hyperparameters were selected using clean-training-image $W_2$ and frozen before test evaluation. The completed comparison uses one test image per digit 0-9, $W_2$ is estimated using 512 samples from each rendered image. The Wasserstein baseline uses exact transport subproblems with at most 100 outer iterations, while debiased Sinkhorn uses fixed regularization 0.03 and a maximum of 200,000 grid iterations. RegWB solves the static empirical objective on the full $40\times40$ pixel grid, using regularization $0.03$ in source-normalized coordinates, at most 4,000 iterative proportional fitting sweeps.

\subsection{Single-cell RNA: protocol}

\paragraph{Preprocessing.}
We normalize the expression matrix and embed all cells from the considered condition into a common $30$-dimensional PCA space. Each time point or cell subset is then represented by the empirical measure $P_t=\frac{1}{n_t}\sum_{j=1}^{n_t}\delta_{x_j^{(t)}}$. The same embedding is used for all competing methods. We analyze the serum and 2i conditions separately and subsample the same number of cells from each distribution when necessary to control computational cost.

\paragraph{Hyperparameter Setting}
\begin{itemize}
    \item \textbf{Common-ancestor reconstruction}. Evaluation uses MET cells from days 7.5-8.5. The supplied log-normalized expression values are preserved, 2,000 genes are selected by log-expression variance. Each run uses 512 cells per input population, 512 barycenter particles, and 512 evaluation cells. Particles are initialized from a Gaussian matching the weighted input mixture's mean and covariance, with covariance stabilization $10^{-4} I$. The selected pairwise parameters are $T=1, \beta=30$, and $\epsilon=0.3$, with 64 diagonal Gaussian components initialized at 0.1 , inverse-network spatial width 64 , time-embedding width 16 , terminal smoothing fraction 0.01 , and sampling chunk size 2,048. Adam uses learning rate 0.005 , minibatch size 256 , and gradient clipping at norm 1 . Each pairwise fit has two stages with 200 and 163 gradient updates, respectively, for each network. Barycenter optimization permits 75 outer iterations with constant step size 0.3 and a stopping threshold of $10^{-3}$ on the mean squared particle velocity. Pairwise network parameters are warm-started between outer iterations, while Adam states are reset for each pairwise fit. Wasserstein and Waddington-OT baselines permit 100 outer iterations with tolerance $10^{-6}$; Sinkhorn uses regularization 0.1 , the dual L-BFGS inner solver, a 10,000 -iteration inner cap, and inner tolerance $10^{-8}$. RegWB uses regularization $0.1$ in source-normalized coordinates, 65536 importance-quadrature nodes, and at most 12,000 iterative proportional fitting sweeps. 
    \item \textbf{Held-out population reconstruction: shared setup.}
We reconstruct the intermediate population at $t_0$ from populations at $t_-=t_0-\Delta/2$ and $t_+=t_0+\Delta/2$, where $\Delta\in\{1,2,3\}$ denotes the total endpoint separation in days. Consequently, the temporal fraction is
$\alpha=(t_0-t_-)/(t_+-t_-)=1/2$.
SBB Decoder and SBB Markovian use separately selected parameters and
adjusted barycenter weights
$\lambda_2=c\alpha/(1-\alpha+c\alpha)=c/(1+c)$ and
$\lambda_1=1-\lambda_2$.
We preserve the provided log-normalized expression, select 2,000 highly
variable genes per condition. The quantitative tuning and assessment runs use 512 cells per endpoint,
512 reconstructed particles, and 512 target cells.
Observed cells are sampled without replacement within each population.
The two SBB variants share an initialization sampled from the chronological
arithmetic mixture $(1-\alpha)P_{t_-}+\alpha P_{t_+}$, independently of $c$.
Unless specified otherwise below, each pairwise model uses time horizon
$T=1$ for all $\Delta$, 64 diagonal Gaussian potential components with
diagonal scale parameters initialized to $0.1$ and uniform initial
component weights, inverse-network spatial and temporal widths 64 and 16, terminal smoothing fraction $0.01$, and sampling chunks of 2,048 points. The full finite-$\beta$ inverse is used throughout.
Adam uses learning rate $0.003$, minibatch size 256, and gradient clipping at norm 1 per network. Each outer update contains two inner stages with 128 gradient steps. Networks are warm-started between outer iterations, whereas Adam is
reinitialized at each outer iteration. We use a constant barycenter step size $\eta$, a selected fixed number $L$ of outer iterations, and no velocity-based early stopping. Arithmetic mixture, Wasserstein displacement interpolation, and Waddington-OT use the chronological weights $(1/2,1/2)$. Waddington-OT uses regularization $0.05$, marginal-relaxation penalties 1 and 50, three growth iterations, the supplied growth-rate metadata, and the actual endpoint separation $\Delta$. RegWB uses chronological weights, regularization $0.1$, and analytic elimination of the central variable; the endpoint coupling is solved with an iteration cap of 12,000.
TrajectoryNet uses three hidden layers of width 64 with LeakyReLU, Adam with learning rate $10^{-3}$ and weight decay $5\times10^{-5}$, and minibatches of 1,000 cells sampled with replacement per endpoint.

\noindent\textbf{One-day intervals.}
We evaluate 64 midpoint triplets, comprising 32 triplets per condition, with $t_-=t_0-0.5$ and $t_+=t_0+0.5$. The global SBB Decoder configuration uses $\beta=100$, $\epsilon=0.3$,
$\eta=0.03$, $c=1$, and $L=60$; the global SBB Markovian configuration uses $\beta=100$, $\epsilon=0.1$, $\eta=0.03$, $c=2$, and $L=60$.

\noindent\textbf{Two-day intervals.}
We evaluate 66 midpoint triplets, comprising 33 triplets per condition, with $t_-=t_0-1$ and $t_+=t_0+1$. The global Decoder configuration uses $\beta=30$, $\epsilon=0.3$,
$\eta=0.1$, $c=1$, and $L=40$; the global Markovian configuration
uses $\beta=100$, $\epsilon=0.1$, $\eta=0.03$, $c=2$, and $L=40$. 

\noindent\textbf{Three-day intervals.}
We evaluate 62 midpoint triplets, comprising 31 triplets per condition, with $t_-=t_0-1.5$ and $t_+=t_0+1.5$. The global Decoder configuration uses $\beta=30$, $\epsilon=0.1$, $\eta=0.1$, $c=2$, and $L=60$ for serum and 40 for 2i; the global Markovian configuration uses $\beta=10$, $\epsilon=0.1$, $\eta=0.3$, $c=1$, and $L=60$ for serum and 40 for 2i. 
\end{itemize}

\clearpage
\subsubsection{1-day interval}
\begin{table}[htp!]
\centering
\small
\caption{Intermediate-state reconstruction 1-day intervals result.}
\label{tab:serum_2i_1day_comparison_overlap}
\resizebox{\linewidth}{!}{%
\begin{tabular}{lcccc}
\toprule
Interpolation Method & $W_2\downarrow$ & MMD$\downarrow$ & Cell-type TV$\downarrow$ & Signature MAE$\downarrow$ \\
\midrule
\multicolumn{5}{l}{\textbf{serum condition}} \\
\midrule
Arithmetic mixture
& $11.9916 \pm 0.1855$
& $0.4044 \pm 0.0120$
& $0.0588 \pm 0.0289$
& $0.0364 \pm 0.0062$ \\
Wasserstein
& $10.9134 \pm 0.2175$
& $0.4189 \pm 0.0155$
& $0.0859 \pm 0.0284$
& $0.0359 \pm 0.0008$ \\
RegWB
& $11.0323 \pm 0.2059$ 
& $0.4154 \pm 0.0151$
& $0.0892 \pm 0.0171$
& $0.0370 \pm 0.0029$ \\
TrajectoryNet
& $14.6255 \pm 0.2413$ 
& $0.5539 \pm 0.0123$
& $0.0658 \pm 0.0119$
& \textcolor{red}{$0.0329 \pm 0.0043$} \\
Waddington-OT
& $10.6621 \pm 0.0752$
& $0.4034 \pm 0.0080$
& $0.2383 \pm 0.0491$
& $0.0385 \pm 0.0036$ \\
SBB Decoder (ours)
& $10.3515 \pm 0.1308$
& $0.3778 \pm 0.0118$
& \textcolor{red}{$0.0534 \pm 0.0336$}
& $0.0375 \pm 0.0097$ \\
SBB Markovian (ours)
& \textcolor{red}{$10.1848 \pm 0.1203$}
& \textcolor{red}{$0.3408 \pm 0.0106$}
& $0.0553 \pm 0.0329$
& $0.0349 \pm 0.0072$ \\
\midrule
\multicolumn{5}{l}{\textbf{2i condition}} \\
\midrule
Arithmetic mixture
& $11.9001 \pm 0.6598$
& $0.1731 \pm 0.0196$
& $0.1536 \pm 0.0530$
& $0.0882 \pm 0.0422$ \\
Wasserstein
& $11.3029 \pm 0.7513$
& $0.1797 \pm 0.0234$
& $0.1745 \pm 0.0368$
& $0.1146 \pm 0.0458$ \\
RegWB
& $11.5855 \pm 0.7610$
& $0.1773 \pm 0.0247$
& $0.1816 \pm 0.0508$
& $0.1128 \pm 0.0392$ \\
TrajectoryNet
& $12.3068 \pm 0.4723$
& $0.1857 \pm 0.0110$
& $0.1700 \pm 0.0175$
& $0.1037 \pm 0.0071$ \\
Waddington-OT
& $12.6290 \pm 1.2882$
& $0.2239 \pm 0.0400$
& $0.1901 \pm 0.0587$
& $0.1024 \pm 0.0323$ \\
SBB Decoder (ours)
& \textcolor{red}{$10.9584 \pm 0.7523$}
& \textcolor{red}{$0.1572 \pm 0.0236$}
& \textcolor{red}{$0.1530 \pm 0.0536$}
& $0.0791 \pm 0.0384$ \\
SBB Markovian (ours)
& $11.5364 \pm 0.5676$
& $0.1698 \pm 0.0191$
& $0.1536 \pm 0.0530$
& \textcolor{red}{$0.0684 \pm 0.0374$} \\
\bottomrule
\end{tabular}%
}
\end{table}

\begin{figure}[htp!]
    \centering
    \subfloat{
        \includegraphics[width=1\linewidth]{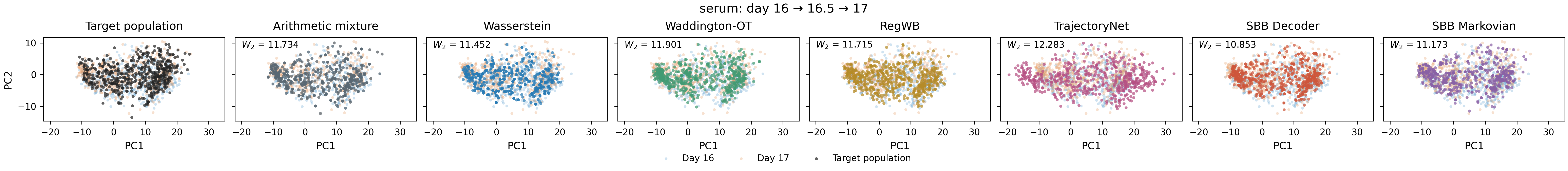}} \\
    \subfloat{
        \includegraphics[width=1\linewidth]{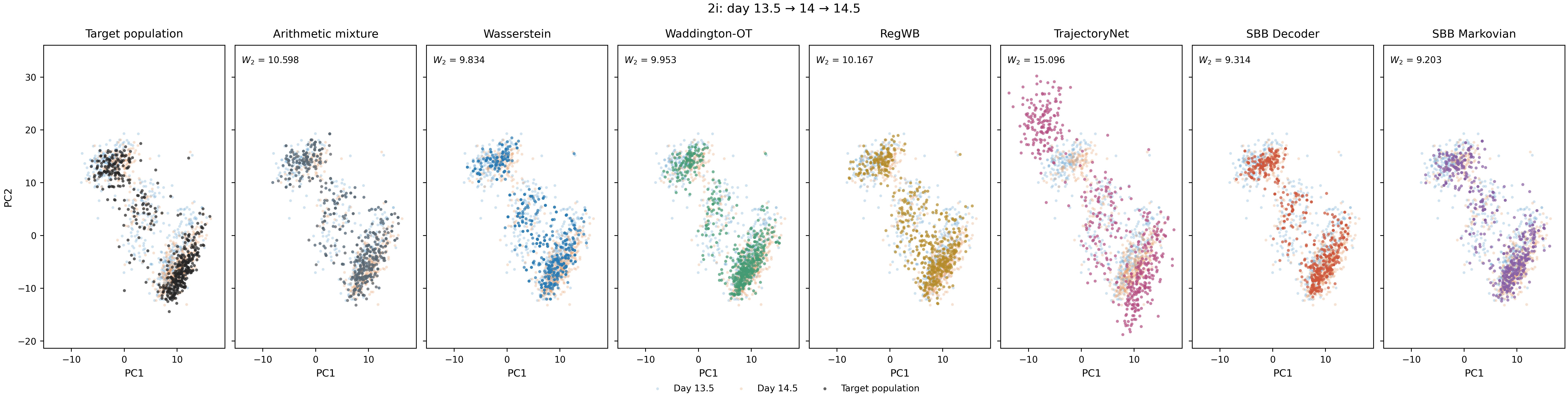}
    } 
    \caption{Some qualitative figures for reconstruction of intermediate cell populations using 1-day intervals.}
    \label{fig:1day_qualitative_reconstruction}
\end{figure}

\clearpage
\subsubsection{2-day interval}
\begin{table}[htp!]
\centering
\small
\caption{Intermediate-state 2-day intervals reconstruction result.}
\label{tab:serum_2i_2day_comparison}
\resizebox{\linewidth}{!}{%
\begin{tabular}{lcccc}
\toprule
Interpolation Method & $W_2\downarrow$ & MMD$\downarrow$ & Cell-type TV$\downarrow$ & Signature MAE$\downarrow$ \\
\midrule
\multicolumn{5}{l}{\textbf{serum condition}} \\
\midrule
Arithmetic mixture
& $12.5039 \pm 0.4085$
& $0.5773 \pm 0.0140$
& $0.2077 \pm 0.0502$
& $0.2725 \pm 0.0218$ \\
Wasserstein
& $12.2979 \pm 0.4909$
& $0.6103 \pm 0.0145$
& $0.1699 \pm 0.0052$
& $0.3029 \pm 0.0146$ \\
RegWB
& $12.4198 \pm 0.4824$
& $0.6058 \pm 0.0139$
& \textcolor{red}{$0.1654 \pm 0.0216$}
& $0.2993 \pm 0.0222$ \\
TrajectoryNet
& $14.1794 \pm 0.5810$
& $0.6762 \pm 0.0221$
& $0.3835 \pm 0.0329$
& $0.3046 \pm 0.0040$ \\
Waddington-OT
& $12.2477 \pm 0.3275$
& $0.6125 \pm 0.0102$
& $0.1660 \pm 0.0254$
& $0.3613 \pm 0.0135$ \\
SBB Decoder (ours)
& \textcolor{red}{$11.6635 \pm 0.3243$}
& $0.5858 \pm 0.0115$
& $0.2070 \pm 0.0559$
& $0.2672 \pm 0.0231$ \\
SBB Markovian (ours)
& $12.0708 \pm 0.3970$
& \textcolor{red}{$0.5652 \pm 0.0124$}
& $0.2077 \pm 0.0502$
& \textcolor{red}{$0.2491 \pm 0.0212$} \\
\midrule
\multicolumn{5}{l}{\textbf{2i condition}} \\
\midrule
Arithmetic mixture
& $13.3831 \pm 0.0608$
& $0.3970 \pm 0.0029$
& $0.2565 \pm 0.0147$
& $0.2382 \pm 0.0171$ \\
Wasserstein
& $13.3287 \pm 0.2158$
& $0.4305 \pm 0.0084$
& \textcolor{red}{$0.1224 \pm 0.0079$}
& $0.2386 \pm 0.0152$ \\
RegWB
& $13.4550 \pm 0.2619$
& $0.4251 \pm 0.0093$
& $0.1283 \pm 0.0236$
& $0.2412 \pm 0.0156$ \\
TrajectoryNet
& $14.9870 \pm 0.5248$
& $0.5024 \pm 0.0230$
& $0.2865 \pm 0.0049$
& $0.2030 \pm 0.0187$ \\
Waddington-OT
& $13.6357 \pm 0.2592$
& $0.4465 \pm 0.0079$
& $0.1465 \pm 0.0156$
& $0.2485 \pm 0.0279$ \\
SBB Decoder (ours)
& \textcolor{red}{$12.1139 \pm 0.1595$}
& $0.3859 \pm 0.0070$
& $0.2520 \pm 0.0085$
& \textcolor{red}{$0.1991 \pm 0.0294$} \\
SBB Markovian (ours)
& $12.6380 \pm 0.1817$
& \textcolor{red}{$0.3781 \pm 0.0074$}
& $0.2559 \pm 0.0160$
& $0.2219 \pm 0.0254$ \\
\bottomrule
\end{tabular}%
}
\end{table}

\begin{figure}[htp!]
    \centering
    \subfloat{
    \includegraphics[width=1\linewidth]{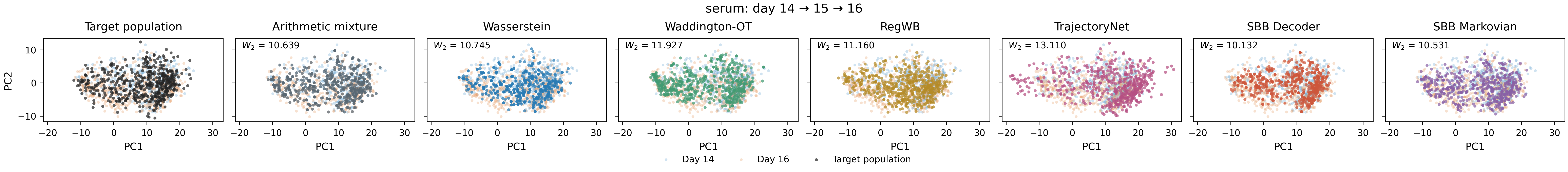}} \\
    \subfloat{
        \includegraphics[width=1\linewidth]{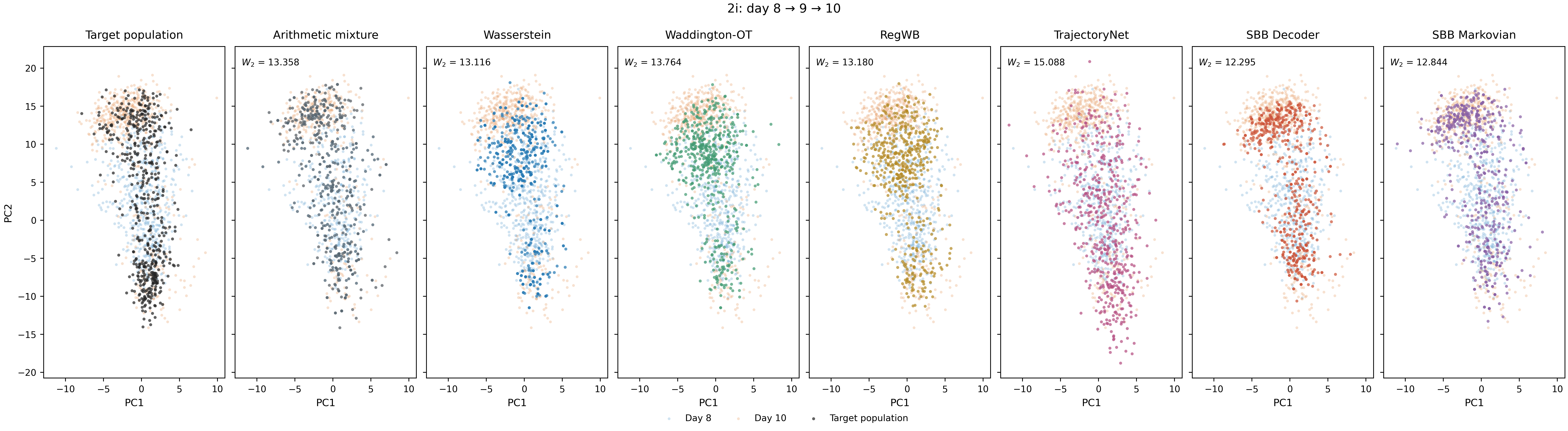}}
    \caption{Some qualitative figures for reconstruction of intermediate cell populations using 2-day intervals.}
    \label{fig:2day_qualitative_reconstruction}
\end{figure}

\clearpage
\subsubsection{3-day interval}
\begin{table}[htbp]
\centering
\small
\caption{Intermediate-state 3-day intervals reconstruction result.}
\label{tab:serum_6_2i_12_5_comparison}
\resizebox{\linewidth}{!}{%
\begin{tabular}{lcccc}
\toprule
Interpolation Method & $W_2\downarrow$ & MMD$\downarrow$ & Cell-type TV$\downarrow$ & Signature MAE$\downarrow$ \\
\midrule
\multicolumn{5}{l}{\textbf{serum condition}} \\
\midrule
Arithmetic mixture
& $12.4681 \pm 0.3015$
& $0.4484 \pm 0.0148$
& $0.1458 \pm 0.0283$
& $0.0377 \pm 0.0124$ \\
Wasserstein
& $11.8049 \pm 0.2348$
& $0.4653 \pm 0.0114$
& $0.1992 \pm 0.0676$
& $0.0354 \pm 0.0019$ \\
RegWB
& $11.9078 \pm 0.2552$
& $0.4611 \pm 0.0112$
& $0.1634 \pm 0.0511$
& \textcolor{red}{$0.0349 \pm 0.0026$} \\
TrajectoryNet
& $13.9434 \pm 0.2421$
& $0.5383 \pm 0.0125$
& $0.3945 \pm 0.0234$
& $0.0584 \pm 0.0050$ \\
Waddington-OT
& $12.1768 \pm 0.1722$
& $0.4949 \pm 0.0046$
& $0.1133 \pm 0.0263$
& $0.0422 \pm 0.0029$ \\
SBB Decoder (ours)
& \textcolor{red}{$11.3402 \pm 0.2609$}
& \textcolor{red}{$0.4213 \pm 0.0106$}
& \textcolor{red}{$0.0553 \pm 0.0316$}
& $0.0635 \pm 0.0160$ \\
SBB Markovian (ours)
& $11.8330 \pm 0.3609$
& $0.4293 \pm 0.0138$
& $0.0625 \pm 0.0220$
& $0.0384 \pm 0.0107$ \\
\midrule
\multicolumn{5}{l}{\textbf{2i condition}} \\
\midrule
Arithmetic mixture
& $11.1038 \pm 0.2972$
& $0.1605 \pm 0.0068$
& $0.1406 \pm 0.0301$
& $0.1404 \pm 0.0241$ \\
Wasserstein
& $11.3868 \pm 0.4181$
& $0.1785 \pm 0.0104$
& $0.2865 \pm 0.0479$
& $0.1596 \pm 0.0359$ \\
RegWB
& $11.5925 \pm 0.3414$
& $0.1741 \pm 0.0081$
& $0.2799 \pm 0.0352$
& $0.1556 \pm 0.0284$ \\
TrajectoryNet
& $15.0934 \pm 0.3651$
& $0.3007 \pm 0.0156$
& $0.3210 \pm 0.0177$
& $0.2251 \pm 0.0095$ \\
Waddington-OT
& $13.0232 \pm 0.4104$
& $0.2425 \pm 0.0104$
& $0.2812 \pm 0.0374$
& $0.2316 \pm 0.0437$ \\
SBB Decoder (ours)
& \textcolor{red}{$10.0232 \pm 0.4708$}
& \textcolor{red}{$0.1590 \pm 0.0106$}
& \textcolor{red}{$0.1243 \pm 0.0414$}
& $0.1417 \pm 0.0394$ \\
SBB Markovian (ours)
& $11.1622 \pm 0.4522$
& $0.1627 \pm 0.0108$
& $0.1465 \pm 0.0404$
& \textcolor{red}{$0.1228 \pm 0.0363$} \\
\bottomrule
\end{tabular}%
}
\end{table}

\begin{figure}[htp!]
    \centering
    \subfloat{
    \includegraphics[width=1\linewidth]{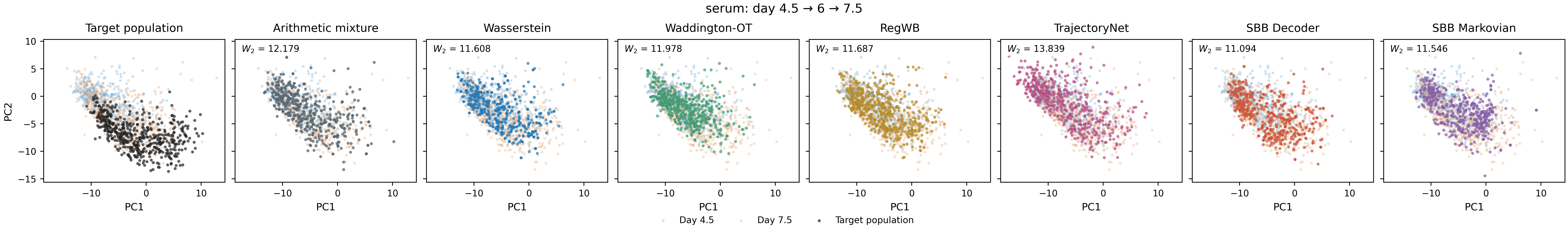}} \\
    \subfloat{
        \includegraphics[width=1\linewidth]{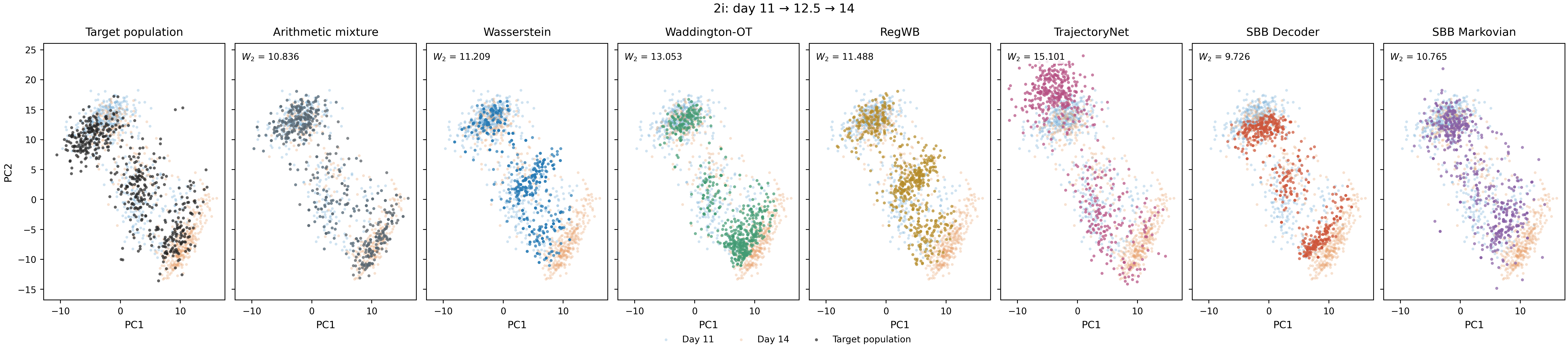}}
    \caption{Some qualitative figures for reconstruction of intermediate cell populations using 3-day intervals.}
    \label{fig:3day_qualitative_reconstruction}
\end{figure}

\clearpage

\section{RUNNING TIME}
\begin{longtable}{p{0.28\linewidth}lrrrr}
\caption{Selected training budgets and recorded times for $L$ outer iterations, $G$ directed bridges per iteration}\\
\toprule Case & Method & $L$ & $G$ & Updates & Time (s) \\
\midrule\endhead
Anisotropic Gaussian & Decoder & 50 & 3 & 298,500 & 907.6 \\
Anisotropic Gaussian & Encoder & 50 & 3 & 298,500 & 902.5 \\
Anisotropic Gaussian & Symmetric & 50 & 6 & 597,000 & 1799.7 \\
MNIST: 10 images & Decoder & 95 & 32 & 1,216,000 & 128.2 $\pm$ 6.1 \\
Ancestor, serum & Decoder & 75 & 4 & 217,800 & 176.4 $\pm$ 1.7 \\
1 day, serum & Decoder & 60 & 2 & 55,680 & 194.4 $\pm$ 2.7 \\
1 day, serum & Markovian & 60 & 2 & 55,680 & 192.9 $\pm$ 3.5 \\
1 day, 2i & Decoder & 60 & 2 & 55,680 & 215.7 $\pm$ 2.4 \\
1 day, 2i & Markovian & 60 & 2 & 55,680 & 213.4 $\pm$ 3.8 \\
2 days, 2i & Decoder & 40 & 2 & 37,120 & 162.5 $\pm$ 2.3 \\
2 days, 2i & Markovian & 40 & 2 & 37,120 & 165.3 $\pm$ 0.4 \\
2 days, serum & Decoder & 40 & 2 & 37,120 & 115.7 $\pm$ 23.1 \\
2 days, serum & Markovian & 40 & 2 & 37,120 & 115.4 $\pm$ 14.1 \\
3 days, 2i & Decoder & 40 & 2 & 37,120 & 217.0 $\pm$ 0.1 \\
3 days, 2i & Markovian & 40 & 2 & 37,120 & 200.2 $\pm$ 2.6 \\
3 days, serum & Decoder & 60 & 2 & 55,680 & 284.0 $\pm$ 2.9 \\
3 days, serum & Markovian & 60 & 2 & 55,680 & 285.9 $\pm$ 0.3 \\
\bottomrule
\end{longtable}

\end{document}